\documentclass{article}

\PassOptionsToPackage{numbers, compress}{natbib}

\usepackage[preprint]{neurips_2026}

\usepackage{hyperref}
\usepackage{url}
\usepackage{enumitem}
\usepackage{tcolorbox}

\usepackage{amsmath,amssymb,amsthm}

\usepackage{amsmath,amsfonts,bm}

\def\secref#1{section~\ref{#1}}
\def\Secref#1{Section~\ref{#1}}

\def\eqref#1{equation~\ref{#1}}
\def\Eqref#1{Equation~\ref{#1}}

\def\algref#1{algorithm~\ref{#1}}

\def\1{\bm{1}}

\DeclareMathAlphabet{\mathsfit}{\encodingdefault}{\sfdefault}{m}{sl}
\SetMathAlphabet{\mathsfit}{bold}{\encodingdefault}{\sfdefault}{bx}{n}

\newcommand{\E}{\mathbb{E}}

\newcommand{\R}{\mathbb{R}}

\DeclareMathOperator*{\argmin}{arg\,min}

\providecommand{\R}{\mathbb{R}} %

\providecommand{\xx}{\mathbf{x}}

\providecommand{\cN}{\mathcal{N}}

\newenvironment{talign*}
{\csname align*\endcsname}
{\endalign}

\usepackage[utf8]{inputenc}         %
\usepackage[T1]{fontenc}            %
\usepackage{url}                    %
\usepackage{booktabs}               %
\usepackage{tabularx}               %
\usepackage{amsfonts}               %
\usepackage{nicefrac}               %
\usepackage{microtype}              %
\usepackage{xcolor}                 %
\usepackage{algorithm}
\usepackage{algpseudocode}
\usepackage{graphicx}
\usepackage{subcaption}
\usepackage[flushleft]{threeparttable}
\usepackage{float}
\usepackage{multirow}
\usepackage{makecell}
\usepackage{xspace}
\usepackage{enumitem}
\usepackage[font=small]{caption}
\usepackage{autobreak}
\usepackage{sidecap}
\usepackage{wrapfig}
\usepackage{bbding}
\usepackage[toc, page, header]{appendix}
\usepackage{tikz}
\usetikzlibrary{calc,trees,positioning,arrows,chains,shapes.geometric,%
    decorations.pathreplacing,decorations.pathmorphing,shapes,%
    matrix,shapes.symbols}
\usepackage{xcolor}
\usepackage{pifont}
\usepackage{mdframed}
\usepackage{colortbl}

\providecommand{\halfcheck}{\mbox{\ding{52}\rotatebox[origin=c]{-9.2}{\kern-0.7em\ding{55}}}}

\usepackage{tcolorbox}
\tcbuselibrary{skins, breakable, theorems}
\usepackage{empheq}
\usepackage{arydshln}
\usepackage{bm}
\usepackage[capitalize]{cleveref}
\usepackage{listings}

\hypersetup{
    colorlinks=true,
    linkcolor=blue,
    citecolor=blue,
    urlcolor=blue
}

\definecolor{coral}{RGB}{255,127,80}
\definecolor{darkgreen}{RGB}{0,100,0}
\definecolor{darkyellow}{RGB}{204,153,0}
\definecolor{salmon}{RGB}{250,128,114}
\definecolor{darkred}{RGB}{150,0,0}
\newcommand{\darkredtext}[1]{{\color{darkred}#1}}

\definecolor{eqbg}{gray}{0.95}
\definecolor{indomaincolor}{rgb}{0.9, 0.95, 1.0}
\definecolor{oodcolor}{rgb}{1.0, 0.9, 0.9}

\newcommand{\transparentgray}[1]{%
    \tikz[baseline=(X.base)] \node[fill=gray, fill opacity=0.1, text opacity=1, inner sep=2pt, outer sep=0pt] (X) {#1};%
}

\newcommand{\transparentyellow}[1]{%
    \tikz[baseline=(X.base)] \node[fill=yellow, fill opacity=0.1, text opacity=1, inner sep=2pt, outer sep=0pt] (X) {#1};%
}

\renewcommand{\secref}[1]{\hyperref[#1]{\darkredtext{Sec.~\ref*{#1}}}}
\renewcommand{\Secref}[1]{\hyperref[#1]{\darkredtext{Sec.~\ref*{#1}}}}
\providecommand{\thmref}[1]{\hyperref[#1]{\darkredtext{Thm.~\ref*{#1}}}}
\providecommand{\defref}[1]{\hyperref[#1]{\transparentgray{Definition~\ref*{#1}}}}
\providecommand{\propref}[1]{\hyperref[#1]{\darkredtext{Prop.~\ref*{#1}}}}
\providecommand{\assumpref}[1]{\hyperref[#1]{\darkredtext{Assump.~\ref*{#1}}}}
\providecommand{\remarkref}[1]{\hyperref[#1]{\transparentyellow{Remark~\ref*{#1}}}}
\providecommand{\conjref}[1]{\hyperref[#1]{\darkredtext{Conj.~\ref*{#1}}}}
\providecommand{\lemref}[1]{\hyperref[#1]{\darkredtext{Lem.~\ref*{#1}}}}
\providecommand{\corref}[1]{\hyperref[#1]{\darkredtext{Cor.~\ref*{#1}}}}
\providecommand{\noteref}[1]{\hyperref[#1]{\darkredtext{Nota.~\ref*{#1}}}}
\providecommand{\claimref}[1]{\hyperref[#1]{\darkredtext{Clm.~\ref*{#1}}}}
\providecommand{\algref}[1]{\hyperref[#1]{\darkredtext{Alg.~\ref*{#1}}}}
\providecommand{\algmref}[1]{\hyperref[#1]{\darkredtext{Alg.~\ref*{#1}}}}
\providecommand{\figref}[1]{\hyperref[#1]{\darkredtext{Fig.~\ref*{#1}}}}
\providecommand{\tabref}[1]{\hyperref[#1]{\darkredtext{Tab.~\ref*{#1}}}}
\providecommand{\appref}[1]{\hyperref[#1]{\darkredtext{App.~\ref*{#1}}}}

\newtheoremstyle{professional}
{10pt} %
{10pt} %
{\itshape} %
{} %
{\bfseries} %
{.} %
{.5em} %
{} %

\theoremstyle{professional}
\newtheorem{myth}{Theorem}[section]
\newtheorem{myprop}[myth]{Proposition}
\newtheorem{mylem}[myth]{Lemma}
\newtheorem{mycor}[myth]{Corollary}
\newtheorem{mydef}[myth]{Definition}
\newtheorem{myassump}[myth]{Assumption}
\newtheorem{myrem}[myth]{Remark}
\newtheorem{myhyp}[myth]{Hypothesis}
\newtheorem{myconj}[myth]{Conjecture}
\newtheorem{mynota}[myth]{Notation}
\newtheorem{myclaim}[myth]{Claim}
\newtheorem{myprob}[myth]{Problem}
\newtheorem{myobs}[myth]{Observation}

\tcbset{
    thmbox/.style={
            enhanced,
            breakable,
            sharp corners,
            boxrule=0pt,
            leftrule=3pt,
            top=0pt,
            bottom=0pt,
            left=5pt,
            right=5pt,
            before skip=10pt,
            after skip=10pt,
        }
}

\newenvironment{theorem}{\begin{tcolorbox}[thmbox, colback=red!5!white, colframe=red!75!black]\begin{myth}}{\end{myth}\end{tcolorbox}}

\newenvironment{lemma}{\begin{tcolorbox}[thmbox, colback=cyan!5!white, colframe=cyan!75!black]\begin{mylem}}{\end{mylem}\end{tcolorbox}}

\newenvironment{assumption}{\begin{tcolorbox}[thmbox, colback=green!5!white, colframe=green!75!black]\begin{myassump}}{\end{myassump}\end{tcolorbox}}

\newtheorem{innernote}{Note}

\newtheorem{innerexercise}{Exercise}

\definecolor{codegreen}{rgb}{0,0.6,0}
\definecolor{codegray}{rgb}{0.5,0.5,0.5}
\definecolor{codepurple}{rgb}{0.58,0,0.82}
\definecolor{backcolour}{rgb}{0.95,0.95,0.92}

\lstdefinestyle{mystyle}{
    backgroundcolor=\color{backcolour},
    commentstyle=\color{codegreen},
    keywordstyle=\color{magenta},
    numberstyle=\tiny\color{codegray},
    stringstyle=\color{codepurple},
    basicstyle=\ttfamily\footnotesize,
    breakatwhitespace=false,
    breaklines=true,
    captionpos=b,
    keepspaces=true,
    numbers=left,
    numbersep=5pt,
    showspaces=false,
    showstringspaces=false,
    showtabs=false,
    tabsize=2
}
\newcommand{\methodname}{\textsc{FlowCPO}}
\newcommand{\method}{\methodname\xspace}

\title{FlowCPO: A Unified Divergence View of Preference Alignment for Flow Models}

\author{
Yansen Han$^{1,2,\ast}$ \quad
Shengyi Liao$^{3,\ast}$ \quad
Peng Sun$^{1,2}$ \quad
Deyuan Liu$^{1}$ \quad
Yuanxing Zhang$^{3}$ 
\\[0.5em]
\textbf{Pengfei Wan$^{3}$} \quad 
\textbf{Tao Lin$^{1,\dagger}$}
\\[0.5em]
$^\ast$Equal contribution \quad
$^\dagger$Corresponding author
\\[0.25em]
$^1$Westlake University \quad
$^2$Zhejiang University \quad
$^3$Kling Team, Kuaishou Technology
}

\date{}
\hypersetup{
  pdftitle={FlowCPO: A Unified Divergence View of Preference Alignment for Flow Models},
  pdfauthor={Yansen Han, Shengyi Liao, Peng Sun, Deyuan Liu, Yuanxing Zhang, Pengfei Wan, Tao Lin}
}

\begin{document}

\maketitle

\begin{abstract}
    Preference alignment for flow and diffusion models now spans online reinforcement learning and offline preference optimization, but the relation between these methods remains unclear.
    In particular, existing forward-process alignment methods require fresh samples from the current model, while offline methods based on fixed preference pairs rely primarily on positive-only fine-tuning or DPO-style likelihood-ratio surrogates.
    We organize these approaches through a divergence-based framework and introduce \textbf{\method}, an offline forward-KL objective that uses both preferred and dispreferred samples without online rollouts.
    For linear interpolation, we show under explicit regularity conditions that the forward-KL objective is bounded by a contrastive flow matching loss, yielding a tractable surrogate on fixed data.
    We further show that this loss is nonnegative, whereas the signed regression loss of simplified FlowDPO can be unbounded below.
    In the in-domain setting, \method achieves higher mean GenEval and OCR scores than the evaluated baselines, reaching 0.84 and 0.87 versus 0.81 and 0.74 for FlowDPO at CFG 3.0. In the out-of-domain setting, the results are mixed, with the best GenEval result but lower reward scores than RFT on several metrics.
\end{abstract}

\section{Introduction}
Recent advancements in flow matching and diffusion-based generative modeling~\citep{ho2020denoising, lipman2022flow, albergo2022building, sohl2015deep, luo2022understanding, holderrieth2025introduction} have facilitated the development of practical preference alignment algorithms for continuous generative spaces~\citep{liu2025flow,zheng2025diffusionnft,liu2025improving,xue2025advantage,bergmeister2026reinforce}.
These efforts have yielded diverse alignment paradigms, ranging from online reinforcement learning frameworks like FlowGRPO~\citep{liu2025flow}, DiffusionNFT~\citep{zheng2025diffusionnft}, AWM~\citep{xue2025advantage} and RAM~\citep{bergmeister2026reinforce} to offline preference optimization methods such as FlowDPO~\citep{liu2025improving}.
In contrast to alignment in discrete autoregressive models, these continuous-time methods operate directly on sampling trajectories and velocity fields rather than token-level probabilities.
While empirical results are promising, the relations among these methods remain difficult to compare.

\begin{wraptable}[8]{r}{0.5\textwidth}
    \centering
    \vspace{-1em}
    \caption{\small
        \textbf{Objective-level view of flow-based preference alignment.} The two axes separate the sampling regime from the direction of the idealized KL objective.
    }
    \vspace{-0.5em}
    \resizebox{0.5\textwidth}{!}{%
        \begin{tabular}{lccc}
            \toprule
            \textbf{Divergence} & \textbf{Online} & \textbf{Offline}        \\
            \midrule
            Reverse-KL          & FlowGRPO        & FlowDPO                 \\
            Forward-KL          & DiffusionNFT    & \textbf{\method (ours)} \\
            \bottomrule
        \end{tabular}
    }
    \label{tab:method_taxonomy}
\end{wraptable}

Existing flow-based alignment methods can be organized along two axes (\tabref{tab:method_taxonomy}): the \emph{sampling regime} (online versus offline) and \emph{the direction of the idealized KL objective}.
Under this view, FlowGRPO~\citep{liu2025flow} and FlowDPO~\citep{liu2025improving} represent online and offline instances of the reverse-KL branch, whereas DiffusionNFT~\citep{zheng2025diffusionnft} provides a forward-process objective but refreshes its training samples online.
The unresolved problem is therefore the offline forward-KL objective using both preferred and dispreferred target samples: a forward-KL expectation can use fixed target samples, but its log-likelihood is intractable for flow matching.
An offline method needs a justified bridge from this distribution-level objective to velocity-field regression on fixed preferred and dispreferred data.

We address this gap by establishing a unified divergence-based framework for flow-based preference alignment and deriving \textbf{Flow Contrastive Preference Optimization (\method)}.
Starting from forward-KL terms over preferred and dispreferred target distributions, we use a flow matching bound for linear interpolation to obtain a contrastive regression surrogate.
The resulting loss can be estimated from a fixed preference dataset and therefore requires no sampling from the model during training.

We also connect \method to simplified FlowDPO (\appref{app:flowdpo_ema_connection}).
Simplified FlowDPO subtracts the regression error on dispreferred samples and can be unbounded below, whereas \method adds two nonnegative regression errors.
With equal branch weights ($\lambda=1$), our loss decomposition identifies corrections determined by the difference between the current model and its exponential moving average (EMA).

Empirically, we evaluate our approach under a strictly offline protocol. In the in-domain setting, where preference pairs are generated by the reference model, \method attains higher mean GenEval and OCR scores than RFT and FlowDPO while remaining strong on general-preference metrics. In the out-of-domain setting, the result is more mixed: \method gives the best GenEval score but RFT is stronger on several reward metrics. The overall offline optimization pipeline is illustrated in \figref{fig:flowcpo}. \textbf{Our contributions are threefold:}
\begin{itemize}[leftmargin=12pt, nosep]
    \item We provide a unified divergence-based framework that separates the online/offline sampling regime from the direction of the idealized KL objective, clarifying the assumptions behind connections among existing flow-based alignment methods.
    \item We derive \method, a nonnegative contrastive surrogate for offline forward-KL alignment. Under stated regularity conditions for linear interpolation, its regression objective bounds the branch-wise forward-KL objective and is estimable from fixed preference data. We also derive its loss decomposition relative to simplified FlowDPO.
    \item We evaluate \method with two types of offline data, showing higher GenEval and OCR scores than FlowDPO in the in-domain setting and mixed results in the out-of-domain setting.
\end{itemize}

\begin{figure}[t]
    \centering
    \includegraphics[width=1\textwidth]{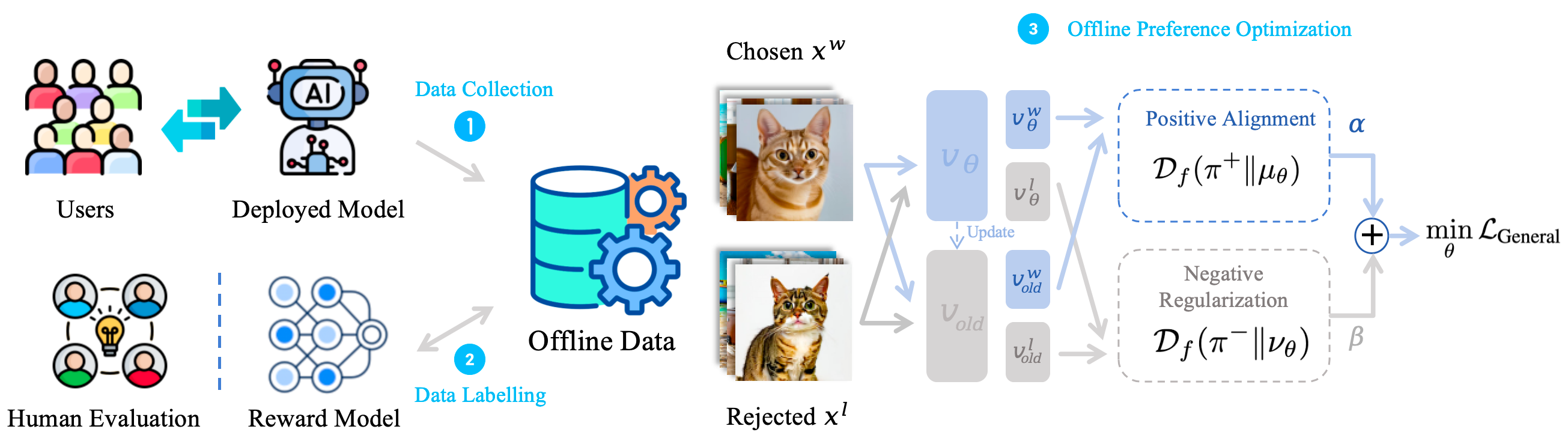}
    \caption{\small
        \textbf{\method uses a fixed preference dataset to train two coupled flow branches.} A frozen generator supplies preferred/dispreferred pairs $\mathcal{D}=\{(c,\mathbf{x}_0^w,\mathbf{x}_0^l)\}$; the positive branch matches preferred samples, while a mirrored branch matches dispreferred samples. Training uses only noised versions of these fixed endpoints and requires no online rollout.
    }
    \label{fig:flowcpo}
\end{figure}

\section{Related Work}
\label{sec:related_work}

\paragraph{Online alignment with policy-generated samples.}
Online methods improve a generator using samples drawn from its current or recent policy.
ReFL~\citep{xu2023imagereward} uses reward feedback, DDPO~\citep{black2024ddpo} and DPOK~\citep{fan2023dpok} estimate trajectory policy gradients, and DRaFT~\citep{clark2024draft} and AlignProp~\citep{prabhudesai2024alignprop} differentiate through the sampler.
FlowGRPO~\citep{liu2025flow} optimizes stochasticized flow trajectories, with subsequent refinements to temporal allocation and trajectory reuse~\citep{he2025tempflow,li2025mixgrpo,li2025branchgrpo,ding2025treegrpo}. SPO~\citep{liang2025spo} generates candidates at each denoising step and selects preference pairs with a step-aware model.
DiffusionNFT~\citep{zheng2025diffusionnft}, AWM~\citep{xue2025advantage}, and RAM~\citep{bergmeister2026reinforce} use regression-style objectives on noised on-policy samples.

\paragraph{Offline alignment with fixed preference data.}
Fixed-data methods differ in how they use preferred and dispreferred samples.
D3PO~\citep{yang2024d3po} extends pairwise learning to diffusion trajectories; DiffusionDPO~\citep{wallace2024diffusion} uses reference-relative denoising errors, with temporal weighting in Dense Reward~\citep{yang2024dense_reward} and a flow-model extension in FlowDPO~\citep{liu2025improving}.
DMPO~\citep{li2025divergence} develops a reverse-KL objective, whereas MaPO~\citep{hong2026mapo} learns preference margins without a reference model.
RFT~\citep{xiong2025minimalist,chen2025bridging} fits only preferred samples.
\method instead uses both sides of fixed preference pairs in a coupled, two-branch forward-KL objective, with a flow-matching upper-bound surrogate under stated conditions.

\paragraph{Divergence-based alignment in language models.}
LLM alignment also treats divergence and utility as design choices.
$f$-DPG~\citep{go2023aligning} generalizes distribution matching, while $f$-DPO~\citep{wang2023beyond} varies the divergence constraint for preference learning.
GPO~\citep{tang2024generalized} relates DPO~\citep{rafailov2023direct}, IPO~\citep{azar2024general}, and SLiC~\citep{zhao2023slic} through convex pairwise losses. $f$-PO~\citep{han2024f} unifies divergence-minimization formulations including DPO and EXO~\citep{ji2024towards}.

\section{Preliminaries}
\label{sec:preliminaries}
We collect notation and background needed for our divergence-based view of flow alignment.
Specifically, \secref{subsec:diffusion_fm} recalls diffusion models and flow matching under linear interpolation.
\secref{subsec:rlhf_dpo} summarizes RLHF and DPO in the discrete policy formulation, which we then lift to continuous-time models in \secref{subsec:pref_opt_continuous}.
Finally, \secref{subsec:kl_divergences} reviews forward and reverse KL divergences.

\subsection{Diffusion Models and Flow Matching} \label{subsec:diffusion_fm}
We first briefly review the two continuous-time generative paradigms used throughout the paper.
Both start from a simple latent variable, typically Gaussian noise $\epsilon \sim \mathcal{N}(0, I)$, and define intermediate states $\xx_t$ for $t \in [0,1]$ that connect data and noise.

In diffusion models~\citep{ho2020denoising, sohl2015deep}, one specifies a forward noising process $q(\xx_t \mid \xx_0)$ and trains a network $\epsilon_\theta(\xx_t, c, t)$, or equivalently a score model, to predict the noise added to the clean sample $\xx_0$.
Generation then approximately reverses this process from noise to data through a reverse-time SDE or the associated probability flow ODE.
Flow matching~\citep{lipman2022flow, albergo2022building} instead directly learns the vector field of a prescribed probability path.
In this paper we use the linear interpolation
\begin{equation}
    \label{eq:linear_bridge_prelim}
    \xx_t = (1-t) \cdot \xx_0 + t \cdot \epsilon, \qquad \epsilon \sim \mathcal{N}(0, I) \,,
\end{equation}
so that $\xx_0$ is the clean sample at $t=0$ and $\xx_1=\epsilon$ is the noise sample at $t=1$.
The corresponding conditional target velocity is
\begin{equation}
    \label{eq:fm_target_velocity_prelim}
    u_t(\xx_t \mid \xx_0) = \frac{\mathrm{d}\xx_t}{\mathrm{d}t} = \epsilon - \xx_0 \,.
\end{equation}
Flow matching trains a velocity field $v_\theta(\xx_t, c, t)$ to regress to this target, typically by minimizing
\begin{equation}
    \label{eq:fm_loss_prelim}
    \mathcal{L}_{\text{FM}}(\theta)
    = \mathbb{E}_{c, \xx_0, t, \epsilon}
    \left[
        \|v_\theta(\xx_t, c, t) - u_t(\xx_t \mid \xx_0)\|_2^2
        \right] \,.
\end{equation}
Although diffusion and flow matching use different parameterizations, they are closely connected: under suitable probability paths, the denoising model, score function, and velocity field describe the same underlying transport from noise to data~\citep{lai2025principles,holderrieth2025introduction,sun2025unified,lipman2024flow}.

\subsection{RLHF and Direct Preference Optimization (DPO)} \label{subsec:rlhf_dpo}
Reinforcement Learning from Human Feedback (RLHF) \cite{ouyang2022training} typically aligns a generative policy $\pi_\theta(\xx_0|c)$ by maximizing expected reward while regularizing deviation from a reference policy, where $\xx_0$ is the generated content and $c$ is the condition.
We use $r(\xx_0, c)$ to denote the reward function, $\xx_0^w$ and $\xx_0^l$ to denote the preferred and dispreferred contents, respectively.
DPO \cite{rafailov2023direct, sun2025solopo, tang2024generalized} bypasses the explicit reward modeling step by directly optimizing the policy using a closed-form solution to the KL-constrained reward maximization problem.
Given a dataset $\mathcal{D} = \{(c, \xx_0^w, \xx_0^l)\}$ of preferred ($\xx_0^w$) and dispreferred ($\xx_0^l$) contents, DPO minimizes the following negative log-likelihood loss:
\looseness=-1 
\begin{equation}
    \label{eq:dpo}
    \mathcal{L}_{\text{DPO}}(\theta) = -\mathbb{E}_{(c, \xx_0^w, \xx_0^l) \sim \mathcal{D}} \left[ \log \sigma \left( \beta \log \frac{\pi_\theta(\xx^w_0|c)}{\pi_{\text{ref}}(\xx^w_0|c)} - \beta \log \frac{\pi_\theta(\xx_0^l|c)}{\pi_{\text{ref}}(\xx_0^l|c)} \right) \right] \,,
\end{equation}
where $\pi_{\text{ref}}$ is the frozen reference policy and $\beta$ is a temperature parameter controlling the strength of the KL constraint.

\subsection{Preference Optimization in Continuous-Time Models} \label{subsec:pref_opt_continuous}
Applying~\eqref{eq:dpo} directly to the diffusion and flow-matching models introduced above is non-trivial.
Calculating the exact log-likelihood $\log \pi_\theta(\xx_0|c)$ requires solving the probability flow ODE, which is computationally prohibitive during training~\citep{zheng2025diffusionnft}.
DiffusionDPO~\cite{wallace2024diffusion} addresses this intractability by optimizing the Evidence Lower Bound (ELBO) as a proxy for the likelihood.
The loss function is reformulated using the denoising error at timestep $t$:
\begin{equation}
    \begin{aligned}
        \mathcal{L}_{\text{Diff-DPO}}(\theta) & = -\mathbb{E}_{c, \xx^w_0, \xx^l_0, t, \epsilon} \Big[ \log \sigma \Big( \beta \rho_t \Big( \\
                                                                                                     &\quad \underbrace{\left( \|\epsilon_\theta(\xx^l_t) - \epsilon\|^2 - \|\epsilon_{\text{ref}}(\xx^l_t) - \epsilon \|^2 \right)}_{\text{Advantage of Loser}}
                                                                                                     - \underbrace{\left( \|\epsilon_\theta(\xx^w_t) - \epsilon\|^2 - \|\epsilon_{\text{ref}}(\xx^w_t) - \epsilon \|^2 \right)}_{\text{Advantage of Winner}} \Big) \Big) \Big] \,,
    \end{aligned}
    \label{eq:diffusion_dpo}
\end{equation}
where $\xx_t$ is the linear interpolation of $\xx_0$ and noise $\epsilon$, and $\rho_t$ is a weighting function.
FlowDPO \cite{liu2025improving} extends this paradigm to flow matching.
By leveraging the connection between the score function and the vector field, FlowDPO replaces the noise prediction error with the flow matching loss, enabling efficient gradient-based preference optimization directly on the velocity field without ODE integration.
\looseness=-1

\subsection{Forward and Reverse KL Divergences} \label{subsec:kl_divergences}
Let $\pi(\xx_0 \mid c)$ denote a target distribution and $\pi_\theta(\xx_0 \mid c)$ denote the trainable model distribution.
Later in the paper, $\pi$ will be instantiated by the preferred and dispreferred target distributions $\pi^+$ and $\pi^-$, while the model side of the objective will be represented by branch-specific distributions $q_\theta^+$ and $q_\theta^-$ rather than by a single $\pi_\theta$.
The two KL directions are
\begin{small}
    \begin{equation}
        \label{eq:kl_directions}
        \begin{aligned}
            \text{Reverse-KL: }\mathcal{D}_{KL}(\pi_\theta \| \pi)
            = \mathbb{E}_{\xx_0 \sim \pi_\theta(\cdot \mid c)}
            \left[\log \frac{\pi_\theta(\xx_0 \mid c)}{\pi(\xx_0 \mid c)}\right] \\
            \text{Forward-KL: }\mathcal{D}_{KL}(\pi \| \pi_\theta)
            =\  \mathbb{E}_{\xx_0 \sim \pi(\cdot \mid c)}
            \left[\log \frac{\pi(\xx_0 \mid c)}{\pi_\theta(\xx_0 \mid c)}\right]
        \end{aligned}
    \end{equation}
\end{small}%
The key difference is the sampling distribution inside the expectation.
In the reverse-KL term $\mathcal{D}_{KL}(\pi_\theta \| \pi)$, errors are weighted by the current model $\pi_\theta$, so target regions that are rarely visited by $\pi_\theta$ contribute little.
In contrast, the forward-KL term $\mathcal{D}_{KL}(\pi \| \pi_\theta)$ is weighted by the target distribution $\pi$. If $\pi_\theta(\xx_0 \mid c)$ is too small in regions where $\pi(\xx_0 \mid c)$ is large, the penalty becomes severe, and finiteness requires $\operatorname{supp}(\pi) \subseteq \operatorname{supp}(\pi_\theta)$.

\section{Methodology}
\label{sec:methodology}
This section develops \method from a unified divergence-based formulation of preference alignment.
We first show that RLHF admits a contrastive reverse-KL interpretation, then derive its offline forward-KL counterpart and reduce the resulting objective to a practical flow matching loss.

\subsection{Probabilistic Reward Formulation}
\label{subsec:reward_formulation}

Let $c$ denote the conditioning context and $\xx_0$ the generated content.
Following control-as-inference~\cite{levine2018reinforcement}, we introduce a binary \emph{optimality variable} $o \in \{0, 1\}$, where $o=1$ means $\xx_0$ is preferred and $o=0$ means $\xx_0$ is dispreferred.
This gives a normalized probabilistic view of any unbounded reward $r(\xx_0, c)$:
\begin{small}
    \begin{equation}
        \begin{aligned}
            p(o = 1 \mid \xx_0, c) & = \frac{\exp(\omega \cdot r(\xx_0, c))}{\exp(\omega \cdot r(\xx_0, c)) + \exp(-\omega \cdot r(\xx_0, c))} = \sigma(2\omega \cdot r(\xx_0, c))   \\
            p(o = 0 \mid \xx_0, c) & = \frac{\exp(-\omega \cdot r(\xx_0, c))}{\exp(\omega \cdot r(\xx_0, c)) + \exp(-\omega \cdot r(\xx_0, c))} = \sigma(-2\omega \cdot r(\xx_0, c))
        \end{aligned}
        \label{eq:optimality_prob}
    \end{equation}
\end{small}%
Here $\omega > 0$ is an inverse temperature and $\sigma(x) = \frac{1}{1 + \exp(-x)}$ is the sigmoid function.
Conditioning the reference policy $\pi_{\text{ref}}(\xx_0 \mid c)$ on the two optimality events, $o=1$ and $o=0$, yields preferred and dispreferred target distributions:
\begin{equation}
    \label{eq:posteriors}
    \begin{aligned}
        \pi^+(\xx_0 \mid c) & := p(\xx_0 \mid o=1, c) = \frac{\pi_{\text{ref}}(\xx_0 \mid c)\, p(o=1 \mid \xx_0, c)}{p_{\pi_{\text{ref}}}(o=1 \mid c)} \\
        \pi^-(\xx_0 \mid c) & := p(\xx_0 \mid o=0, c) = \frac{\pi_{\text{ref}}(\xx_0 \mid c)\, p(o=0 \mid \xx_0, c)}{p_{\pi_{\text{ref}}}(o=0 \mid c)}
    \end{aligned}
\end{equation}
We use $\pi^+$ and $\pi^-$ as the positive and negative target distributions throughout the paper.
In practice, offline preference construction may only produce empirical approximations to these posteriors. We defer this distinction to \appref{app:source_distributions}.

By inverting~\eqref{eq:posteriors} and substituting into~\eqref{eq:optimality_prob}, the reward $r(\xx_0, c)$ becomes
\begin{equation}
    \label{eq:reward}
    r(\xx_0, c)
    = \frac{1}{2\omega} \log\frac{p(o = 1 \mid \xx_0, c)}{p(o = 0 \mid \xx_0, c)}
    = \frac{1}{2\omega} \left[\log\frac{\pi^+(\xx_0 \mid c)}{\pi^-(\xx_0 \mid c)} + \log\frac{p_{\pi_{\text{ref}}}(o=1\mid c)}{p_{\pi_{\text{ref}}}(o=0\mid c)}\right] \,.
\end{equation}
Thus, up to a context-only constant, reward is a contrastive log-density ratio: high reward corresponds to regions where $\pi^+$ dominates $\pi^-$.

\subsection{Re-formalizing Reinforcement Learning from Human Feedback}

We begin with the unregularized expected-reward term $\E_{c, \xx_0 \sim \pi_\theta(\cdot \mid c)} [r(\xx_0, c)]$ that appears inside RLHF objectives.
Using~\eqref{eq:reward}, we can rewrite this term directly in divergence form in \thmref{thm:rlhf_kl} (see \appref{app:rlhf_kl} for the proof):
\begin{theorem}[Expected reward as contrastive reverse KL]
    \label{thm:rlhf_kl}
    Up to the positive scale factor $\frac{1}{2\omega}$ and an additive term depending only on $c$, maximizing expected reward is equivalent to minimizing
    \begin{equation}
        \label{eq:rlhf_kl}
        \E_c \left[ \mathcal{D}_{KL}(\pi_\theta \| \pi^+) - \mathcal{D}_{KL}(\pi_\theta \| \pi^-) \right] \,.
    \end{equation}
\end{theorem}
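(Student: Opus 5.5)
The plan is to substitute the contrastive log-density-ratio form of the reward, \eqref{eq:reward}, into the expected-reward term. Then the expectation of a log-ratio is split into a difference of two KL divergences by adding and subtracting $\log \pi_\theta$.

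Fix a context $c$ and write $\kappa(c) := \log \frac{p_{\pi_{\text{ref}}}(o=1\mid c)}{p_{\pi_{\text{ref}}}(o=0\mid c)}$, which depends only on $c$. By \eqref{eq:reward},
\begin{equation*}
\E_{\xx_0 \sim \pi_\theta(\cdot\mid c)}[r(\xx_0,c)] = \frac{1}{2\omega}\left( \E_{\xx_0 \sim \pi_\theta(\cdot\mid c)}\left[\log \frac{\pi^+(\xx_0\mid c)}{\pi^-(\xx_0\mid c)}\right] + \kappa(c) \right).
\end{equation*}
Inside the expectation I will write
\begin{equation*}
\log \frac{\pi^+}{\pi^-} = \log\frac{\pi_\theta}{\pi^-} - \log\frac{\pi_\theta}{\pi^+}.
\end{equation*}
Taking $\E_{\pi_\theta}$ and using the reverse-KL definition in \eqref{eq:kl_directions} then gives
\begin{equation*}
\E_{\pi_\theta}\left[\log\frac{\pi^+}{\pi^-}\right] = \mathcal{D}_{KL}(\pi_\theta\|\pi^-) - \mathcal{D}_{KL}(\pi_\theta\|\pi^+).
\end{equation*}
Averaging over $c$ yields
\begin{equation*}
\E_{c,\xx_0\sim\pi_\theta}[r] = -\frac{1}{2\omega}\,\E_c\left[\mathcal{D}_{KL}(\pi_\theta\|\pi^+) - \mathcal{D}_{KL}(\pi_\theta\|\pi^-)\right] + \frac{1}{2\omega}\E_c[\kappa(c)].
\end{equation*}
The last term does not depend on $\theta$, and $\frac{1}{2\omega}>0$. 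Hence maximizing the expected reward is equivalent to minimizing \eqref{eq:rlhf_kl}, which is exactly the claimed statement.

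The only delicate step is justifying the split of the expectation into two separate KL terms. Doing so requires each KL term to be finite, or at least the split must not produce an $\infty-\infty$ form. I would add a mild standing assumption: $\pi_\theta \ll \pi_{\text{ref}}$, and both KLs are finite. There is one supporting observation. Since $0<\sigma(\cdot)<1$, both $\pi^+$ and $\pi^-$ have the same support as $\pi_{\text{ref}}$, so both log-ratios are well defined $\pi_\theta$-a.s. Given this, the rest is pure algebra, and no earlier result is needed beyond \eqref{eq:posteriors} and \eqref{eq:reward}.
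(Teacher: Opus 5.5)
Your proposal is correct and matches the paper's proof. The paper writes $\E_{\pi_\theta}[\log \pi^\pm] = -\mathcal{D}_{KL}(\pi_\theta\|\pi^\pm) - \mathcal{H}(\pi_\theta)$ and subtracts to cancel the entropy, which is the same algebra as your pointwise add-and-subtract of $\log \pi_\theta$. Your integrability caveat (finite KL terms, $\pi_\theta \ll \pi_{\text{ref}}$) is a worthwhile addition that the paper leaves implicit, and your direct log-ratio split also avoids the separate finiteness of $\mathcal{H}(\pi_\theta)$ that the entropy-cancellation route tacitly assumes.
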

\Eqref{eq:rlhf_kl} shows that expected reward induces a contrastive reverse-KL term: the policy is attracted to $\pi^+$ and repelled from $\pi^-$ under its own sampling distribution.
Reference-policy KL penalties used in practical RLHF methods remain additional terms and are not absorbed by this identity.
Due to the high computational cost of online sampling, we focus on the offline regime in this work.
To derive an offline counterpart, we reverse the direction of the KL divergence in \eqref{eq:rlhf_kl} and assume fixed sampling distributions:
\begin{equation}
    \label{eq:op_kl}
    \min_\theta \mathcal{L}_{\text{Offline}}
    = \E_c \left[ \mathcal{D}_{KL}(\pi^+ \| \pi_\theta^+) + \lambda \cdot \mathcal{D}_{KL}(\pi^- \| \pi_\theta^-) \right] \,,
    \qquad \lambda \ge 0 \,.
\end{equation}
Here $\pi_\theta^+$ and $\pi_\theta^-$ are branch-specific model distributions.
For \method itself, we focus on the attractive regime $\lambda \ge 0$. In \secref{subsec:negative_regularization}, we additionally instantiate $\lambda < 0$ under the generalized framework to analyze repulsive baselines.

To avoid evaluating flow-model likelihoods directly, we parameterize the two branches through symmetric mixed velocity fields:
\begin{align*}
    \mu_\theta = (1-\beta)\, v_{\text{old}} + \beta\, v_\theta \,, \qquad
    \nu_\theta = (1+\beta)\, v_{\text{old}} - \beta\, v_\theta \,.
\end{align*}
Here $v_{\text{old}}$ is an EMA copy of the trainable field, and $\beta>0$ controls the symmetric displacement of the two branches from that reference.
The implicit distributions induced from the common Gaussian endpoint by $\mu_\theta$ and $\nu_\theta$ are denoted by $\pi_\theta^+$ and $\pi_\theta^-$. The idealized score-field interpretation of this interpolation is given in \appref{app:interpolation_vector_field}.

The remaining bridge is from cross-entropy to regression.
Each forward-KL term equals a target entropy plus an expected model negative log-likelihood; under the uniform regularity conditions in \appref{app:relation_loglike_fm}, the latter is bounded by conditional flow matching objective plus a constant independent of $\theta$.
Applying this argument separately to the two branches yields \thmref{thm:flowcpo_bound}.
\begin{theorem}[A flow-matching upper bound for offline forward KL]
    \label{thm:flowcpo_bound}
    Under the regularity conditions in \appref{app:relation_loglike_fm}, we have:
    \begin{equation}
        \label{eq:weighted_flowcpo}
        \begin{aligned}
            \mathcal{L}_{\text{Offline}}(\theta)
            \le C
             & + \E_{\substack{c,\, \xx_0^+ \sim \pi^+(\cdot \mid c) \\ t,\, \epsilon}}
            \Big[
                 \big\| \mu_\theta(\xx_t^+, t, c) - u_t(\xx_t^+ \mid \xx_0^+) \big\|_2^2
            \Big] \\
             & + \lambda \cdot
            \E_{\substack{c,\, \xx_0^- \sim \pi^-(\cdot \mid c) \\ t,\, \epsilon}}
            \Big[
                \big\| \nu_\theta(\xx_t^-, t, c) - u_t(\xx_t^- \mid \xx_0^-) \big\|_2^2
            \Big] \,,
        \end{aligned}
    \end{equation}
    where $C$ is independent of $\theta$.
\end{theorem}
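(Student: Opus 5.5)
The plan is to treat the two forward-KL terms separately and reduce each to a flow-matching regression via a log-likelihood bound for linear-interpolation flows. For either branch, write
\[
\mathcal{D}_{KL}(\pi^\pm \| \pi_\theta^\pm) = -H(\pi^\pm(\cdot\mid c)) - \E_{\xx_0\sim\pi^\pm(\cdot\mid c)}\big[\log \pi_\theta^\pm(\xx_0\mid c)\big].
\]
The entropy term does not depend on $\theta$, so it goes into $C$, provided it is finite, which I will assume as part of the regularity conditions. The whole task is therefore to upper bound the model cross-entropy $-\E_{\pi^\pm}[\log \pi_\theta^\pm]$ by the conditional flow matching loss of the velocity field generating $\pi_\theta^\pm$, plus a $\theta$-independent constant. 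Here that field is $\mu_\theta$ for the positive branch and $\nu_\theta$ for the negative branch.

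For the key step, fix a generic velocity field $v$ that induces a model density $p^v$ by transporting the common Gaussian at $t=1$ back to $t=0$. Let $p_t$ be the marginal of the linear interpolation path $\xx_t=(1-t)\xx_0+t\epsilon$ with $\xx_0\sim\pi$, and let $u_t(\xx)=\E[\epsilon-\xx_0\mid \xx_t=\xx]$ be its marginal velocity.

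I will follow the standard argument, as in score-based likelihood bounds (Song et al.) and flow-matching KL bounds (e.g.\ Albergo et al.). Differentiate $\mathcal{D}_{KL}(p_t\|p^v_t)$ in $t$ using the two continuity equations. This gives
\[
\tfrac{d}{dt}\mathcal{D}_{KL}(p_t\|p_t^v) = \E_{p_t}\big[(u_t - v_t)\cdot(\nabla\log p_t - \nabla \log p_t^v)\big],
\]
up to sign and orientation. Both paths end at the same Gaussian at $t=1$, so integrating from $0$ to $1$ and applying Young's inequality bounds $\mathcal{D}_{KL}(\pi\|p^v)$ by
\[
\tfrac12\int_0^1 \E_{p_t}\|v_t-u_t\|^2\,dt
\quad\text{plus}\quad
\tfrac12\int_0^1\E_{p_t}\|\nabla\log p_t-\nabla\log p_t^v\|^2\,dt .
\]
The second term is the one that depends on $\theta$ through $p^v_t$, which is awkward. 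I would control it through the regularity assumptions: uniform Lipschitz and boundedness conditions on the score, or on the mixed fields $\mu_\theta,\nu_\theta$, over the admissible parameter set. Under these, the score discrepancy is dominated by a constant multiple of the velocity discrepancy plus a $\theta$-independent constant. Rescaling so the velocity error carries unit weight then gives $-\E_\pi[\log p^v]\le C' + \int_0^1\E_{p_t}\|v_t-u_t\|^2dt$, after absorbing constants into $C'$ or choosing the time weighting as the appendix does.

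Next I replace the marginal regression with the conditional one. Expand the square and use $\E[\epsilon-\xx_0\mid\xx_t]=u_t$, the standard CFM identity of Lipman et al. This gives
\[
\E_{t,\xx_t}\|v-u_t\|^2=\E_{t,\xx_0,\epsilon}\|v(\xx_t)-(\epsilon-\xx_0)\|^2-\E\,\mathrm{Var}(\epsilon-\xx_0\mid \xx_t),
\]
and the variance term is $\theta$-independent, so it can be dropped or absorbed into the constant. I then apply this with $(\pi,v)=(\pi^+,\mu_\theta)$ and $(\pi^-,\nu_\theta)$ and take the expectation over $c$. The negative branch is multiplied by $\lambda\ge0$, which preserves the inequality direction; this is why the statement restricts to $\lambda\ge0$. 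Summing the two branches and collecting all constants into $C$ yields \eqref{eq:weighted_flowcpo}.

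The main obstacle is the score-mismatch term in the KL derivative. A deterministic ODE bound does not reduce to pure velocity regression without extra assumptions, so this is exactly where the regularity conditions have to carry the weight. My first attempt would be to assume uniform (in $\theta$) Lipschitz constants for the model fields and bounded Fisher-type quantities. With those, a Grönwall argument bounds the score difference along the flow by the integrated velocity error plus a constant. If that proves too loose, the fallback is to state the bound directly as an assumption on the induced ODE likelihood in the appendix conditions.
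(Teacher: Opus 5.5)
Your architecture is the paper's: split each forward KL into a $\theta$-independent entropy plus a cross-entropy. Write the latter through the continuity-equation identity $\mathcal{D}_{KL}(q_0\|p_0^\theta)=\int_0^1\E_{q_t}[\langle v_\theta-u_t,\nabla\log q_t-\nabla\log p_t^\theta\rangle]\,dt$, using that both paths coincide at the Gaussian at $t=1$. Then decouple the two factors with Cauchy--Schwarz/Young, apply the result to $(\pi^+,\mu_\theta)$ and $(\pi^-,\nu_\theta)$, and sum with $\lambda\ge0$. One cosmetic difference: the paper pairs the conditional residual $v_\theta-(\xx_1-\xx_0)$ directly with the marginal score gap. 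This is valid because the score gap depends only on $\xx_t$, so the paper never needs your marginal-to-conditional variance identity; the two are equivalent.

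The step that fails as written is the score term, which you correctly flagged as the crux. Let $S$ be the integrated squared score gap and $V$ the velocity error. If all you have is $S\le KV+C$, then $\sqrt{VS}\le\sqrt{KV^2+CV}$ grows like $\sqrt{K}\,V$. So no choice of Young weights or ``rescaling'' gives the unit coefficient in the statement when $K>1$. Such an estimate would also need control of derivatives of the velocity gap, not just its $L^2$ norm, because the score transport equation involves $\nabla(\nabla\cdot v)$.

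The paper instead bounds $S$ by a $\theta$-independent constant outright. It assumes $\|v_\theta(0,t)\|\le C_0$, $\|\nabla v_\theta\|\le C_1$, and $\|\nabla(\nabla\cdot v_\theta)\|\le C_2$ uniformly in $\theta$. It then integrates $\frac{d}{dt}s_t^\theta(X_t)=-\nabla v_\theta\,s_t^\theta(X_t)-\nabla(\nabla\cdot v_\theta)$ backward from $s_1^\theta(x)=-x$ and applies Gr\"onwall twice to get $\|s_t^\theta(x)\|\le K_0+K_1\|x\|$. Together with $\E\|\xx_t\|^2\le M_2+d$ this gives $\E_{q_t}\|s_t^\theta\|^2\le C_p$. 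Adding the assumed $C_q=\sup_t\E_{q_t}\|\nabla\log q_t\|^2<\infty$ gives $S\le C_s:=2(C_p+C_q)$, and then $\sqrt{C_sV}\le V+C_s/4$ yields exactly the stated form.

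Your ``bounded Fisher-type quantities'' option is this route, so commit to it. Impose the bounds on $\mu_\theta,\nu_\theta$; this follows from uniform bounds on $v_\theta$ and $v_{\mathrm{old}}$, since the mixing coefficients are fixed. Drop both the velocity-dependent score estimate and the fallback of assuming the likelihood bound directly, which would be circular.
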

The proof of \thmref{thm:flowcpo_bound} is deferred to \appref{app:fm_bound}.
This result justifies a tractable upper-bound surrogate.
In practice, we replace the idealized source distributions $\pi^+, \pi^-$ with empirical preferred and dispreferred datasets, $\mathcal{D}^+$ and $\mathcal{D}^-$, and minimize the following contrastive flow matching loss:
\begin{tcolorbox}[colback=eqbg, colframe=white, boxrule=0pt]
    \begin{equation}
        \label{eq:final_loss}
        \begin{aligned}
            \min_\theta \mathcal{L}_{\text{\method}}(\theta) =
             & \qquad \E_{\substack{(c, \xx_0^w) \sim \mathcal{D}^+ \\ t, \epsilon}}
            \Big[ \big\| \mu_{\theta}(\xx_t^w, t, c) - u_t(\xx_t^w \mid \xx_0^w) \big\|_2^2 \Big] \\
             & \qquad + \, \lambda \cdot \,
            \E_{\substack{(c, \xx_0^l) \sim \mathcal{D}^- \\ t, \epsilon}}
            \Big[ \big\| \nu_{\theta}(\xx_t^l, t, c) - u_t(\xx_t^l \mid \xx_0^l) \big\|_2^2 \Big].
        \end{aligned}
    \end{equation}
\end{tcolorbox}
Pseudo-code is given in \algmref{alg:flowcpo}.
For $\lambda\geq0$, \eqref{eq:final_loss} is bounded below by zero, whereas simplified FlowDPO's signed objective can be unbounded below. \appref{app:flowdpo_ema_connection} provides the comparison.

\subsection{Unified View and Connections to Prior Work}
\label{subsec:connections}
\tabref{tab:comparison} summarizes connections between \eqref{eq:generalized_obj} and prior works, and \appref{app:connections} elaborates the assumptions behind each mapping.
\method is derived from the forward-KL objective in \eqref{eq:op_kl}, which is a special case of \eqref{eq:generalized_obj}.
The generalized objective organizes flow-based preference-alignment methods (FlowDPO, FlowGRPO, DiffusionNFT) and regression baselines (SFT, RFT):
\begin{tcolorbox}[colback=eqbg, colframe=white, boxrule=0pt]
    \begin{equation}
        \label{eq:generalized_obj}
        \min_\theta \mathcal{L}_{\text{General}}
        = \E_c \left[
            \alpha \cdot \underbrace{\mathcal{D}_f(\pi^+ \| q_\theta^+)}_{\text{positive alignment}}
            \, + \, 
            \gamma \cdot \underbrace{\mathcal{D}_f(\pi^- \| q_\theta^-)}_{\text{negative regularization}}
            \right]
    \end{equation}
\end{tcolorbox}
Here $\alpha > 0$, and $\gamma \in \mathbb{R}$.
In the forward-KL branch we use $\mathcal{D}_f(\pi^\pm \| q)=\mathcal{D}_{KL}(\pi^\pm \| q)$, and in the reverse-KL branch we use $\mathcal{D}_f(\pi^\pm \| q)=\mathcal{D}_{KL}(q \| \pi^\pm)$.
When $\gamma \ge 0$, the negative regularization matches a negative distribution, and it acts as repulsive regularization when $\gamma < 0$.

\begin{table}[!t]
    \centering
    \caption{
        \textbf{Comparison of flow-based alignment methods and related special cases under~\eqref{eq:generalized_obj}.}
        Rows are organized by optimization regime and divergence choice.
        BT indicates Bradley--Terry preference model, and Ref.\ KL indicates whether the method uses reference-model KL regularization.
    }
    \label{tab:comparison}
    \resizebox{\textwidth}{!}{%
        \begin{tabular}{lccccccccc}
            \toprule
            \textbf{Method}                                 & \textbf{Regime}  & \textbf{Divergence} & \textbf{Optimized Process} & $\bm{\alpha}$ & $\bm{\gamma}$  & $q_\theta^+$        & $q_\theta^-$        & \textbf{BT}  & \textbf{Ref.\ KL} \\
            \midrule
            FlowGRPO~\cite{liu2025flow}                     & Online           & Reverse KL          & Reverse Process & $1$           & $-1$           & $\pi_\theta$        & $\pi_\theta$        & $\times$     & $\checkmark$      \\
            DiffusionNFT~\cite{zheng2025diffusionnft}       & Online           & Forward KL          & Forward Process & $p(o=1|c)$    & $p(o=0|c)$     & $\pi_\theta^+$      & $\pi_\theta^-$      & $\times$     & $\times$          \\
            AWM~\cite{xue2025advantage}                     & Online           & Reverse KL          & Forward Process & $1$    & $-1$     & $\pi_\theta$      & $\pi_\theta$      & $\times$     & $\times$          \\
            RAM~\cite{bergmeister2026reinforce}             & Online           & Reverse KL          & Forward Process & $1$    & $-1$     & $\pi_\theta$      & $\pi_\theta$      & $\times$     & $\checkmark$          \\
            \midrule
            SFT                                             & Offline          & Forward KL          & - & $p(o=1|c)$    & $p(o=0|c)$     & $\pi_\theta$        & $\pi_\theta$        & $\times$     & $\times$          \\
            RFT~\cite{xiong2025minimalist,chen2025bridging} & Offline          & Forward KL          & - & $1$           & $0$            & $\pi_\theta$        & $-$                 & $\times$     & $\times$          \\
            FlowDPO~\cite{liu2025improving}                 & Offline          & Implicit reverse KL & - &  $1$           & $-1$           & $\pi_\theta$        & $\pi_\theta$        & $\checkmark$ & $\checkmark$      \\
            \midrule
            \textbf{\method}                                & \textbf{Offline} & \textbf{Forward KL} & - & $\bm{1}$      & $\bm{\lambda}$ & $\bm{\pi_\theta^+}$ & $\bm{\pi_\theta^-}$ & $\times$     & $\bm{\times}$     \\
            \bottomrule
        \end{tabular}
    }
\end{table}

\section{Experiments}
\label{sec:experiments}

We study \method along three axes: offline alignment with in-domain and out-of-domain preference data, the effect of negative regularization, and sensitivity to the core hyperparameters $(\beta, \lambda, \eta)$.
The main text focuses on the central quantitative and qualitative results, while the appendix collects the training pseudo-code, per-domain tables, and extended ablations.

\subsection{Experimental Setup}
\label{subsec:exp_setup}

\begin{table*}[!t]
    \centering
    \caption{\small
        \textbf{Quantitative comparison under \emph{in-domain} and \emph{out-of-domain} (OOD) offline training regimes} on \texttt{SD3.5-M}.
        We report results with and without Classifier-Free Guidance (CFG).
        When CFG is enabled, we report separate rows for guidance scales $\{1.0, 3.0, 4.5\}$.
        Under the \emph{in-domain} setting, fine-tuned methods are trained separately for each target domain (Concept, Typography, and General Preference).
        Under the \emph{OOD} setting, a single model is trained on the OOD offline dataset and evaluated across all domains and metrics.
        The two blocks therefore correspond to different evaluation protocols and should be compared primarily within, rather than across, regimes.
        The main table reports means only and rounds all entries to 2-3 decimal places; the corresponding mean $\pm$ standard deviation statistics over 5 independent runs for fine-tuned \texttt{SD3.5-M} variants are reported in the \appref{app:optimization_targets}, whereas pretrained baselines are single evaluations. Within each training regime, the best result is highlighted in \textbf{bold} and the second-best result is \underline{underlined}.
    }
    \label{tab:main_results}
    \resizebox{1\textwidth}{!}{
        \begin{tabular}{l c c c c c c c c c}
            \toprule
                           &              & \multicolumn{1}{c}{\textbf{Concept}} & \multicolumn{1}{c}{\textbf{Typography}} & \multicolumn{6}{c}{\textbf{General Preference}}                                                                                                                                                                           \\
            \cmidrule(lr){3-3} \cmidrule(lr){4-4} \cmidrule(lr){5-10}
            \textbf{Model} & \textbf{CFG} & \textbf{GenEval $\uparrow$}          & \textbf{OCR $\uparrow$}                 & \textbf{PickScore $\uparrow$}                   & \textbf{CLIPSc. $\uparrow$}     & \textbf{HPSv2.1 $\uparrow$}     & \textbf{Aes. $\uparrow$}        & \textbf{ImgRwd $\uparrow$}      & \textbf{UniRwd $\uparrow$}      \\
            \midrule

            \multicolumn{10}{l}{\textit{Pretrained model baselines}}                                                                                                                                                                                                                                                                                  \\
            SD-XL          & $-$          & 0.55                                 & 0.14                                    & 22.42                                           & 0.287                           & 0.280                           & 5.60                            & 0.76                            & 2.93                            \\
            SD3.5-L        & $-$          & 0.71                                 & 0.68                                    & 22.91                                           & 0.289                           & 0.288                           & 5.50                            & 0.96                            & 3.25                            \\
            FLUX.1-Dev     & $-$          & 0.66                                 & 0.59                                    & 22.84                                           & 0.295                           & 0.274                           & 5.71                            & 0.96                            & 3.27                            \\
            \midrule

            \multicolumn{10}{l}{\textit{Reference model: SD3.5-M}}                                                                                                                                                                                                                                                                                     \\
            \multirow{3}{*}{Base Model}
                           & $1.0$        & 0.24                                 & 0.12                                    & 20.51                                           & 0.237                           & 0.204                           & 5.13                            & $-$0.58                         & 2.02                            \\
                           & $3.0$        & 0.59                                 & 0.47                                    & 22.28                                           & 0.287                           & 0.284                           & 5.38                            & 0.71                            & 2.96                            \\
                           & $4.5$        & 0.63                                 & 0.59                           & 22.34                                           & 0.285                           & 0.279                           & 5.36                            & 0.85                            & 3.03                            \\
            \midrule

            \multicolumn{10}{>{\columncolor{indomaincolor}}l}{\textit{In-domain offline fine-tuning on SD3.5-M}}                                                                                                                                                                                                                                       \\
            \multirow{3}{*}{+ RFT~\cite{xiong2025minimalist,chen2025bridging} }
                           & $1.0$        & $0.59$                  & $0.35$                     & $21.91$                            & $0.279$             & $0.276$             & $5.35$             & $0.57$             & $2.66$             \\
                           & $3.0$        & $0.74$                  & $0.70$                     & $22.60$                            & $0.296$             & $0.304$             & $5.41$             & $1.06$             & $3.14$             \\
                           & $4.5$        & $0.75$                  & $0.72$                     & $22.57$                            & $0.297$             & \underline{$0.305$}             & $5.42$             & $1.11$             & $3.15$             \\
            \cmidrule(l){2-10}
            \multirow{3}{*}{+ FlowDPO~\cite{liu2025improving}}
                           & $1.0$        & $0.59$                  & $0.51$                     & $20.72$                            & $0.240$             & $0.221$             & $5.22$             & $-0.46$            & $2.11$             \\
                           & $3.0$        & $0.81$                  & $0.74$                     & $22.76$                            & $0.297$             & $0.301$             & \underline{$5.56$} & $1.04$             & $3.10$             \\
                           & $4.5$        & $0.81$                  & $0.75$                     & \underline{$22.89$}                & $\mathbf{0.301}$    & $\mathbf{0.311}$ & \underline{$5.56$}             & $1.15$             & $3.18$             \\
            \cmidrule(l){2-10}
            \multirow{3}{*}{+ \method ($\beta = 0.5$, Ours)}
                           & $1.0$        & $0.76$                  & $0.83$                     & $22.48$                            & $0.280$             & $0.290$             & $\mathbf{5.64}$    & $0.90$             & $2.93$             \\
                           & $3.0$        & $\mathbf{0.84}$         & $\mathbf{0.87}$            & $\mathbf{22.94}$                   & $0.299$             & $\mathbf{0.311}$    & $5.54$             & $\mathbf{1.25}$    & $\mathbf{3.30}$    \\
                           & $4.5$        & \underline{$0.82$}      & \underline{$0.86$}         & $22.67$                            & \underline{$0.300$} & $0.303$             & $5.46$             & \underline{$1.22$} & \underline{$3.28$} \\
            \midrule
            \multicolumn{10}{>{\columncolor{oodcolor}}l}{\textit{Out-of-domain (OOD) offline fine-tuning on SD3.5-M}}                                                                                                                                                                                                                                  \\
            \multirow{3}{*}{+ RFT~\cite{xiong2025minimalist,chen2025bridging}}
                           & $1.0$        & $0.44$                  & $0.17$                     & $21.59$                            & $0.268$             & $0.261$             & $5.49$             & $0.31$             & $2.56$             \\
                           & $3.0$        & $0.67$                  & $0.53$                     & \underline{$22.64$}                & \underline{$0.297$} & \underline{$0.303$} & $5.48$             & \underline{$1.05$} & $3.18$             \\
                           & $4.5$        & $0.68$                  & $\mathbf{0.60}$            & $\mathbf{22.69}$                   & $\mathbf{0.299}$    & $\mathbf{0.307}$    & $5.47$             & $\mathbf{1.12}$    & $\mathbf{3.23}$    \\
            \cmidrule(l){2-10}
            \multirow{3}{*}{+ FlowDPO~\cite{liu2025improving}}
                           & $1.0$        & $0.23$                  & $0.12$                     & $20.83$                            & $0.243$             & $0.226$             & $\mathbf{5.72}$    & $-0.39$            & $2.27$             \\
                           & $3.0$        & $0.61$                  & $0.48$                     & $22.54$                            & $0.292$             & $0.292$             & \underline{$5.52$} & $0.91$             & $3.10$             \\
                           & $4.5$        & $0.65$                  & $0.54$                     & $22.62$                            & $0.295$             & $0.299$             & $5.49$             & $0.99$             & $3.17$             \\
            \cmidrule(l){2-10}
            \multirow{3}{*}{+ \method ($\beta=0.5$, Ours)}
                           & $1.0$        & $0.57$                  & $0.24$                     & $22.02$                            & $0.281$             & $0.275$             & $5.35$             & $0.63$             & $2.85$             \\
                           & $3.0$        & \underline{$0.69$}                  & $0.49$                     & $22.32$                            & $0.296$             & $0.288$             & $5.37$             & $1.00$             & $3.15$             \\
                           & $4.5$        & $0.68$                  & $0.49$                     & $22.12$                            & $0.294$             & $0.284$             & $5.31$             & $0.95$             & $3.11$             \\
            \cmidrule(l){2-10}
            \multirow{3}{*}{+ \method ($\beta=1$, Ours)}
                           & $1.0$        & $0.47$                  & $0.24$                     & $21.81$                            & $0.276$             & $0.270$             & $5.35$             & $0.52$             & $2.77$             \\
                           & $3.0$        & $\mathbf{0.70}$      & $0.56$                     & $22.46$                            & \underline{$0.297$} & $0.295$             & $5.41$             & $1.02$             & $\mathbf{3.23}$ \\
                           & $4.5$        & $\mathbf{0.70}$         & \underline{$0.59$}         & $22.36$                            & $0.294$             & $0.294$             & $5.38$             & $1.02$             & \underline{$3.22$}            \\
            \bottomrule
        \end{tabular}
    }
\end{table*}

\paragraph{Data Construction and Implementation.}
We evaluate \method on \texttt{Stable Diffusion 3.5 Medium (SD3.5-M)}~\cite{esser2024scaling} in a strictly offline setting.
Our offline preference data come from two sources. In the \emph{in-domain} regime, we build winner/loser pairs from a frozen copy of the reference model using prompts drawn from the target benchmark sources. In the \emph{out-of-domain} (OOD) regime, we use the public Open Image Preferences v1 dataset, whose images are generated by other open models. All models are trained under the same offline protocol, and we defer the exact data-generation pipeline, dataset sizes, reward filtering, and training hyperparameters to \appref{app:experimental_details}. In the in-domain setting, each target capability uses its own specialist model, so the \tabref{tab:main_results} summarizes task-specific fine-tuning results rather than a single universal model.

\paragraph{Baselines.}
We compare against the two published approaches that are most directly comparable under the same fixed-data protocol:
\begin{itemize} [leftmargin=*, itemsep=0pt, topsep=2pt]
    \item \textbf{RFT (Rejection Sampling Fine-Tuning)}~\cite{xiong2025minimalist,chen2025bridging}: A positive-only offline baseline that can be viewed as a forward-KL special case, learning exclusively from preferred data and ignoring dispreferred data.
    \item \textbf{FlowDPO}~\cite{liu2025improving}: A flow-based DPO baseline that uses preferred/dispreferred pairs through a reference-relative likelihood-ratio surrogate.
\end{itemize}

\begin{figure}[t]
    \centering
    \includegraphics[width=1\textwidth]{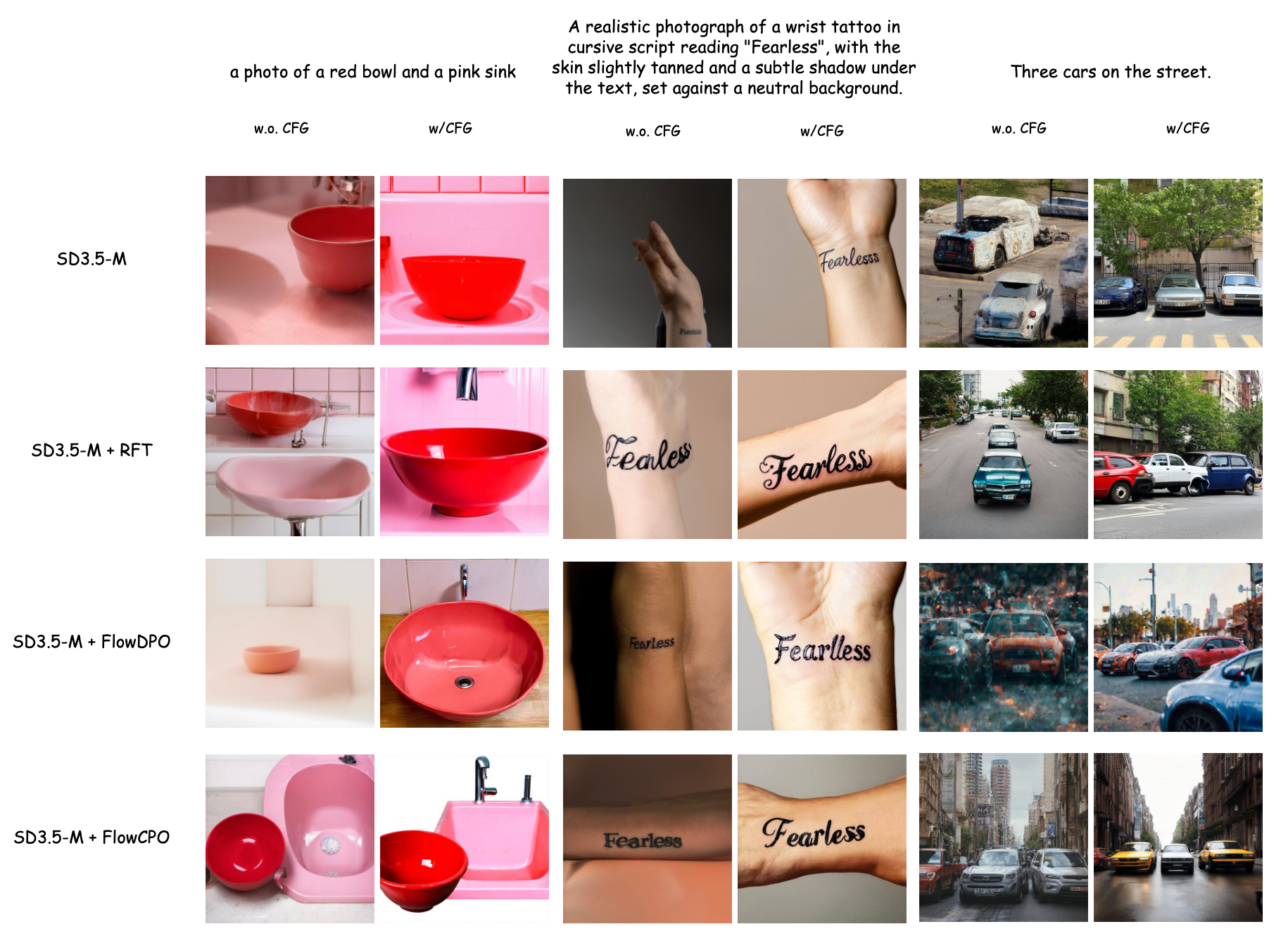}
    \caption{\small
        \textbf{Representative qualitative comparison of \method and the baselines} across three preference optimization tasks.
    }
    \vspace{-1.5em}
    \label{fig:all_flowcpo_image_quality}
\end{figure}

\paragraph{Evaluation Metrics.}
We evaluate two kinds of behavior. For targeted capabilities, we use task-specific benchmarks: \texttt{GenEval}~\cite{ghosh2023geneval} for multi-concept compositional generation and an \texttt{OCR}-based metric for visual text rendering. For general preference alignment, we run inference on the \texttt{DrawBench} prompt set and report \texttt{PickScore}~\cite{kirstain2023pick}, \texttt{CLIP Score}~\cite{hessel2021clipscore}, \texttt{HPS v2.1}~\cite{wu2023hpsv2}, \texttt{Aesthetics}~\cite{schuhmann2022laion}, \texttt{ImageReward}~\cite{xu2023imagereward}, and \texttt{UnifiedReward}~\cite{wang2025unified}. Because the in-domain general-preference subset is filtered with \texttt{PickScore}, \texttt{CLIP Score}, and \texttt{HPS v2.1}, we treat these as optimization-aligned metrics and use \texttt{Aesthetics}, \texttt{ImageReward}, and \texttt{UnifiedReward} as a cleaner check of transfer beyond the filtering pipeline.

\subsection{RQ1: Main Results with In-Domain and Out-of-Domain Data}
\label{subsec:main_results}

To evaluate method-level performance, we study three capability groups: semantic alignment, typographic generation, and general preference.

We summarize the main quantitative results in \tabref{tab:main_results}. The two regimes answer different questions, so we read them separately: the \emph{in-domain} block evaluates in-domain offline fine-tuning, whereas the \emph{out-of-domain} (OOD) block evaluates a single model trained once on an OOD offline dataset.
\begin{itemize}[leftmargin=*, itemsep=0pt, topsep=2pt]
    \item \textbf{In the \emph{in-domain} setting, \method has higher mean GenEval and OCR scores than RFT and FlowDPO.} \tabref{tab:main_results} aggregates separate task-specific fine-tunes for the three capability groups; \method attains the best GenEval mean (\textbf{0.84} at CFG 3.0) and the best OCR mean (\textbf{0.87} at CFG 3.0), exceeding FlowDPO by 0.03 and 0.12, respectively. On general preference, the picture is narrower but still favorable: \method is best or tied-best on \texttt{PickScore}, \texttt{HPS v2.1}, \texttt{ImageReward}, and \texttt{UnifiedReward}, while \texttt{CLIP Score} is effectively tied with FlowDPO. Since \texttt{PickScore}, \texttt{CLIP Score}, and \texttt{HPS v2.1} also appear in the in-domain filtering pipeline, we put more weight on \texttt{Aesthetics}, \texttt{ImageReward}, and \texttt{UnifiedReward}, together with the qualitative comparison in \figref{fig:all_flowcpo_image_quality} and the extended examples in \appref{app:optimization_targets}, when judging transfer beyond the optimization-aligned signals.
    \item \textbf{In the \emph{out-of-domain} setting, the advantage is metric-dependent.} \method gives the best GenEval score and remains competitive on OCR and \texttt{UnifiedReward}, whereas RFT performs better on several reward-model metrics. Thus, retaining both sides of each preference pair does not guarantee improvement when the fixed data are generated by models other than the reference model. This result motivates the regularization analysis in RQ2 and identifies out-of-domain data as an empirical boundary of the method.
\end{itemize}

\begin{table*}[!t]
    \centering
    \caption{\small
        \textbf{Analysis of different regularization mechanisms} on \texttt{SD3.5-M} under the unified divergence-based framework of \eqref{eq:generalized_obj}, with all models trained only on \emph{GenEval}.
        We consider three representative paradigms: positive-only regularization, which uses only preferred samples; repulsive negative regularization, which explicitly pushes the model away from dispreferred regions; and attractive negative regularization, which incorporates negative samples through distributional matching.
        For attractive negative regularization, we report two instantiations with different mixing coefficients $\beta$.
        Results are reported with and without Classifier-Free Guidance (CFG).
        Metrics are grouped into \emph{Concept}, \emph{Typography}, and \emph{General Preference} for consistency with the main results table and to assess cross-metric generalization beyond the training signal. Best results are highlighted in \textbf{bold} and second-best results are \underline{underlined}.
    }
    \vspace{-0.5em}
    \label{tab:regularization_mechanisms}
    \resizebox{1\textwidth}{!}{
        \begin{tabular}{c c c c c c c c c c}
            \toprule
            \multirow{2}{*}{\textbf{$\lambda$}} & \multirow{2}{*}{\textbf{CFG}}
                                                & \multicolumn{1}{c}{\textbf{Concept}}
                                                & \multicolumn{1}{c}{\textbf{Typography}}
                                                & \multicolumn{6}{c}{\textbf{General Preference}}                                                                                                                                                                                                                                          \\
            \cmidrule(lr){3-3} \cmidrule(lr){4-4} \cmidrule(lr){5-10}
                                                &                                                 & \textbf{GenEval $\uparrow$} & \textbf{OCR $\uparrow$} & \textbf{PickScore $\uparrow$} & \textbf{CLIPSc. $\uparrow$} & \textbf{HPSv2.1 $\uparrow$} & \textbf{Aes. $\uparrow$} & \textbf{ImgRwd $\uparrow$} & \textbf{UniRwd $\uparrow$} \\
            \midrule

            \rowcolor{gray!15}\multicolumn{10}{l}{\textit{Positive-only regularization (RFT-like):} $\mu_\theta=v_\theta,\ \nu_\theta=0$}                                                                                                                                                                                                  \\

            $0$                                 & $ 1.0 $                                         & 0.59                        & 0.15                    & 21.67                         & 0.27                        & 0.268                       & 5.32                     & 0.43                       & 2.63                       \\
            $0$                                 & $ 3.0 $                                         & 0.72                        & 0.51                    & \textbf{22.5}                 & 0.295                       & \underline{0.301}           & \textbf{5.41}            & 1.03                       & 3.13                       \\
            $0$                                 & $ 4.5 $                                         & 0.75                        & 0.56                    & \textbf{22.5}                 & 0.296                       & \textbf{0.304}              & \textbf{5.41}            & 1.08                       & 3.16                       \\
            \midrule

            \rowcolor{gray!15}\multicolumn{10}{l}{\textit{Repulsive negative regularization (DPO-like):} $\mu_\theta=\nu_\theta=v_\theta$}                                                                                                                                                                                                 \\
            $-0.1$                              & $ 1.0 $                                         & 0.64                        & 0.15                    & 21.76                         & 0.277                       & 0.270                       & 5.30                     & 0.53                       & 2.66                       \\
            $-0.1$                              & $ 3.0 $                                         & 0.77                        & 0.50                    & \underline{22.48}             & 0.297                       & 0.300                       & 5.37                     & 1.07                       & 3.17                       \\
            $-0.1$                              & $ 4.5 $                                         & 0.76                        & \underline{0.58}        & 22.47                         & 0.297                       & \textbf{0.304}              & \underline{5.40}         & \underline{1.10}           & 3.17                       \\
            $-1$                                & $ 1.0 $                                         & 0.25                        & 0.08                    & 19.46                         & 0.201                       & 0.154                       & 4.22                     & -1.66                      & 1.40                       \\
            $-1$                                & $ 3.0 $                                         & 0.70                        & 0.48                    & 21.39                         & 0.270                       & 0.229                       & 4.70                     & -0.28                      & 2.51                       \\
            $-1$                                & $ 4.5 $                                         & 0.71                        & 0.55                    & 21.50                         & 0.276                       & 0.240                       & 4.86                     & -0.01                      & 2.65                       \\
            $-10$                               & --                                              & Diverged                    & Diverged                & Diverged                      & Diverged                    & Diverged                    & Diverged                 & Diverged                   & Diverged                   \\
            \midrule

            \rowcolor{gray!15}\multicolumn{10}{l}{\textit{Attractive negative regularization (CPO-like):} $\mu_\theta=v_\theta,\ \nu_\theta= 2 v_{\text{old}} - v_\theta$}                                                                                                                                                                 \\
            $0.1$                               & $ 1.0 $                                         & 0.64                        & 0.17                    & 21.72                         & 0.278                       & 0.266                       & 5.24                     & 0.50                       & 2.71                       \\
            $0.1$                               & $ 3.0 $                                         & 0.77                        & 0.54                    & 22.47                         & 0.297                       & \underline{0.301}           & 5.35                     & 1.06                       & \underline{3.19}           \\
            $0.1$                               & $ 4.5 $                                         & 0.77                        & \textbf{0.59}           & 22.48                         & \underline{0.298}           & \textbf{0.304}              & 5.38                     & \textbf{1.14}              & \textbf{3.23}              \\
            $1$                                 & $ 1.0 $                                         & \underline{0.78}            & 0.25                    & 21.74                         & 0.280                       & 0.258                       & 5.23                     & 0.57                       & 2.74                       \\
            $1$                                 & $ 3.0 $                                         & \underline{0.78}            & 0.51                    & 21.69                         & 0.291                       & 0.266                       & 5.09                     & 0.76                       & 2.96                       \\
            $1$                                 & $ 4.5 $                                         & 0.67                        & 0.55                    & 21.20                         & 0.285                       & 0.245                       & 4.89                     & 0.43                       & 2.75                       \\
            $10$                                & --                                              & Diverged                    & Diverged                & Diverged                      & Diverged                    & Diverged                    & Diverged                 & Diverged                   & Diverged                   \\
            \midrule

            \rowcolor{gray!15}\multicolumn{10}{l}{\textit{Attractive negative regularization (CPO-like):} $\mu_\theta=0.5 v_{\text{old}} + 0.5 v_\theta,\ \nu_\theta= 1.5 v_{\text{old}} - 0.5 v_\theta$}                                                                                                                                  \\
            $0.1$                               & $ 1.0 $                                         & 0.63                        & 0.17                    & 21.48                         & 0.275                       & 0.259                       & 5.22                     & 0.33                       & 2.55                       \\
            $0.1$                               & $ 3.0 $                                         & \underline{0.78}            & 0.51                    & 22.41                         & \underline{0.298}           & 0.297                       & 5.33                     & 1.04                       & 3.14                       \\
            $0.1$                               & $ 4.5 $                                         & 0.77                        & \underline{0.58}        & 22.43                         & \textbf{0.299}              & 0.299                       & 5.35                     & \underline{1.10}           & \underline{3.19}           \\
            $1$                                 & $ 1.0 $                                         & 0.76                        & 0.26                    & 21.84                         & 0.280                       & 0.268                       & 5.26                     & 0.64                       & 2.79                       \\
            $1$                                 & $ 3.0 $                                         & \textbf{0.84}               & 0.53                    & 22.20                         & 0.296                       & 0.288                       & 5.27                     & 1.01                       & 3.10                       \\
            $1$                                 & $ 4.5 $                                         & 0.83                        & 0.55                    & 21.94                         & 0.294                       & 0.282                       & 5.16                     & 0.95                       & 3.04                       \\
            $10$                                & --                                              & Diverged                    & Diverged                & Diverged                      & Diverged                    & Diverged                    & Diverged                 & Diverged                   & Diverged                   \\
            \bottomrule
        \end{tabular}
    }
    \vspace{-1em}
\end{table*}

\subsection{RQ2: Effectiveness of Different Negative Regularization}
\label{subsec:negative_regularization}

We compare positive-only training with repulsive ($\lambda<0$) and attractive ($\lambda>0$) negative regularization under the unified framework.

\tabref{tab:regularization_mechanisms} \textbf{favors moderate attractive matching}: $\beta=0.5,\lambda=1$ achieves the best GenEval score (0.84), while $\beta=1,\lambda=0.1$ gives the best OCR, \texttt{ImageReward}, and \texttt{UnifiedReward}. Repulsion with $\lambda=-1$ sharply degrades all metrics, and both signs diverge at $|\lambda|=10$. Since training optimizes only GenEval, the other gains measure cross-metric transfer.

\subsection{RQ3: Sensitivity of \method to Its Core Hyperparameters}
\label{subsec:hyperparameter_sensitivity}

\figref{fig:ablation_studies} favors moderate interpolation ($\beta\in[0.5,1.0]$), balanced negative weight ($\lambda=1$), and slow EMA ($\eta=0.99$). Extreme weights destabilize training, while small weights weaken the negative contribution. We therefore use $\beta=0.5$, $\lambda=1$, and $\eta=0.99$ as the default settings. Complete sweeps appear in \appref{app:ablation}.

\section{Conclusion and Limitations}
\label{sec:conclusion}

We presented a divergence-based framework that separates sampling regime from divergence direction and derived \method as its offline forward-KL instance. Coupled preferred and dispreferred branches enable regression on fixed preference pairs without online rollouts. Our loss decomposition also connects \method to simplified FlowDPO and shows how the difference between the current model and its EMA enters the objective. The nonnegative loss avoids simplified FlowDPO's potentially unbounded negative regression term.

Experiments show higher mean in-domain GenEval and OCR scores than RFT and FlowDPO, with mixed out-of-domain gains. The ablations favor moderate attractive matching of dispreferred samples, while excessively large weights can destabilize training.

The forward-KL bound assumes linear interpolation and uniform regularity. Loss nonnegativity alone does not guarantee stable training. We evaluate fine-tuning on SD3.5-M, so performance on other backbones remains to be established. Extending the bound to other paths and improving robustness across different data sources remain open challenges.

\bibliography{resources/reference}
\bibliographystyle{plainnat}

\appendix

\newpage

\begingroup
\setlength{\parskip}{0pt}
\hypersetup{linkcolor=black}
\tableofcontents
\endgroup

\newpage

\section{Theoretical Relation between Log-Likelihood and Flow Matching Loss}
\label{app:relation_loglike_fm}
In this section, we discuss the relationship between log-likelihood and flow matching loss. The theoretical results mainly follow from \citep{song2021maximum,han2026conditional,lu2022maximumode}.

\subsection{Pointwise Relation between Log-Likelihood and Flow Matching Loss}
The theoretical results in this subsection are from \citep{han2026conditional}. 
Fix a sample $x_0\in\R^d$, 
\[
\xx_t=(1-t)x_0+t\xx_1 \sim q_t^{x_0}=\cN((1-t)x_0,t^2I_d), \qquad \xx_1\sim\cN(0,I_d).
\]
Let $u_t^{x_0}(\xx)=\frac{\xx-x_0}{t}$ be the velocity field corresponding to $q_t^{x_0}$.  We compare this conditional path $(q_t^{x_0},u_t^{x_0})$ to the model path $(p_t^\theta,v_\theta)$ through the velocity and score gaps:
\[
\Delta v_t^\theta(\xx;x_0)
:=
v_\theta(\xx,t)-u_t^{x_0}(\xx),
\qquad
\Delta s_t^\theta(\xx;x_0)
:=
\nabla\log q_t^{x_0}(\xx) - \nabla\log p_t^\theta(\xx)
\]
we also define the following terms:
\begin{equation*}
    \ell_\varepsilon(\theta;x_0)
    :=
    \E_{\xx_\varepsilon\sim q_\varepsilon^{x_0}}
    \bigl[-\log p_\varepsilon^\theta(\xx_\varepsilon)\bigr],
    \quad
    \mathcal J_w^{[\varepsilon,1]}(\theta;x_0)
    :=
    \int_\varepsilon^1
    w(t)\,
    \E_{\xx_t\sim q_t^{x_0}}
    \bigl[
    \|\Delta v_t^\theta(\xx_t;x_0)\|^2
    \bigr]\,dt
\end{equation*}

\begin{assumption}[Pathwise regularity]
    \label{ass:pw-pathwise-regularity}
    For the fixed $x_0$, the densities $q_t^{x_0}$ and $p_t^\theta$ are strictly positive and $C^1$ in $x$ for $t\in(0,1)$.  Their continuity equations hold in strong form, $t\mapsto\mathrm{KL}(q_t^{x_0}\|p_t^\theta)$ is differentiable, and the integrations by parts used below have no boundary terms.
\end{assumption}
    
\begin{assumption}[Endpoint regularity]
    \label{ass:pw-endpoint-regularity}
    The map $f_\theta(x,t):=-\log p_t^\theta(x)$ is jointly continuous at
    $(x_0,0)$.  Moreover, for some $\varepsilon_0>0$, $C>0$, and $m\ge1$,
    \begin{equation*}
    |f_\theta(x,t)|\le C(1+\|x\|^m),
    \qquad
    x\in\R^d,\quad t\in[0,\varepsilon_0].
    \end{equation*}
\end{assumption}

\begin{theorem}[Pointwise NLL is a CFM term plus residuals~\citep{han2026conditional}]
    \label{thm:pw-practical-decomposition}
    Suppose \assumpref{ass:pw-pathwise-regularity} and \assumpref{ass:pw-endpoint-regularity} hold, and suppose that $p_1^\theta=q_1^{x_0}=\cN(0,I_d)$.
    Then, for every positive weight $w$ and every $\varepsilon\in(0,1)$,
    \begin{align}
    \ell_\varepsilon(\theta;x_0)
    &=
    H(q_\varepsilon^{x_0})
    +
    \int_\varepsilon^1
    \E_{q_t^{x_0}}
    \bigl[
    \langle \Delta v_t^\theta(\xx_t;x_0), \Delta s_t^\theta(\xx_t;x_0)\rangle
    \bigr]\,dt,
    \label{eq:pw-exact-identity}\\
    -\log p_0^\theta(x_0)
    &=
    H(q_\varepsilon^{x_0})
    +
    \mathcal J_w^{[\varepsilon,1]}(\theta;x_0)
    +
    \mathcal G_{\varepsilon,w}(\theta;x_0)
    +
    \mathcal B_\varepsilon(\theta;x_0),
    \label{eq:pw-practical-decomposition}
    \end{align}
    where
    \begin{align*}
    &\mathcal G_{\varepsilon,w}(\theta;x_0) :=\int_\varepsilon^1\E_{q_t^{x_0}}\bigl[\langle \Delta v_t^\theta(\xx_t;x_0), \Delta s_t^\theta(\xx_t;x_0)-w(t)\Delta v_t^\theta(\xx_t;x_0)\rangle\bigr]\,dt,\\
    H(q_\varepsilon^{x_0})&=\frac d2\log(2\pi e\,\varepsilon^2), \quad 
    \mathcal B_\varepsilon(\theta;x_0) :=-\log p_0^\theta(x_0)-\ell_\varepsilon(\theta;x_0), \quad
    \lim_{\varepsilon\downarrow0}\mathcal B_\varepsilon(\theta;x_0)=0.
    \end{align*}
\end{theorem}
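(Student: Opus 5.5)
Our plan is to differentiate the relative entropy along the two paths and integrate it over $[\varepsilon,1]$. All the other terms in the statement are then bookkeeping.

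\textbf{Step 1 (KL derivative).} Under \assumpref{ass:pw-pathwise-regularity}, both $q_t^{x_0}$ and $p_t^\theta$ satisfy continuity equations, $\partial_t q = -\nabla\!\cdot(q\,u_t^{x_0})$ and $\partial_t p = -\nabla\!\cdot(p\,v_\theta)$. We write $\frac{d}{dt}\mathrm{KL}(q_t^{x_0}\|p_t^\theta)=\int \partial_t q\,\log\frac{q}{p}\,dx - \int \frac{q}{p}\partial_t p\,dx$; the term $\int\partial_t q$ vanishes by mass conservation. Integrating by parts with no boundary terms gives
\[
\frac{d}{dt}\mathrm{KL}(q_t^{x_0}\|p_t^\theta)=\E_{q_t^{x_0}}\bigl[\langle u_t^{x_0}-v_\theta,\ \nabla\log q_t^{x_0}-\nabla\log p_t^\theta\rangle\bigr]=-\E_{q_t^{x_0}}\bigl[\langle\Delta v_t^\theta,\Delta s_t^\theta\rangle\bigr].
\]
This is the standard computation of \citep{lu2022maximumode,song2021maximum}. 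Sign care here is the only delicate point.

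\textbf{Step 2 (exact identity \eqref{eq:pw-exact-identity}).} Integrate Step 1 from $\varepsilon$ to $1$. At $t=1$ we have $p_1^\theta=q_1^{x_0}$, so the KL vanishes there, and hence $\mathrm{KL}(q_\varepsilon^{x_0}\|p_\varepsilon^\theta)=\int_\varepsilon^1\E[\langle\Delta v,\Delta s\rangle]\,dt$. Since $\ell_\varepsilon=H(q_\varepsilon^{x_0})+\mathrm{KL}(q_\varepsilon^{x_0}\|p_\varepsilon^\theta)$ is cross-entropy decomposition, this is exactly \eqref{eq:pw-exact-identity}. The Gaussian entropy $\frac d2\log(2\pi e\varepsilon^2)$ follows from the covariance $\varepsilon^2 I_d$.

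\textbf{Step 3 (practical decomposition).} Add and subtract $w(t)\|\Delta v_t^\theta\|^2$ inside the integral. This splits $\int\E\langle\Delta v,\Delta s\rangle$ into $\mathcal J_w^{[\varepsilon,1]}+\mathcal G_{\varepsilon,w}$. Then write $-\log p_0^\theta(x_0)=\ell_\varepsilon+\mathcal B_\varepsilon$, which holds by definition of $\mathcal B_\varepsilon$. This yields \eqref{eq:pw-practical-decomposition} for any positive $w$. Both integrals should be finite for $\varepsilon>0$, since the paths are Gaussian-regular there; we take this as part of the pathwise assumption.

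\textbf{Step 4 (main obstacle: $\mathcal B_\varepsilon\to0$).} We need $\E_{\xx_\varepsilon\sim q_\varepsilon^{x_0}}[f_\theta(\xx_\varepsilon,\varepsilon)]\to f_\theta(x_0,0)$. Write $\xx_\varepsilon=(1-\varepsilon)x_0+\varepsilon\xx_1$ with $\xx_1\sim\cN(0,I_d)$. Then $(\xx_\varepsilon,\varepsilon)\to(x_0,0)$ almost surely (indeed for every fixed $\xx_1$), so joint continuity from \assumpref{ass:pw-endpoint-regularity} gives pointwise convergence of the integrand. For $\varepsilon\le\min(\varepsilon_0,1)$, the polynomial growth bound gives $|f_\theta(\xx_\varepsilon,\varepsilon)|\le C(1+(\|x_0\|+\|\xx_1\|)^m)$. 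This dominating function is integrable under the Gaussian, since all moments are finite. Dominated convergence then yields $\ell_\varepsilon\to-\log p_0^\theta(x_0)$, i.e.\ $\mathcal B_\varepsilon\to0$. The subtle part is justifying the pointwise limit uniformly in $\xx_1$, which the joint continuity handles. We therefore expect the real difficulty to lie in Step 1's regularity (boundary terms and differentiability), which is assumed rather than proved.
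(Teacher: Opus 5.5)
The paper gives no proof of this statement; it imports it from \citep{han2026conditional}, so there is no in-paper argument to compare with. Your route is the standard one behind such identities (cf.\ \citep{lu2022maximumode}), and it is correct: differentiate $\mathrm{KL}(q_t^{x_0}\|p_t^\theta)$ along the two continuity equations, integrate over $[\varepsilon,1]$ using $p_1^\theta=q_1^{x_0}$, add and subtract $w(t)\|\Delta v_t^\theta\|^2$, and apply dominated convergence for $\mathcal B_\varepsilon\to0$. In particular, your Step 1 sign, $\frac{d}{dt}\mathrm{KL}=-\E_{q_t^{x_0}}[\langle\Delta v_t^\theta,\Delta s_t^\theta\rangle]$, checks out.

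Three small points should be tightened.

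\emph{Finiteness of $\mathcal J_w^{[\varepsilon,1]}$.} This does not follow from $q_t^{x_0}$ being Gaussian. It depends on $w$ and on the growth of $v_\theta$, and for a non-integrable positive $w$ the split in Step 3 becomes $\infty-\infty$. It is an implicit hypothesis of the statement, so state it as an assumption rather than deriving it from ``Gaussian regularity.''

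\emph{Uniformity in Step 4.} You neither need nor obtain uniformity in $\xx_1$. Joint continuity at the single point $(x_0,0)$ cannot give it, since $\varepsilon\|\xx_1\|$ is not uniformly small. Pointwise convergence for each fixed $\xx_1$, together with your polynomial domination, is exactly what dominated convergence requires.

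\emph{Integrating up to $t=1$.} Applying the fundamental theorem of calculus on $[\varepsilon,1]$ requires the KL to be continuous up to $t=1$, with integrable derivative. \assumpref{ass:pw-pathwise-regularity} only speaks of $t\in(0,1)$, so you should read it as including that endpoint.
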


\subsection{Expectation-level Relation between Log-Likelihood and Flow Matching Loss}

Throughout this subsection, let
\[
t\sim\operatorname{Unif}(0,1),\qquad
\xx_0\sim p_{\mathrm{data}},\qquad
\xx_1\sim\cN(0,I_d),
\]
where $\xx_0$ and $\xx_1$ are independent, and let
\[
\xx_t=(1-t)\xx_0+t\xx_1\sim q_t.
\]
The marginal velocity field of $(q_t)_{t\in[0,1]}$ is
\[
u_t(x):=\E[\xx_1-\xx_0\mid \xx_t=x].
\]

\begin{lemma}[Optimal solution of Flow Matching Loss]
    \label{lem:optimal_solution_fm}
    Let us consider the following optimization problem:
    \begin{equation}
        \min_{\theta} \E_{t, \xx_0, \xx_1} \left[ \| v_\theta((1-t)\xx_0+t\xx_1, t) - (\xx_1 - \xx_0) \|_2^2 \right]
    \end{equation}
    Then, for every $t\in(0,1)$, the optimal solution is given by
    \begin{equation}
        v^\star(x, t) = \E_{\xx_0, \xx_1} \left[  \xx_1 - \xx_0 \mid \xx_t = x \right]
    \end{equation}
    where $\xx_t = (1-t)\xx_0+t\xx_1$.
\end{lemma}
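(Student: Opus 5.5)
The plan is the standard $L^2$-projection argument: the conditional expectation minimizes mean squared error, and we apply this separately at each time $t$. Write $\xx_t=(1-t)\xx_0+t\xx_1$ and $Y:=\xx_1-\xx_0$. We assume $\E\|Y\|^2<\infty$, which holds whenever $p_{\mathrm{data}}$ has a finite second moment. We also treat $v_\theta$ as ranging over all measurable functions of $(x,t)$ with finite second moment. This expressivity assumption is implicit in the statement; otherwise one only gets the projection onto the model class.

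First, disintegrate in $t$. Because $t\sim\operatorname{Unif}(0,1)$ is independent of $(\xx_0,\xx_1)$, the objective equals
\[
\int_0^1 \E_{\xx_0,\xx_1}\bigl[\|v(\xx_t,t)-Y\|_2^2\bigr]\,dt .
\]
The choice of $v(\cdot,t)$ at one time does not constrain it at any other time. It therefore suffices to minimize the integrand separately for each fixed $t\in(0,1)$. Pointwise minimizers assemble into a global minimizer, provided the resulting $v^\star(x,t)$ is jointly measurable. We choose a regular version of the conditional expectation to guarantee this.

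Second, fix $t$ and define $m_t(x):=\E[Y\mid \xx_t=x]$. For any square-integrable $g$, expand
\[
\|g(\xx_t)-Y\|^2=\|g(\xx_t)-m_t(\xx_t)\|^2+2\langle g(\xx_t)-m_t(\xx_t),\,m_t(\xx_t)-Y\rangle+\|m_t(\xx_t)-Y\|^2 .
\]
Take expectations. The cross term vanishes by the tower property, conditioning on $\xx_t$: the factor $g(\xx_t)-m_t(\xx_t)$ is $\sigma(\xx_t)$-measurable, and $\E[m_t(\xx_t)-Y\mid\xx_t]=0$. Cauchy--Schwarz justifies integrability here. This leaves
\[
\E\|g(\xx_t)-Y\|^2=\E\|g(\xx_t)-m_t(\xx_t)\|^2+\E\|m_t(\xx_t)-Y\|^2 .
\]
The first term is nonnegative and the second does not depend on $g$. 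Hence $g=m_t$ is a minimizer, and it is unique $q_t$-almost everywhere.

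Finally, for $t\in(0,1)$, $q_t$ has a strictly positive density, since it is a Gaussian convolution of the law of $(1-t)\xx_0$. So "$q_t$-a.e." uniqueness determines $v^\star(\cdot,t)$ on all of $\R^d$ up to Lebesgue-null sets. This is why the statement restricts to $t\in(0,1)$.

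The main obstacle is not the algebra but the measure-theoretic bookkeeping, namely the joint measurability of $(x,t)\mapsto\E[Y\mid\xx_t=x]$. I would handle it by working with the joint law of $(t,\xx_t,Y)$ and taking $v^\star(x,t)=\E[Y\mid t,\xx_t=x]$. This is automatically jointly measurable. Because $t$ is independent of $(\xx_0,\xx_1)$, it agrees for a.e. $t$ with the fixed-$t$ conditional expectation. Applying the projection argument once to the pair $(t,\xx_t)$ then gives the result directly.
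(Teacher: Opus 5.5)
Your proposal is correct and follows the same route as the paper: add and subtract $v^\star(\xx_t,t)=\E[\xx_1-\xx_0\mid\xx_t]$, expand the square, and kill the cross term by the tower property, leaving $\E\|v_\theta-v^\star\|^2$ plus a $\theta$-independent term. The paper conditions on $\xx_t$ inside an outer $\E_{t,\xx_t}$, which is effectively your final-paragraph choice of conditioning on $(t,\xx_t)$ jointly; your extra bookkeeping (finite second moments, an unrestricted function class so that $v^\star$ is realizable, joint measurability, a.e.\ uniqueness) makes explicit what the paper's closing step, that the optimum is attained at $v_\theta=v^\star$, leaves implicit.
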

\begin{proof}
    \begin{align*}
        &\E_{t, \xx_0, \xx_1} \left[ \| v_\theta((1-t)\xx_0+t\xx_1, t) - (\xx_1 - \xx_0) \|_2^2 \right] \\
        &= \E_{t, \xx_0, \xx_1} \left[ \| v_\theta(\xx_t, t) - v^\star(\xx_t, t) + v^\star(\xx_t, t) - (\xx_1 - \xx_0) \|_2^2 \right] \\
        &= \E_{t, \xx_t} [\E_{\xx_0, \xx_1} \left[ \| v_\theta(\xx_t, t) - v^\star(\xx_t, t) + v^\star(\xx_t, t) - (\xx_1 - \xx_0) \|_2^2 \mid \xx_t \right] ] \\
        &\qquad \text{(by tower property of expectation)} \\
        &= \E_{t, \xx_t} [\E_{\xx_0, \xx_1} [ \| v_\theta(\xx_t, t) - v^\star(\xx_t, t)\|^2_2 + \|v^\star(\xx_t, t) - (\xx_1 - \xx_0) \|_2^2 \\
        &\qquad + 2\langle v_\theta(\xx_t, t) - v^\star(\xx_t, t), v^\star(\xx_t, t) - (\xx_1 - \xx_0) \rangle \mid \xx_t ] ]
    \end{align*}
    Now, we prove that $\E_{t, \xx_t} [\E_{\xx_0, \xx_1} \left[\langle v_\theta(\xx_t, t) - v^\star(\xx_t, t), v^\star(\xx_t, t) - (\xx_1 - \xx_0) \rangle \mid \xx_t \right] ] = 0$.
    \begin{align*}
        &\E_{t, \xx_t} [\E_{\xx_0, \xx_1} \left[\langle v_\theta(\xx_t, t) - v^\star(\xx_t, t), v^\star(\xx_t, t) - (\xx_1 - \xx_0) \rangle \mid \xx_t \right] ]\\
        &= \E_{t, \xx_t} [\langle v_\theta(\xx_t, t) - v^\star(\xx_t, t), v^\star(\xx_t, t) - \E_{\xx_0, \xx_1} \left[(\xx_1 - \xx_0)\mid \xx_t \right]  \rangle ] \\
        &\qquad \text{(by the linearity of expectation)} \\
        &= \E_{t, \xx_t} [\langle v_\theta(\xx_t, t) - v^\star(\xx_t, t), v^\star(\xx_t, t) - v^\star(\xx_t, t)  \rangle ] \quad \text{(by $v^\star(\xx_t, t) = \E_{\xx_0, \xx_1} \left[(\xx_1 - \xx_0)\mid \xx_t \right]$)} \\
        &= 0
    \end{align*}
    Therefore,
    \begin{align*}
        &\E_{t, \xx_0, \xx_1} \left[ \| v_\theta((1-t)\xx_0+t\xx_1, t) - (\xx_1 - \xx_0) \|_2^2 \right] \\
        &= \E_{t, \xx_0, \xx_1} [ \| v_\theta(\xx_t, t) - v^\star(\xx_t, t)\|^2_2 + \|v^\star(\xx_t, t) - (\xx_1 - \xx_0) \|_2^2 ]
    \end{align*}
    Clearly, the optimal solution is attained when $v_\theta(x, t) = v^\star(x, t)$.
\end{proof}

\begin{lemma}
    \label{lem:optimal_solution_fm_score}
    Under the optimal solution $v^\star(x, t)$, we have:
    \begin{equation}
        v^\star(x, t) = -\frac{1}{1-t} x - \frac{t}{1-t} \nabla \log q_t(x)
    \end{equation}
    where $\xx_t = (1-t)\xx_0 + t\xx_1 \sim q_t(x)$.
\end{lemma}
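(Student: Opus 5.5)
The plan is to write everything in terms of the two conditional means $\E[\xx_0\mid\xx_t=x]$ and $\E[\xx_1\mid\xx_t=x]$. Two facts then finish the proof. The first is the deterministic constraint linking the two means. The second is a Tweedie-type identity expressing $\E[\xx_1\mid\xx_t=x]$ through the score $\nabla\log q_t$. Throughout we fix $t\in(0,1)$. By \lemref{lem:optimal_solution_fm}, $v^\star(x,t)=\E[\xx_1\mid\xx_t=x]-\E[\xx_0\mid\xx_t=x]$, so it suffices to compute these two quantities.

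\textbf{Step 1 (linear constraint).} Taking conditional expectations of $\xx_t=(1-t)\xx_0+t\xx_1$ given $\xx_t=x$ gives
\[
x=(1-t)\,\E[\xx_0\mid\xx_t=x]+t\,\E[\xx_1\mid\xx_t=x].
\]
Hence $\E[\xx_0\mid\xx_t=x]=\bigl(x-t\,\E[\xx_1\mid\xx_t=x]\bigr)/(1-t)$. Substituting into $v^\star$ yields
\[
v^\star(x,t)=\Bigl(1+\tfrac{t}{1-t}\Bigr)\E[\xx_1\mid\xx_t=x]-\tfrac{1}{1-t}x=\tfrac{1}{1-t}\bigl(\E[\xx_1\mid\xx_t=x]-x\bigr).
\]

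\textbf{Step 2 (score identity).} Since $\xx_0$ and $\xx_1$ are independent, the marginal density is
\[
q_t(x)=\int p_{\mathrm{data}}(x_0)\,\cN\bigl(x;(1-t)x_0,t^2I_d\bigr)\,dx_0 .
\]
The Gaussian kernel is strictly positive, so $q_t>0$. We differentiate under the integral and use $\nabla_x\cN(x;(1-t)x_0,t^2I)=-\frac{x-(1-t)x_0}{t^2}\,\cN(x;(1-t)x_0,t^2I)$. Dividing by $q_t(x)$ and recognizing the posterior of $\xx_0$ given $\xx_t=x$ gives
\[
\nabla\log q_t(x)=-\frac{x-(1-t)\E[\xx_0\mid\xx_t=x]}{t^2}=-\frac{\E[\xx_1\mid\xx_t=x]}{t},
\]
where the last equality uses Step 1. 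Thus $\E[\xx_1\mid\xx_t=x]=-t\nabla\log q_t(x)$.

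\textbf{Step 3 (combine).} Inserting Step 2 into the formula from Step 1 gives
\[
v^\star(x,t)=-\frac{1}{1-t}x-\frac{t}{1-t}\nabla\log q_t(x),
\]
which is the claim.

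The only delicate point is the interchange of gradient and integral in Step 2. I would justify it by dominated convergence. For $x$ in a compact neighbourhood, $\|x-(1-t)x_0\|\,\cN(x;(1-t)x_0,t^2I)$ is bounded uniformly in $x_0$, because the Gaussian decays faster than any polynomial grows. Integrating this bound against the probability measure $p_{\mathrm{data}}$ gives an integrable dominating function, with no moment assumption on the data needed for $t\in(0,1)$. The same argument shows the conditional means are well defined. The factor $1/(1-t)$ is why the statement is restricted to $t<1$.
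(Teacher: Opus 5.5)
Your proof is correct and follows essentially the same route as the paper. Both arguments eliminate $\E[\xx_0\mid\xx_t=x]$ via $\xx_0=(\xx_t-t\xx_1)/(1-t)$ to get $v^\star(x,t)=\frac{1}{1-t}\bigl(\E[\xx_1\mid\xx_t=x]-x\bigr)$, and then substitute $\nabla\log q_t(x)=-\frac{1}{t}\E[\xx_1\mid\xx_t=x]$, which comes from the Gaussian conditional score. The only difference is that you justify this marginal-score identity by differentiating under the integral with a dominated-convergence bound, whereas the paper simply asserts $\nabla\log q_t(\xx_t)=\E[\nabla\log q_t^{x_0}(\xx_t)\mid\xx_t]$.
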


\begin{proof}
Let $q^{x_0}_t(x) = q_t(x \mid x_0)$. Because $\xx_t = (1-t) \xx_0 + t \xx_1$ and $\xx_1 \sim \cN(0, I)$, then
\begin{equation}
    q_{t}^{x_0}(\xx_t) = \mathcal{N}((1 - t)x_0, t^2 I)
\end{equation}
and we have:
\begin{align}
    \nabla \log q_{t}^{x_0}(\xx_t) 
    &= -\frac{1}{t} \xx_1
\end{align}
Therefore,
\begin{align}
    \nabla \log q_t(\xx_t) 
    &= \E[\nabla \log q_{t}^{x_0}(\xx_t) \mid \xx_t] = -\frac{1}{t} \E[\xx_1 \mid \xx_t]
\end{align}
Then, for the optimal solution $v^\star(x, t)$, we have:
\begin{align}
    v^\star(x, t) 
    &= \E[\xx_1 - \xx_0 \mid \xx_t = x]\\
    &= \E[\xx_1 - \frac{\xx_t - t\xx_1}{1-t} \mid \xx_t = x] \\
    &= -\frac{1}{1-t} x + \frac{1}{1-t} \E[\xx_1 \mid \xx_t = x] \\
    &= -\frac{1}{1-t} x - \frac{t}{1-t} \nabla \log q_{t}(x)
\end{align}
\end{proof}

\begin{theorem}
    \label{thm:expectation-level-relation1}
    Suppose \assumpref{ass:pw-pathwise-regularity} and \assumpref{ass:pw-endpoint-regularity} hold and let $p_{\mathrm{data}}(\xx_0)$ be the data distribution. Then,
    \begin{align*}
    &\E_{\xx_0 \sim p_{\mathrm{data}}(\xx_0)} [-\log p_0^\theta(\xx_0)] \\
    &\le 
    H(p_{\mathrm{data}})
    +
    \sqrt{\E\bigl[\| v_\theta(\xx_t, t) - (\xx_1 - \xx_0)\|^2_2\bigr] \cdot
    \E\bigl[\| \nabla \log q_t(\xx_t) - \nabla \log p^\theta_t(\xx_t)\|^2_2\bigr] }
    \end{align*}
In the unrestricted, well-specified population setting, the marginal target field $v^\star(\xx_t,t)=\E[\xx_1-\xx_0\mid\xx_t]$ minimizes the flow matching loss and induces the data endpoint distribution, which also minimizes cross-entropy.
\end{theorem}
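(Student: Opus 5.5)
The plan is to lift the pointwise identity \eqref{eq:pw-exact-identity} of \thmref{thm:pw-practical-decomposition} from the conditional path $q_t^{x_0}$ to the marginal path $q_t$. I would then control the resulting cross term by Cauchy--Schwarz and replace the marginal velocity gap by the conditional flow matching loss using \lemref{lem:optimal_solution_fm}.

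\textbf{Step 1: entropy plus KL.} I first write
\[
\E_{p_{\mathrm{data}}}[-\log p_0^\theta(\xx_0)] = H(p_{\mathrm{data}}) + \mathrm{KL}(q_0\|p_0^\theta), \qquad q_0=p_{\mathrm{data}}.
\]
It then suffices to bound the KL term.

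\textbf{Step 2: KL as a time integral.} I use the continuity equations $\partial_t q_t=-\nabla\cdot(q_t u_t)$ and $\partial_t p_t^\theta=-\nabla\cdot(p_t^\theta v_\theta)$, and integrate by parts without boundary terms (\assumpref{ass:pw-pathwise-regularity}, read for the marginal path). Differentiating $\int q_t\log(q_t/p_t^\theta)$ gives
\[
\frac{d}{dt}\mathrm{KL}(q_t\|p_t^\theta)=\E_{q_t}\bigl[\langle u_t-v_\theta,\ \nabla\log q_t-\nabla\log p_t^\theta\rangle\bigr].
\]
Both endpoints at $t=1$ equal $\cN(0,I_d)$, so $\mathrm{KL}(q_1\|p_1^\theta)=0$. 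Integrating over $[\varepsilon,1]$ and letting $\varepsilon\downarrow0$ then yields
\[
\mathrm{KL}(q_0\|p_0^\theta)=\int_0^1\E_{q_t}\bigl[\langle v_\theta-u_t,\ \nabla\log q_t-\nabla\log p_t^\theta\rangle\bigr]\,dt.
\]
Since $t\sim\operatorname{Unif}(0,1)$, this integral is a single expectation over $(t,\xx_t)$.

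\textbf{Step 3: Cauchy--Schwarz.} Applying Cauchy--Schwarz in $L^2(dt\otimes q_t)$ bounds the cross term by
\[
\sqrt{\E\|v_\theta(\xx_t,t)-u_t(\xx_t)\|^2\cdot\E\|\nabla\log q_t(\xx_t)-\nabla\log p_t^\theta(\xx_t)\|^2}.
\]

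\textbf{Step 4: conditional loss dominates.} The orthogonal decomposition in the proof of \lemref{lem:optimal_solution_fm}, with $u_t=v^\star$, gives
\[
\E\|v_\theta-(\xx_1-\xx_0)\|^2=\E\|v_\theta-u_t\|^2+\E\|u_t-(\xx_1-\xx_0)\|^2\ge\E\|v_\theta-u_t\|^2.
\]
Substituting this into the first factor of Step 3 gives the stated inequality.

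\textbf{Closing remark.} If $v_\theta=v^\star$ in the unrestricted, well-specified case, then $p_t^\theta=q_t$ for all $t$ by uniqueness of the continuity equation from the common Gaussian endpoint. The score gap vanishes and the bound becomes $H(p_{\mathrm{data}})$. This is the minimum of the cross-entropy by Gibbs' inequality, and $v^\star$ minimizes the flow matching loss by \lemref{lem:optimal_solution_fm}.

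\textbf{Main obstacle.} The delicate step is the limit $\varepsilon\downarrow0$ in Step 2. Near $t=0$ the marginal score may blow up, and the regularity assumptions are stated pathwise for a fixed $x_0$. I would handle this as \thmref{thm:pw-practical-decomposition} does. First I prove the identity on $[\varepsilon,1]$, where everything is smooth. Then I show $\mathrm{KL}(q_\varepsilon\|p_\varepsilon^\theta)\to\mathrm{KL}(q_0\|p_0^\theta)$, using the joint continuity and polynomial growth of $-\log p_t^\theta$ from \assumpref{ass:pw-endpoint-regularity} together with dominated convergence, assuming $p_{\mathrm{data}}$ has an $m$-th moment. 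The integral up to $t=0$ is then defined by monotone convergence. If it is infinite, the inequality holds trivially.
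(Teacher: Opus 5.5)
Your proof is correct. It shares the paper's skeleton: entropy plus a time-integrated velocity-gap/score-gap cross term, then Cauchy--Schwarz. The difference is where you pass from the conditional target to the marginal one.

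\textbf{The paper's route.} It cites the pointwise decomposition and writes the cross term directly as $\E[\langle v_\theta(\xx_t,t)-(\xx_1-\xx_0),\nabla\log q_t(\xx_t)-\nabla\log p_t^\theta(\xx_t)\rangle]$. This pairs the \emph{conditional} target with the \emph{marginal} score gap. It then applies Cauchy--Schwarz to that pairing, so the conditional flow matching loss appears at once and \lemref{lem:optimal_solution_fm} is used only for the population remark.

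\textbf{How the two relate.} The score gap is a function of $(\xx_t,t)$, so the tower property makes the paper's pairing equal to your $\E[\langle v_\theta-u_t,\nabla\log q_t-\nabla\log p_t^\theta\rangle]$. Your Step 4 could therefore be skipped. Conversely, your route gives a slightly sharper intermediate bound with $\E\|v_\theta-u_t\|^2$, which differs from the CFM loss by the $\theta$-independent constant $\E\|u_t-(\xx_1-\xx_0)\|^2$.

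\textbf{What your Step 2 adds.} The paper's displayed identity does not follow by simply averaging the pointwise identity \eqref{eq:pw-exact-identity} over $\xx_0$. Averaging produces the conditional score $-\xx_1/t$ and the entropy $\E[H(q_\varepsilon^{\xx_0})]=\frac d2\log(2\pi e\varepsilon^2)\to-\infty$, not the marginal score and $H(p_{\mathrm{data}})$. What is really needed is the marginal continuity-equation computation you carry out, with the regularity assumptions read for the marginal path as you state explicitly.

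\textbf{A small loose end at $\varepsilon\downarrow0$.} Dominated convergence with the polynomial growth bound handles the cross-entropy $\E_{q_\varepsilon}[-\log p_\varepsilon^\theta]$. However, $\mathrm{KL}(q_\varepsilon\|p_\varepsilon^\theta)$ also contains $-H(q_\varepsilon)$, and $H(q_\varepsilon)\to H(p_{\mathrm{data}})$ needs a separate argument. It does hold when $p_{\mathrm{data}}$ has finite second moment and finite entropy. Since only an inequality is needed, a simpler route is lower semicontinuity of KL under weak convergence, $\mathrm{KL}(q_0\|p_0^\theta)\le\liminf_{\varepsilon\downarrow0}\mathrm{KL}(q_\varepsilon\|p_\varepsilon^\theta)$. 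Apply Cauchy--Schwarz on $[\varepsilon,1]$ first. The monotone limit is then taken in the two squared-norm integrals rather than in the signed cross term.
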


\begin{proof}
By Cauchy-Schwarz inequality, we have:
\begin{align}
    &(\E_{t, \xx_0, \xx_t}[\langle v_\theta(\xx_t, t) - (\xx_1 - \xx_0), \nabla \log q_t(\xx_t) - \nabla \log p^\theta_t(\xx_t) \rangle])^2 \\
    &\le \E_{t, \xx_0, \xx_t}[\| v_\theta(\xx_t, t) - (\xx_1 - \xx_0)\|^2_2] \cdot \E_{t, \xx_0, \xx_t}[\|\nabla \log q_t(\xx_t) - \nabla \log p^\theta_t(\xx_t)\|^2_2] 
\end{align}

Then, by \thmref{thm:pw-practical-decomposition}, we have:
\begin{align}
    &\E_{\xx_0 \sim p_{\mathrm{data}}(\xx_0)} [-\log p_0^\theta(\xx_0)] \\
    &= 
    H(p_{\mathrm{data}})
    +
    \E
    \bigl[
    \langle v_\theta(\xx_t, t) - (\xx_1 - \xx_0), \nabla \log q_t(\xx_t) - \nabla \log p^\theta_t(\xx_t)\rangle
    \bigr] \\
    &\le 
    H(p_{\mathrm{data}})
    +
    (\E
    \bigl[
    \| v_\theta(\xx_t, t) - (\xx_1 - \xx_0)\|^2_2
    \bigr] \E
    \bigl[
    \| \nabla \log q_t(\xx_t) - \nabla \log p^\theta_t(\xx_t)\|^2_2
    \bigr] )^{1/2}
\end{align}

By \lemref{lem:optimal_solution_fm}, the unrestricted population minimizer is
$v^\star(\xx_t,t)=\E[\xx_1-\xx_0\mid\xx_t]$.
This field generates the prescribed marginal path $q_t$, so $p_t^{\theta^\star}=q_t$ in the well-specified case and the score-gap term vanishes.
Consequently the induced endpoint distribution equals $p_{\mathrm{data}}$, which minimizes cross-entropy.
This population statement does not imply equality of the two objectives, nor identical minimizers in a restricted parameter class.

\end{proof}

\begin{theorem}[Flow matching upper bound under uniform regularity]
\label{thm:flow-matching-upper-bound-uniform-regularity}
Suppose \assumpref{ass:pw-pathwise-regularity} and
\assumpref{ass:pw-endpoint-regularity} hold, and let
$p_1^\theta=\cN(0,I_d)$. Assume
$M_2:=\E_{p_{\mathrm{data}}}[\|\xx_0\|^2]<\infty$ and that there
exist constants $C_0,C_1,C_2<\infty$, independent of $\theta$, such
that, for every $x\in\mathbb{R}^d$ and $t\in[0,1]$,
\begin{equation}
    \|v_\theta(0,t)\|\le C_0,\qquad
    \|\nabla v_\theta(x,t)\|\le C_1,\qquad
    \|\nabla^\top\nabla\cdot v_\theta(x,t)\|\le C_2.
\end{equation}
Assume additionally that the data-path score is uniformly square-integrable,
\begin{equation}
    C_q:=\sup_{t\in(0,1]}\E_{q_t}[\|\nabla\log q_t(\xx_t)\|^2]<\infty.
\end{equation}
Then, there exists a constant $C_s<\infty$, independent of $\theta$, such that:
\begin{equation}
    \label{eq:expect_relation_nll_fm}
    \E_{\xx_0\sim p_{\mathrm{data}}}
    [-\log p_0^\theta(\xx_0)]
    \le
    \E\bigl[\|v_\theta(\xx_t,t)-(\xx_1-\xx_0)\|^2\bigr]
    +H(p_{\mathrm{data}})+\frac{C_s}{4}.
\end{equation}
In the unrestricted, well-specified population setting, the marginal target field minimizes both sides.
\end{theorem}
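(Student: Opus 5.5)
We start from the exact identity behind \thmref{thm:expectation-level-relation1}. Averaging \eqref{eq:pw-exact-identity} over $\xx_0\sim p_{\mathrm{data}}$ and letting $\varepsilon\downarrow0$, with the boundary term $\mathcal B_\varepsilon$ vanishing by \assumpref{ass:pw-endpoint-regularity} and dominated convergence (the polynomial growth bound together with $M_2<\infty$), gives
\[
\E_{p_{\mathrm{data}}}[-\log p_0^\theta(\xx_0)] = H(p_{\mathrm{data}}) + \E\bigl[\langle \Delta v,\ \Delta s\rangle\bigr],
\]
where $\Delta v=v_\theta(\xx_t,t)-(\xx_1-\xx_0)$ and $\Delta s=\nabla\log q_t(\xx_t)-\nabla\log p_t^\theta(\xx_t)$. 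This is the identity already used in the proof of \thmref{thm:expectation-level-relation1}.

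Instead of Cauchy--Schwarz, we apply Young's inequality pointwise: $\langle a,b\rangle\le \|a\|^2+\tfrac14\|b\|^2$. This yields
\[
\E_{p_{\mathrm{data}}}[-\log p_0^\theta(\xx_0)]\le H(p_{\mathrm{data}})+\E\|\Delta v\|^2+\tfrac14\E\|\Delta s\|^2 .
\]
The whole proof then reduces to showing $\E\|\Delta s\|^2\le C_s$ uniformly in $\theta$. We bound it by $2\E\|\nabla\log q_t\|^2+2\E\|\nabla\log p_t^\theta(\xx_t)\|^2$. The first term is at most $2C_q$ by the score assumption. If the identity is used in its conditional form, with $\nabla\log q_t^{x_0}$ in place of $\nabla\log q_t$, the first term is instead $\E\|\xx_1\|^2/t^2$, which is not integrable near $t=0$. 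So we will work with the marginal score, which the expectation-level identity permits.

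The main obstacle is a $\theta$-uniform bound on the model score $\nabla\log p_t^\theta$ evaluated along the data path. Along ODE characteristics $X_s$ of $v_\theta$, the model score evolves by
\[
\tfrac{d}{ds}\nabla\log p_s^\theta(X_s) = -(\nabla v_\theta)^\top\nabla\log p_s^\theta - \nabla(\nabla\cdot v_\theta).
\]
Integrating backward from $t=1$, where $p_1^\theta=\cN(0,I_d)$ and so the score is $-x$, Grönwall's inequality with $\|\nabla v_\theta\|\le C_1$ and $\|\nabla\nabla\cdot v_\theta\|\le C_2$ gives
\[
\|\nabla\log p_t^\theta(x)\|\le e^{C_1}\bigl(\|X_1(x)\|+C_2\bigr).
\]
The flow map is Lipschitz with constant $e^{C_1}$, and $\|v_\theta(0,t)\|\le C_0$ controls its displacement. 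Together these give $\|X_1(x)\|\le e^{C_1}(\|x\|+C_0)$. Squaring and taking expectations under $q_t$ leaves $\E\|\xx_t\|^2\le 2M_2+2d$, which is uniform in $t$ and $\theta$.

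Collecting the constants gives $C_s:=2C_q+2K(C_0,C_1,C_2,M_2,d)$, independent of $\theta$, and this proves \eqref{eq:expect_relation_nll_fm}. For the final sentence, we argue exactly as in \thmref{thm:expectation-level-relation1}. By \lemref{lem:optimal_solution_fm}, $v^\star$ minimizes the flow-matching term. In the well-specified case, $v^\star$ generates $q_t$, so its endpoint law is $p_{\mathrm{data}}$, which minimizes the cross-entropy on the left.
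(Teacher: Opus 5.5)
Your proposal is correct and follows essentially the paper's own proof. You use the same Gr\"onwall-along-characteristics bound on the model score, with $p_1^\theta=\cN(0,I_d)$ as the terminal condition, together with the linear-growth control of the flow, the assumed constant $C_q$ for the data score, and the second-moment bound on $\xx_t$, all plugged into the marginal-score identity behind \thmref{thm:expectation-level-relation1}. The only difference is cosmetic: you apply Young's inequality $\langle a,b\rangle\le\|a\|^2+\tfrac14\|b\|^2$ pointwise, whereas the paper first applies Cauchy--Schwarz and then Young with $\kappa=1$; both give the same constant $C_s/4$.
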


\begin{proof}
Let $s^\theta_t(x) = \nabla \log p^\theta_t(x)$, then we consider the continuity equation:
\begin{align}
    \partial_t p^\theta_t(x) + \nabla \cdot (p^\theta_t(x) v_\theta(x, t)) &= 0 \\
    \partial_t p^\theta_t(x) + \nabla p^\theta_t(x) \cdot v_\theta(x, t) + p^\theta_t(x) \nabla \cdot v_\theta(x, t) &= 0 \\
    \partial_t \log p^\theta_t(x) + v_\theta(x, t) \cdot \nabla \log p^\theta_t(x)  + \nabla \cdot v_\theta(x, t) &= 0
\end{align}

Taking $\nabla$ on both sides, we get:
\begin{align}
    \partial_t \nabla\log p^\theta_t(x) + \nabla v_\theta(x, t) \cdot \nabla \log p^\theta_t(x) + v_\theta(x, t) \nabla^\top \nabla \log p^\theta_t(x)  + \nabla^\top \nabla \cdot v_\theta(x, t) &= 0 \\
    \partial_t s^\theta_t(x) + \nabla v_\theta(x, t) \cdot s^\theta_t(x) + v_\theta(x, t) \nabla^\top s^\theta_t(x)  + \nabla^\top \nabla \cdot v_\theta(x, t) &= 0
\end{align}
Let $X_t$ denote the trajectory, then we have:
\begin{align}
    \frac{d}{dt} s^\theta_t(X_t) + \nabla v_\theta(X_t, t) \cdot s^\theta_t(X_t)  + \nabla^\top \nabla \cdot v_\theta(X_t, t) = 0 \\
    \frac{d}{dt} s^\theta_t(X_t) = - \nabla v_\theta(X_t, t) \cdot s^\theta_t(X_t)  - \nabla^\top \nabla \cdot v_\theta(X_t, t) \label{eq:score_equation_linear}
\end{align}

Therefore, we need to control $\|\nabla v_\theta\| $ and $\| \nabla^\top \nabla \cdot v_\theta \| $. We assume $v_\theta$ is smooth and constrained in a compact set, then we can have:
\begin{align}
    \|\nabla v_\theta\| \le C_1 \\
    \| \nabla^\top \nabla \cdot v_\theta \| \le C_2
\end{align}

We further assume that $\|v_\theta(0,t)\|\le C_0$ uniformly over the parameter set. Then, we define $g(\lambda) = v_\theta(\lambda x, t)$, $\lambda \in [0,1]$, and we have:
\begin{align}
    g(1) - g(0) &= \int_0^1 \frac{d}{d\lambda} g(\lambda) d\lambda
\end{align}
This is actually:
\begin{align}
    v_\theta(x, t) - v_\theta(0, t) = \int_0^1 \nabla v_\theta(\lambda x, t) x d\lambda
\end{align}
Therefore, we have:
\begin{align}
    \|v_\theta(x, t)\| &= \| v_\theta(0, t) + \int_0^1 \nabla v_\theta(\lambda x, t) x d\lambda \| \\
    &\le \| v_\theta(0, t)\| + \| \int_0^1 \nabla v_\theta(\lambda x, t) x d\lambda \| \\
    &\le C_0 + \int_0^1 \|\nabla v_\theta(\lambda x, t)\| \|x\| d\lambda \\
    &\le C_0 + C_1 \|x\|
\end{align}

Let $X_r$ be the characteristic starting from $X_t=x$, namely,
\begin{equation}
    X_r=x+\int_t^r v_\theta(X_s,s)\,ds,
    \qquad r\in[t,1].
\end{equation}
Using the linear-growth bound above, we obtain
\begin{align}
    \|X_r\|
    &\le \|x\| + \int_t^r \|v_\theta(X_s, s)\|\,ds \\
    &\le \|x\| + \int_t^r (C_0 + C_1 \|X_s\|)\,ds.
\end{align}
Therefore, Gr\"onwall's inequality gives
\begin{equation}
    \|X_r\|
    \le
    e^{C_1(r-t)}\|x\|
    +\frac{C_0}{C_1}\bigl(e^{C_1(r-t)}-1\bigr),
    \qquad r\in[t,1],
    \label{eq:linear_growth_bound}
\end{equation}
Since $p_1^\theta=\cN(0,I)$, we have $s_1^\theta(x)=-x$. Then, by \eqref{eq:score_equation_linear}, we have:
\begin{align}
    \frac{d}{dt} s^\theta_t(X_t) &= - \nabla v_\theta(X_t, t) \cdot s^\theta_t(X_t)  - \nabla^\top \nabla \cdot v_\theta(X_t, t) \\
    \int^1_t d s^\theta_t(X_t) &= \int^1_t (- \nabla v_\theta(X_t, t) \cdot s^\theta_t(X_t)  - \nabla^\top \nabla \cdot v_\theta(X_t, t)) dt \\
    s^\theta_1(X_1) - s^\theta_t(X_t) &= \int^1_t (- \nabla v_\theta(X_t, t) \cdot s^\theta_t(X_t)  - \nabla^\top \nabla \cdot v_\theta(X_t, t)) dt \\
    \|s_t^\theta(X_t)\|
    &\le \|s_1^\theta(X_1)\|
    +\int_t^1
    \left(C_1\|s_r^\theta(X_r)\|+C_2\right)\,dr \\
    &=\|X_1\|
    +\int_t^1
    \left(C_1\|s_r^\theta(X_r)\|+C_2\right)\,dr.
\end{align}
Applying Gr\"onwall's inequality once more and using \eqref{eq:linear_growth_bound} gives
\begin{align}
    \|s_t^\theta(x)\|
    &\le
    e^{C_1(1-t)}\|X_1\|
    +\frac{C_2}{C_1}\bigl(e^{C_1(1-t)}-1\bigr) \\
    &\le K_0+K_1\|x\|,
\end{align}
where $K_0,K_1<\infty$ depend only on $C_0,C_1,C_2$ and are
independent of $\theta$.  If
$M_2:=\E_{p_{\mathrm{data}}}[\|\xx_0\|^2]<\infty$, then
\begin{align}
    \E[\|\xx_t\|^2] &= \E[\|(1-t)\xx_0 + t\xx_1\|^2] \\
    &= \E[(1-t)^2 \|\xx_0\|^2 + t^2 \|\xx_1\|^2 + 2t(1-t) \langle \xx_0, \xx_1 \rangle] \\
    &= (1-t)^2 \E[\|\xx_0\|^2] + t^2 \E[\|\xx_1\|^2] + 2t(1-t) \E[\langle \xx_0, \xx_1 \rangle] \\
    &= (1-t)^2 \E[\|\xx_0\|^2] + t^2 \E[\|\xx_1\|^2] + 2t(1-t) \langle \E[\xx_0], \E[\xx_1] \rangle] \quad \text{(by $\xx_0 \perp \xx_1$)} \\
    &= (1-t)^2 \E \|\xx_0\|^2 + t^2 \E \|\xx_1\|^2 \quad \text{(by $\E[\xx_1] = 0$)} \\
    &= (1-t)^2 M_2 + t^2 d \quad \text{(by $\xx_1 \sim \cN(0,I_d)$)} \\
    &\le M_2 + d \quad \text{(by $t \in [0,1]$)}
\end{align}
and consequently
\begin{equation}
    \sup_{\theta,t\in[0,1]}
    \E_{q_t}[\|\nabla\log p_t^\theta(\xx_t)\|^2]
    \le
    2K_0^2+2K_1^2(M_2+d)
    =:C_p<\infty.
\end{equation}

It remains to control the score of the data path.  By the identity in \lemref{lem:optimal_solution_fm_score},
\begin{equation}
    \nabla\log q_t(\xx_t)
    =\E\left[-\frac{\xx_1}{t}\,\middle|\,\xx_t\right].
\end{equation}
Hence, on every truncated interval $t\in[\varepsilon,1]$,
\begin{align}
    \E_{q_t}[\|\nabla\log q_t(\xx_t)\|^2]
    &= \E_{q_t}[\| -\frac{1}{t}\E[\xx_1 \mid \xx_t] \|^2] \\
    &= \frac{1}{t^2} \E_{q_t}[\| \E[\xx_1 \mid \xx_t] \|^2] \\
    &\le \frac{1}{t^2} \E_{q_t}[\E[\| \xx_1 \|^2\mid \xx_t]] \\
    &= \frac{1}{t^2} \E[\| \xx_1 \|^2] \quad \text{(by tower property)} \\
    &\le \frac{d}{\varepsilon^2}.
\end{align}
For the full interval, we use the endpoint integrability condition in the theorem,
\begin{equation}
    C_q:=\sup_{t\in(0,1]}
    \E_{q_t}[\|\nabla\log q_t(\xx_t)\|^2]<\infty.
\end{equation}
Combining the last two bounds yields
\begin{equation}
    \E\bigl[
    \|\nabla\log q_t(\xx_t)-\nabla\log p_t^\theta(\xx_t)\|^2
    \bigr]
    \le 2(C_q+C_p)=:C_s<\infty,
\end{equation}
uniformly over $\theta$.  Substituting this estimate into
\thmref{thm:expectation-level-relation1}, and then using Young's
inequality, gives, for every $\kappa>0$,
\begin{align}
    \E_{\xx_0\sim p_{\mathrm{data}}}
    [-\log p_0^\theta(\xx_0)]
    &\le
    H(p_{\mathrm{data}})
    +\sqrt{C_s\,
    \E[\|v_\theta(\xx_t,t)-(\xx_1-\xx_0)\|^2]} \\
    &\le
    \kappa\,
    \E[\|v_\theta(\xx_t,t)-(\xx_1-\xx_0)\|^2]
    +H(p_{\mathrm{data}})+\frac{C_s}{4\kappa}.
\end{align}
Taking $\kappa=1$ proves the bound. The population-minimizer statement follows from \thmref{thm:expectation-level-relation1} under the stated well-specified interpretation.
\end{proof}

\section{Theoretical Proofs and Results}

\subsection{Idealized Posteriors and Empirical Source Distributions}
\label{app:source_distributions}

The main text defines $\pi^+$ and $\pi^-$ as posterior target distributions induced by the binary optimality variable. In offline preference optimization, however, the data available for training are usually fixed empirical source distributions $\hat{\pi}^+$ and $\hat{\pi}^-$ constructed by a separate pipeline, such as best-of-$N$ winner/loser selection from a frozen generator. The formal identities in the main text only require that the source distributions be fixed during optimization. The practical modeling question is therefore whether the offline construction is a faithful enough surrogate for the idealized targets. In our experiments, the data construction pipeline is frozen, so the resulting preferred and dispreferred distributions remain stationary throughout training.

\subsection{Proof of Theorem~\ref{thm:rlhf_kl}}
\label{app:rlhf_kl}

\begin{proof}
For each context $c$, \eqref{eq:reward} gives
\[
r(\xx_0, c)
=
\frac{1}{2\omega}
\left[
\log \pi^+(\xx_0 \mid c)
- \log \pi^-(\xx_0 \mid c)
+ \log \frac{p_{\pi_{\text{ref}}}(o=1\mid c)}{p_{\pi_{\text{ref}}}(o=0\mid c)}
\right].
\]
Taking expectation under $\pi_\theta(\cdot \mid c)$ yields
\begin{align*}
\E_{\xx_0 \sim \pi_\theta(\cdot \mid c)} [r(\xx_0, c)]
=&
\frac{1}{2\omega}
\E_{\xx_0 \sim \pi_\theta(\cdot \mid c)}
\big[
\log \pi^+(\xx_0 \mid c) - \log \pi^-(\xx_0 \mid c)
\big] \\
&\quad
+ \frac{1}{2\omega}
\log \frac{p_{\pi_{\text{ref}}}(o=1\mid c)}{p_{\pi_{\text{ref}}}(o=0\mid c)}.
\end{align*}
The second term depends only on $c$, hence is irrelevant for optimization over $\theta$. For the first term,
\begin{align*}
\E_{\pi_\theta}[\log \pi^+]
&= -\mathcal{D}_{KL}(\pi_\theta \| \pi^+) - \mathcal{H}(\pi_\theta), \\
\E_{\pi_\theta}[\log \pi^-]
&= -\mathcal{D}_{KL}(\pi_\theta \| \pi^-) - \mathcal{H}(\pi_\theta),
\end{align*}
where $\mathcal{H}(\pi_\theta)$ is the conditional differential entropy of $\pi_\theta(\cdot \mid c)$. Subtracting the two identities cancels the entropy term:
\[
\E_{\pi_\theta}[\log \pi^+ - \log \pi^-]
=
- \mathcal{D}_{KL}(\pi_\theta \| \pi^+) + \mathcal{D}_{KL}(\pi_\theta \| \pi^-).
\]
Taking expectation over $c$ and dropping the positive constant $\frac{1}{2\omega}$ proves the claim.
\end{proof}

\subsection{An Idealized Interpretation of Interpolated Vector Fields}
\label{app:interpolation_vector_field}

Proposition 1 of \cite{holderrieth2025introduction} gives the following relation when a velocity field and a score describe the same Gaussian marginal path (see also \lemref{lem:optimal_solution_fm_score}):
\begin{equation}
    u_t(\xx) = -\frac{t}{1-t} \nabla \log p_t(\xx) - \frac{1}{1-t} \xx.
\end{equation}
For an idealized interpretation, suppose that both $v_\theta$ and $v_{\mathrm{old}}$ satisfy this compatibility condition:
\begin{align}
    v_\theta(\xx_t, t) &= -\frac{t}{1-t} \nabla \log p^\theta_t(\xx_t) - \frac{1}{1-t} \xx_t, \\
    v_{\mathrm{old}}(\xx_t, t) &= -\frac{t}{1-t} \nabla \log p^{\mathrm{old}}_t(\xx_t) - \frac{1}{1-t} \xx_t.
\end{align}
Then the positive interpolation satisfies
\begin{align}
    v^+(\xx_t, t) &= v_{\mathrm{old}}(\xx_t, t) + \beta (v_\theta(\xx_t, t) - v_{\mathrm{old}}(\xx_t, t)) \\
    &= -\frac{t}{1-t} \nabla \log\left[ p^{\mathrm{old}}_t(\xx_t) \left(\frac{p^{\theta}_t(\xx_t)}{p^{\mathrm{old}}_t(\xx_t)}\right)^\beta \right] - \frac{1}{1-t} \xx_t.
\end{align}
Similarly,
\begin{align}
    v^-(\xx_t, t) &= v_{\mathrm{old}}(\xx_t, t) - \beta (v_\theta(\xx_t, t) - v_{\mathrm{old}}(\xx_t, t)) \\
    &= -\frac{t}{1-t} \nabla \log\left[ p^{\mathrm{old}}_t(\xx_t) \left(\frac{p^{\mathrm{old}}_t(\xx_t)}{p^{\theta}_t(\xx_t)}\right)^{\beta} \right] - \frac{1}{1-t} \xx_t.
\end{align}
Thus, at each fixed $t$, $v^+$ and $v^-$ admit a score-field interpretation analogous to the CFG~\citep{luo2022understanding,ho2022classifier} interpolation $p_t(\xx_t, \emptyset)\left( \frac{p_t(\xx_t, c)}{p_t(\xx_t, \emptyset)} \right)^\beta$ over condition space.

\subsection{Proof of Theorem~\ref{thm:flowcpo_bound}}
\label{app:fm_bound}

\begin{proof}[Proof of \thmref{thm:flowcpo_bound}]
Expanding each KL term in \eqref{eq:op_kl} into cross-entropy plus entropy gives
\begin{align*}
\mathcal{L}_{\text{Offline}}(\theta)
=&
\E_c \Big[
\E_{\xx_0^+ \sim \pi^+(\cdot \mid c)}[-\log \pi_\theta^+(\xx_0^+ \mid c)]
+ \lambda \E_{\xx_0^- \sim \pi^-(\cdot \mid c)}[-\log \pi_\theta^-(\xx_0^- \mid c)]
\Big]
+ C_{\mathrm{ent}},
\end{align*}
where $C_{\mathrm{ent}} = -\E_c[\mathcal{H}(\pi^+(\cdot \mid c)) + \lambda \mathcal{H}(\pi^-(\cdot \mid c))]$ is independent of $\theta$.

From \thmref{thm:flow-matching-upper-bound-uniform-regularity}, we have:
\begin{equation}
    \E_{\xx_0\sim p_{\mathrm{data}}}[-\log p_0^\theta(\xx_0)]
    \le
    \E\bigl[\|v_\theta(\xx_t,t)-(\xx_1-\xx_0)\|^2\bigr] + C.
\end{equation}
Therefore, we have:
\begin{align*}
    \E_{\xx_0^+ \sim \pi^+(\cdot \mid c)}[-\log \pi_\theta^+(\xx_0^+ \mid c)]
    &\le
    C_+(c) + \E_{\substack{\xx_0^+ \sim \pi^+(\cdot \mid c) \\ t,\, \epsilon}}[\|\mu_\theta(\xx_t^+, t, c) - u_t(\xx_t^+ \mid \xx_0^+)\|_2^2], \\
    \E_{\xx_0^- \sim \pi^-(\cdot \mid c)}[-\log \pi_\theta^-(\xx_0^- \mid c)]
    &\le
    C_-(c) + \E_{\substack{\xx_0^- \sim \pi^-(\cdot \mid c) \\ t,\, \epsilon}}[\|\nu_\theta(\xx_t^-, t, c) - u_t(\xx_t^- \mid \xx_0^-)\|_2^2],
    \end{align*}
Then, 
\begin{align*}
    \mathcal{L}_{\text{Offline}}(\theta)
    &\le 
    \E_{\substack{c, \xx_0^+ \sim \pi^+(\cdot \mid c) \\ t,\, \epsilon}}[\|\mu_\theta(\xx_t^+, t, c) - u_t(\xx_t^+ \mid \xx_0^+)\|_2^2] \\
    &\quad+
    \lambda \cdot \E_{\substack{c, \xx_0^- \sim \pi^-(\cdot \mid c) \\ t,\, \epsilon}}[\|\nu_\theta(\xx_t^-, t, c) - u_t(\xx_t^- \mid \xx_0^-)\|_2^2] + Constant.
\end{align*}
This establishes an upper-bound surrogate for \eqref{eq:op_kl}. If the two branch fields can simultaneously realize their population targets, then both branch endpoint distributions match their corresponding sources.
\end{proof}

\subsection{Relation between FlowCPO and Simplified FlowDPO}
\label{app:flowdpo_ema_connection}

We isolate the contrastive core of FlowDPO by omitting its sigmoid weighting and frozen-reference terms. Using the preferred/dispreferred notation from \eqref{eq:final_loss}, the simplified objective is
\begin{equation}
    \min_\theta \mathcal L_{\mathrm{FlowDPO}}^{\mathrm{simple}}(\theta) = \E\Big[
        \|v_\theta(\xx_t^w,t,c)-(\epsilon-\xx_0^w)\|_2^2
        -\|v_\theta(\xx_t^l,t,c)-(\epsilon-\xx_0^l)\|_2^2
    \Big].
    \label{eq:simplified_flowdpo_margin}
\end{equation}
It captures the per-pair contrastive gradient direction, without retaining the full FlowDPO objective. Below, $\E$ averages over the same fixed offline pairs and sampled $t,\epsilon$ as in \eqref{eq:final_loss}. We set $\lambda=1$ to match the unweighted preference comparison above.

\begin{theorem}[Relation between FlowCPO and Simplified FlowDPO]
\label{thm:flowcpo_simple_flowdpo}
Let $\beta>0$, $\lambda=1$, and hold $v_{\mathrm{old}}$ fixed during each gradient step. Assuming the displayed expectations are finite, the FlowCPO loss satisfies
\begin{equation}
    \begin{aligned}
        \mathcal L_{\text{\normalfont\method}}(\theta)
        ={}&\beta\mathcal L_{\mathrm{FlowDPO}}^{\mathrm{simple}}(\theta)\\
        &+\beta(\beta-1)\E\big[
            \|v_\theta(\xx_t^w,t,c)-v_{\mathrm{old}}(\xx_t^w,t,c)\|_2^2
          \big]\\
        &+\beta(\beta+1)\E\big[
            \|v_\theta(\xx_t^l,t,c)-v_{\mathrm{old}}(\xx_t^l,t,c)\|_2^2
          \big]+C,
    \end{aligned}
    \label{eq:flowcpo_simple_flowdpo}
\end{equation}
where $C$ is independent of $\theta$ for the current detached $v_{\mathrm{old}}$.
\end{theorem}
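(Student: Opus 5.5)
The proof is a pointwise algebraic identity followed by taking expectations. No result from \appref{app:relation_loglike_fm} is needed; the only ingredients are the definitions of $\mu_\theta,\nu_\theta$ and the polarization identity for $\|\cdot\|_2^2$.

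Fix a sample $(c,\xx_0^w,\xx_0^l,t,\epsilon)$ and write $u^w=\epsilon-\xx_0^w$. Introduce the two vectors
\[
a^w:=v_{\mathrm{old}}(\xx_t^w,t,c)-u^w,\qquad d^w:=v_\theta(\xx_t^w,t,c)-v_{\mathrm{old}}(\xx_t^w,t,c),
\]
and define $a^l,d^l$ analogously on the loser branch. Then $\mu_\theta-u^w=a^w+\beta d^w$, $\nu_\theta-u^l=a^l-\beta d^l$, and $v_\theta-u=a+d$ on either branch. The key step is to eliminate the cross term $\langle a,d\rangle$ using
\[
2\langle a,d\rangle=\|a+d\|_2^2-\|a\|_2^2-\|d\|_2^2 ,
\]
which expresses it through the plain flow matching residual $\|v_\theta-u\|_2^2$.

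For the winner branch, $\|a^w+\beta d^w\|_2^2=\|a^w\|_2^2+2\beta\langle a^w,d^w\rangle+\beta^2\|d^w\|_2^2$. Substituting the identity gives
\[
\|a^w+\beta d^w\|_2^2=(1-\beta)\|a^w\|_2^2+\beta\|v_\theta-u^w\|_2^2+\beta(\beta-1)\|d^w\|_2^2 .
\]
For the loser branch, the sign of the cross term flips, which gives
\[
\|a^l-\beta d^l\|_2^2=(1+\beta)\|a^l\|_2^2-\beta\|v_\theta-u^l\|_2^2+\beta(\beta+1)\|d^l\|_2^2 .
\]
Adding the two lines with $\lambda=1$ yields three groups of terms. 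First, $\beta$ times the integrand of $\mathcal L^{\mathrm{simple}}_{\mathrm{FlowDPO}}$ in \eqref{eq:simplified_flowdpo_margin}. Second, the two EMA-drift penalties $\beta(\beta-1)\|d^w\|_2^2$ and $\beta(\beta+1)\|d^l\|_2^2$. Third, the residual $(1-\beta)\|a^w\|_2^2+(1+\beta)\|a^l\|_2^2$.

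Finally, take expectations over the offline pairs and over $t,\epsilon$. Linearity of expectation requires each term to be integrable. This is exactly why the statement assumes the displayed expectations are finite. Since $\|a^w\|_2^2$ and $\|a^l\|_2^2$ are integrable, $\|d\|_2^2\le 2\|v_\theta-u\|_2^2+2\|a\|_2^2$ shows that all the pieces are integrable.

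The residual expectation $C=(1-\beta)\E\|a^w\|_2^2+(1+\beta)\E\|a^l\|_2^2$ involves only $v_{\mathrm{old}}$ and the data. Because $v_{\mathrm{old}}$ is detached and held fixed during the step, $C$ is independent of $\theta$.

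The only point needing care is this last bookkeeping step, namely justifying the integrability and the $\theta$-independence of $C$. The algebra itself is routine once the decomposition into $a$ and $d$ is chosen.
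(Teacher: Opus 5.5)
Your proof is correct and follows the paper's argument essentially line for line. It uses the same splitting $\mu_\theta-u^w=a^w+\beta d^w$ and $\nu_\theta-u^l=a^l-\beta d^l$, the same polarization identity to replace $2\langle a,d\rangle$ by $\|a+d\|_2^2-\|a\|_2^2-\|d\|_2^2$, and arrives at the same coefficients $(\beta,\beta(\beta-1),1-\beta)$ and $(-\beta,\beta(\beta+1),1+\beta)$, with the $v_{\mathrm{old}}$-residuals collected into $C$.

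Your integrability remark goes beyond the paper, but there you assume rather than derive that $\|a^w\|_2^2$ and $\|a^l\|_2^2$ are integrable. This follows from the hypotheses via $\|a^w\|_2^2\le 2\|\mu_\theta-u^w\|_2^2+2\beta^2\|d^w\|_2^2$, and likewise for $a^l$ with $\nu_\theta$. It uses the finiteness of the FlowCPO loss (both branch terms are nonnegative) and of the two displayed $\|v_\theta-v_{\mathrm{old}}\|_2^2$ expectations.
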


\begin{proof}
The preferred branch uses $\xx_t^w$ and target $u_t(\xx_t^w\mid\xx_0^w)=\epsilon-\xx_0^w$, while the dispreferred branch uses $\xx_t^l$ and target $u_t(\xx_t^l\mid\xx_0^l)=\epsilon-\xx_0^l$. We expand each branch separately.

\textbf{Preferred branch.}
Substituting $\mu_\theta=(1-\beta)v_{\mathrm{old}}+\beta v_\theta$ and collecting terms gives
\begin{align*}
    &\mu_\theta(\xx_t^w,t,c)-u_t(\xx_t^w\mid\xx_0^w)\\
    &\quad=v_{\mathrm{old}}(\xx_t^w,t,c)-u_t(\xx_t^w\mid\xx_0^w)
        +\beta\big[v_\theta(\xx_t^w,t,c)-v_{\mathrm{old}}(\xx_t^w,t,c)\big].
\end{align*}
Expanding the squared norm yields
\begin{align*}
    &\|\mu_\theta(\xx_t^w,t,c)-u_t(\xx_t^w\mid\xx_0^w)\|_2^2\\
    &\quad=\|v_{\mathrm{old}}(\xx_t^w,t,c)-u_t(\xx_t^w\mid\xx_0^w)\|_2^2\\
    &\qquad+\beta^2\|v_\theta(\xx_t^w,t,c)-v_{\mathrm{old}}(\xx_t^w,t,c)\|_2^2\\
    &\qquad+2\beta\Big\langle v_{\mathrm{old}}(\xx_t^w,t,c)-u_t(\xx_t^w\mid\xx_0^w),
        v_\theta(\xx_t^w,t,c)-v_{\mathrm{old}}(\xx_t^w,t,c)\Big\rangle.
\end{align*}
The residual of the current model is the sum of the EMA residual and $v_\theta(\xx_t^w,t,c)-v_{\mathrm{old}}(\xx_t^w,t,c)$. Expanding the squared norm of this sum and solving for the cross term gives
\begin{align*}
    &2\Big\langle v_{\mathrm{old}}(\xx_t^w,t,c)-u_t(\xx_t^w\mid\xx_0^w),
        v_\theta(\xx_t^w,t,c)-v_{\mathrm{old}}(\xx_t^w,t,c)\Big\rangle\\
    &\quad=\|v_\theta(\xx_t^w,t,c)-u_t(\xx_t^w\mid\xx_0^w)\|_2^2\\
    &\qquad-\|v_{\mathrm{old}}(\xx_t^w,t,c)-u_t(\xx_t^w\mid\xx_0^w)\|_2^2
        -\|v_\theta(\xx_t^w,t,c)-v_{\mathrm{old}}(\xx_t^w,t,c)\|_2^2.
\end{align*}
Substituting this identity into the expansion gives coefficients $\beta$, $\beta^2-\beta=\beta(\beta-1)$, and $1-\beta$ for the three squared errors, respectively:
\begin{align*}
    \|\mu_\theta(\xx_t^w,t,c)-u_t(\xx_t^w\mid\xx_0^w)\|_2^2
    &=\beta\|v_\theta(\xx_t^w,t,c)-u_t(\xx_t^w\mid\xx_0^w)\|_2^2\\
    &\quad+\beta(\beta-1)\|v_\theta(\xx_t^w,t,c)-v_{\mathrm{old}}(\xx_t^w,t,c)\|_2^2\\
    &\quad+(1-\beta)\|v_{\mathrm{old}}(\xx_t^w,t,c)-u_t(\xx_t^w\mid\xx_0^w)\|_2^2.
\end{align*}

\textbf{Dispreferred branch.}
Substituting $\nu_\theta=(1+\beta)v_{\mathrm{old}}-\beta v_\theta$ now gives
\begin{align*}
    &\nu_\theta(\xx_t^l,t,c)-u_t(\xx_t^l\mid\xx_0^l)\\
    &\quad=v_{\mathrm{old}}(\xx_t^l,t,c)-u_t(\xx_t^l\mid\xx_0^l)
        -\beta\big[v_\theta(\xx_t^l,t,c)-v_{\mathrm{old}}(\xx_t^l,t,c)\big].
\end{align*}
The minus sign changes the sign of the cross term, while the coefficient of the squared distance remains $\beta^2$:
\begin{align*}
    &\|\nu_\theta(\xx_t^l,t,c)-u_t(\xx_t^l\mid\xx_0^l)\|_2^2\\
    &\quad=\|v_{\mathrm{old}}(\xx_t^l,t,c)-u_t(\xx_t^l\mid\xx_0^l)\|_2^2\\
    &\qquad+\beta^2\|v_\theta(\xx_t^l,t,c)-v_{\mathrm{old}}(\xx_t^l,t,c)\|_2^2\\
    &\qquad-2\beta\Big\langle v_{\mathrm{old}}(\xx_t^l,t,c)-u_t(\xx_t^l\mid\xx_0^l),
        v_\theta(\xx_t^l,t,c)-v_{\mathrm{old}}(\xx_t^l,t,c)\Big\rangle.
\end{align*}
Evaluating the same norm identity on the dispreferred input gives
\begin{align*}
    &2\Big\langle v_{\mathrm{old}}(\xx_t^l,t,c)-u_t(\xx_t^l\mid\xx_0^l),
        v_\theta(\xx_t^l,t,c)-v_{\mathrm{old}}(\xx_t^l,t,c)\Big\rangle\\
    &\quad=\|v_\theta(\xx_t^l,t,c)-u_t(\xx_t^l\mid\xx_0^l)\|_2^2\\
    &\qquad-\|v_{\mathrm{old}}(\xx_t^l,t,c)-u_t(\xx_t^l\mid\xx_0^l)\|_2^2
        -\|v_\theta(\xx_t^l,t,c)-v_{\mathrm{old}}(\xx_t^l,t,c)\|_2^2.
\end{align*}
Substituting this identity gives coefficients $-\beta$, $\beta^2+\beta=\beta(\beta+1)$, and $1+\beta$:
\begin{align*}
    \|\nu_\theta(\xx_t^l,t,c)-u_t(\xx_t^l\mid\xx_0^l)\|_2^2
    &=-\beta\|v_\theta(\xx_t^l,t,c)-u_t(\xx_t^l\mid\xx_0^l)\|_2^2\\
    &\quad+\beta(\beta+1)\|v_\theta(\xx_t^l,t,c)-v_{\mathrm{old}}(\xx_t^l,t,c)\|_2^2\\
    &\quad+(1+\beta)\|v_{\mathrm{old}}(\xx_t^l,t,c)-u_t(\xx_t^l\mid\xx_0^l)\|_2^2.
\end{align*}
Take expectations over the preferred and dispreferred samples, respectively, and add. The first terms give $\beta\mathcal L_{\mathrm{FlowDPO}}^{\mathrm{simple}}$, the second terms give the two corrections in \eqref{eq:flowcpo_simple_flowdpo}, and the last terms form $C$. The latter use their respective sample targets and are independent of $\theta$ because the data and $v_{\mathrm{old}}$ are fixed during differentiation.
\end{proof}

\paragraph{Interpretation.}
The theorem expresses FlowCPO as $\beta$ times simplified FlowDPO, plus two corrections determined by the distance between $v_\theta$ and $v_{\mathrm{old}}$ and a term $C$ that is constant during each gradient step. At $\beta=1$, only the quadratic constraint on dispreferred inputs remains, with coefficient $2$. For $0<\beta<1$, including our default $\beta=0.5$, the preferred correction has a negative coefficient, so the two corrections should not both be described as positive regularizers. The ablation analysis of $\beta$ is in \figref{fig:ablation_studies}.

\paragraph{Why use EMA for interpolation?}
We use EMA so that the interpolation reference follows the model being trained. \thmref{thm:flowcpo_simple_flowdpo} shows that the two correction terms depend on $v_\theta-v_{\mathrm{old}}$ at the corresponding preferred or dispreferred input. These corrections depend on the squared prediction differences, so a reference that falls far behind can substantially change the objective. If we use the frozen $v_{\mathrm{ref}}$ for interpolation, the reference stays at initialization while $v_\theta$ changes during fine-tuning. EMA updates the parameters of $v_{\mathrm{old}}$ from recent training iterates, with the aim of keeping this gap small. Thus, $v_{\mathrm{ref}}$ provides the fixed in-domain offline data, while $v_{\mathrm{old}}$ follows training and is used to form $\mu_\theta$ and $\nu_\theta$.

\paragraph{Advantages of FlowCPO over FlowDPO.}
For $\lambda=1$, the two losses can be compared directly:
\begin{equation}
    \begin{aligned}
        \mathcal L_{\mathrm{FlowDPO}}^{\mathrm{simple}}(\theta)
        &=\E\Big[
            \|v_\theta(\xx_t^w,t,c)-(\epsilon-\xx_0^w)\|_2^2
            -\|v_\theta(\xx_t^l,t,c)-(\epsilon-\xx_0^l)\|_2^2
          \Big],\\
        \mathcal L_{\text{\normalfont\method}}(\theta)
        &=\E\Big[
            \|\mu_\theta(\xx_t^w,t,c)-(\epsilon-\xx_0^w)\|_2^2
            +\|\nu_\theta(\xx_t^l,t,c)-(\epsilon-\xx_0^l)\|_2^2
          \Big].
    \end{aligned}
    \label{eq:flowcpo_simple_loss_comparison}
\end{equation}
The minus sign allows simplified FlowDPO to decrease without bound if the rejected error grows while the preferred error stays bounded. \method adds two nonnegative errors, so its loss is bounded below by zero. This removes one potential source of instability, although a lower bound alone does not guarantee stable training.

\newpage
\section{Connections to Existing Methods Under the Unified Divergence Framework}
\label{app:connections}

\Secref{subsec:connections} and \tabref{tab:comparison} summarize the connections under the unified divergence-based framework. 

\subsection{Connection to FlowGRPO}
\label{app:flowgrpo}

FlowGRPO~\citep{liu2025flow} is an online RL algorithm for flow matching models. Ignoring the KL term added in practice for stabilization, its core objective is
\[
\max_\theta \E_{c,\, \xx_0 \sim \pi_\theta(\cdot \mid c)} [r(\xx_0,c)]
\]
By \thmref{thm:rlhf_kl}, this is exactly equivalent to
\[
\min_\theta \E_c \left[
\mathcal{D}_{KL}(\pi_\theta \| \pi^+) - \mathcal{D}_{KL}(\pi_\theta \| \pi^-)
\right]
\]
Hence FlowGRPO belongs to the reverse-KL branch of \eqref{eq:generalized_obj} with
\[
q_\theta^+ = q_\theta^- = \pi_\theta,
\qquad
\alpha = 1,
\qquad
\gamma = -1,
\]
plus the usual stabilization term $\mathcal{D}_{KL}(\pi_\theta \| \pi_{\mathrm{ref}})$ used in practice.

\subsection{Connection to DiffusionNFT}
\label{app:connect_diffusionnft}

DiffusionNFT~\cite{zheng2025diffusionnft} optimizes a supervised loss of the form
\begin{align}
    \label{eq:diffusionnft_loss}
    \mathcal{L}_{\mathrm{NFT}}(\theta)
    &=
    \E_{\substack{c,\, \xx_0 \sim \pi_{\mathrm{old}}(\cdot \mid c) \\ t,\, \epsilon}}
    \Big[
    p(o=1 \mid \xx_0,c)\,
    \|\mu_\theta(\xx_t,t,c) - u_t(\xx_t \mid \xx_0)\|_2^2 \notag\\
    &\qquad\qquad\qquad\qquad\qquad
    +
    p(o=0 \mid \xx_0,c)\,
    \|\nu_\theta(\xx_t,t,c) - u_t(\xx_t \mid \xx_0)\|_2^2
    \Big].
\end{align}
Similar to \eqref{eq:posteriors}, we can have:
\begin{align*}
\pi_{\mathrm{old}}(\xx_0 \mid c)\, p(o=1 \mid \xx_0,c)
=
\pi^+(\xx_0 \mid c)\, p_{\pi_{\mathrm{old}}}(o=1 \mid c) \\
\pi_{\mathrm{old}}(\xx_0 \mid c)\, p(o=0 \mid \xx_0,c)
=
\pi^-(\xx_0 \mid c)\, p_{\pi_{\mathrm{old}}}(o=0 \mid c) 
\end{align*}
Therefore, with formula \eqref{eq:expect_relation_nll_fm} in \thmref{thm:flow-matching-upper-bound-uniform-regularity}, \eqref{eq:diffusionnft_loss} can be rewritten as
\begin{align*}
\mathcal{L}_{\mathrm{NFT}}(\theta)
=&
\E_c \Big[
p_{\pi_{\mathrm{old}}}(o=1 \mid c)\,
\E_{\substack{\xx_0 \sim \pi^+(\cdot \mid c) \\ t,\, \epsilon}}
\|\mu_\theta(\xx_t,t,c) - u_t\|_2^2
\Big] \\
&\quad
+
\E_c \Big[
p_{\pi_{\mathrm{old}}}(o=0 \mid c)\,
\E_{\substack{\xx_0 \sim \pi^-(\cdot \mid c) \\ t,\, \epsilon}}
\|\nu_\theta(\xx_t,t,c) - u_t\|_2^2
\Big] \\
&\ge
\E_c \Big[
    p_{\pi_{\mathrm{old}}}(o=1 \mid c)\,
    \E_{\pi^+}
    \left[ -\log \pi^+_\theta(\xx_0 \mid c) \right]\\
    &\qquad+
    p_{\pi_{\mathrm{old}}}(o=0 \mid c)\,
    \E_{\pi^-}
    \left[ -\log \pi^-_\theta(\xx_0 \mid c) \right]
    \Big] + C \\
    &= \E_c[p_{\pi_{\mathrm{old}}}(o=1 \mid c) D_{KL}(\pi^+ \| \pi^+_\theta) + p_{\pi_{\mathrm{old}}}(o=0 \mid c) D_{KL}(\pi^- \| \pi^-_\theta)] + C
\end{align*}
where $C$ is a constant independent of $\theta$. The both sides achieve the same optimal solution when $\mu_{\theta^\star}(\xx_t^+, t, c) = \E[\xx_1 - \xx_0^+\mid \xx_t^+, t, c]$ and $\nu_{\theta^\star}(\xx_t^-, t, c) = \E[\xx_1 - \xx_0^- \mid \xx_t^-, t, c]$.

Therefore, DiffusionNFT corresponds to the forward-KL branch of \eqref{eq:generalized_obj} with
\[
q_\theta^+ = \pi^+_\theta,
\qquad
q_\theta^- = \pi^-_\theta,
\qquad
\alpha = p_{\pi_{\mathrm{old}}}(o=1 \mid c),
\qquad
\gamma = p_{\pi_{\mathrm{old}}}(o=0 \mid c).
\]

\subsection{Connection to AWM}
\label{app:awm}
AWM~\citep{xue2025advantage} aims to optimize the following objective:
\begin{align}
    \label{eq:awm_objective}
    \max_\theta \E_{c,\, \xx_0 \sim \pi_\theta(\cdot \mid c)} [r(\xx_0,c)]
\end{align}
From \eqref{eq:pw-practical-decomposition}, we have:
\begin{align}
    -\log \pi_\theta(x_0 \mid c)
    &=
    H(q_\varepsilon^{x_0})
    +
    \E_{\substack{t,\, \epsilon}}[w(t)\|v_\theta(\xx_t,t,c)-(\xx_1-x_0)\|^2\bigr]
    +
    \mathcal G_{\varepsilon,w}(\theta;x_0)
    +
    \mathcal B_\varepsilon(\theta;x_0)
\end{align}

We set $\bar{\theta}$ as the stop gradient of $\theta$, we have:
\begin{align*}
    \log\frac{\pi_\theta(\xx_0 \mid c)}{\pi_{\bar{\theta}}(\xx_0 \mid c)}
    &= -\E_{t,\epsilon}[w(t)\|v_\theta(\xx_t,t,c)-(\xx_1-\xx_0)\|^2] \\
    &\quad +\E_{t,\epsilon}[w(t)\|v_{\bar{\theta}}(\xx_t,t,c)-(\xx_1-\xx_0)\|^2]\\
    &:= \Delta \mathcal{L}_{\mathrm{FM}}
\end{align*}
Therefore, the objective \eqref{eq:awm_objective} can be rewritten as
\begin{align*}
    &\max_\theta \E_{c,\, \xx_0 \sim \pi_\theta(\cdot \mid c)} [r(\xx_0,c)] \\
    &= \E_{c,\, \xx_0 \sim \pi_{\bar{\theta}}(\cdot \mid c)} \left[\frac{\pi_\theta(\xx_0 \mid c)}{\pi_{\bar{\theta}}(\xx_0 \mid c)}r(\xx_0,c)\right] \\
    &= \E_{c,\, \xx_0 \sim \pi_{\bar{\theta}}(\cdot \mid c)} \left[\exp(\Delta \mathcal{L}_{\mathrm{FM}})r(\xx_0,c)\right] 
\end{align*}

Therefore, AWM is optimized over forward process, while its derivation of $q^+_\theta, q^-_\theta, \alpha, \gamma$ is the same as FlowGRPO. Thus,
\[
q_\theta^+ = q_\theta^- = \pi_\theta,
\qquad
\alpha = 1,
\qquad
\gamma = -1,
\]

Because $\Delta \mathcal{L}_{\mathrm{FM}} = 0$ due to $\bar{\theta}$ is the stop gradient of $\theta$, we know $\exp(x) = 1 + x$ when $x$ is small. Therefore, we can simplify the objective as
\begin{align*}
    &\max_\theta \E_{c,\, \xx_0 \sim \pi_\theta(\cdot \mid c)} [r(\xx_0,c)] \\
    &= \E_{c,\, \xx_0 \sim \pi_{\bar{\theta}}(\cdot \mid c)} \left[\exp(\Delta \mathcal{L}_{\mathrm{FM}})r(\xx_0,c)\right] \\
    &= \E_{c,\, \xx_0 \sim \pi_{\bar{\theta}}(\cdot \mid c)} \left[(1 + \Delta \mathcal{L}_{\mathrm{FM}})r(\xx_0,c)\right] \\
    &= \E_{c,\, \xx_0 \sim \pi_{\bar{\theta}}(\cdot \mid c)} \left[r(\xx_0,c) + \Delta \mathcal{L}_{\mathrm{FM}}r(\xx_0,c)\right] \\
    &= -\E_{c,\, \xx_0 \sim \pi_{\bar{\theta}}(\cdot \mid c)} \left[r(\xx_0,c)\E_{t, \epsilon}[w(t)\|v_\theta(\xx_t,t,c)-(\xx_1-\xx_0)\|^2\bigr]\right] + C \\
    &= -\E_{c, \xx_0 \sim \pi_{\bar{\theta}}(\cdot \mid c),t, \epsilon} \left[w(t)r(\xx_0,c)\|v_\theta(\xx_t,t,c)-(\xx_1-\xx_0)\|^2\right] + C

\end{align*}

\subsection{Connection to RAM}
\label{app:ram}
First, let's remind the objective of RAM~\citep{bergmeister2026reinforce}:
\begin{align*}
    \mathcal{L}_{\mathrm{RAM}}(\theta) = \E_{c, t}[\| v_\theta(\xx_t,t,c) - sg(v_{\mathrm{ref}}(\xx_t,t,c)+r(\xx_0, c)((\xx_1-\xx_0) - v_\theta(\xx_t,t,c))) \|^2]
\end{align*}
where $sg$ is the stop gradient operator.

Let us define a new objective:
\begin{align*}
    \mathcal{L}_{2}(\theta) = \E_{c, t}[r(\xx_0, c)\| v_\theta(\xx_t,t,c) - (\xx_1-\xx_0) \|^2] + \E_{c, t}[\| v_\theta(\xx_t,t,c) - v_{\mathrm{ref}}(\xx_t,t,c) \|^2]
\end{align*}
It is easy to check that $\nabla_\theta \mathcal{L}_{2}(\theta) = \nabla_\theta \mathcal{L}_{\mathrm{RAM}}(\theta)$. 

Therefore, RAM can be considered as AWM objective plus a reference term. 

Thus, it is also optimized over forward process, while its derivation of $q^+_\theta, q^-_\theta, \alpha, \gamma$ is the same as FlowGRPO. Thus,
\[
q_\theta^+ = q_\theta^- = \pi_\theta,
\qquad
\alpha = 1,
\qquad
\gamma = -1,
\]

\subsection{Connection to SFT}
\label{app:sft}

The reference distribution decomposes as a mixture of the positive and negative posteriors.
\begin{lemma}
    \label{lem:sft_lemma}
    For every context $c$,
    \[
    \pi_{\mathrm{ref}}(\xx_0 \mid c)
    =
    p_{\pi_{\mathrm{ref}}}(o=1 \mid c)\,\pi^+(\xx_0 \mid c)
    +
    p_{\pi_{\mathrm{ref}}}(o=0 \mid c)\,\pi^-(\xx_0 \mid c).
    \]
\end{lemma}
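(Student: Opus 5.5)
The plan is to use the definitions in \eqref{eq:posteriors} directly, reversing Bayes' rule to recover $\pi_{\mathrm{ref}}$ as a mixture. From \eqref{eq:posteriors}, for each $c$ and $\xx_0$ we have
\[
p_{\pi_{\mathrm{ref}}}(o=1\mid c)\,\pi^+(\xx_0\mid c) = \pi_{\mathrm{ref}}(\xx_0\mid c)\,p(o=1\mid \xx_0,c),
\qquad
p_{\pi_{\mathrm{ref}}}(o=0\mid c)\,\pi^-(\xx_0\mid c) = \pi_{\mathrm{ref}}(\xx_0\mid c)\,p(o=0\mid \xx_0,c).
\]
This is just multiplying each posterior by its normalizer. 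To make the step legitimate, I will first check that the normalizers are strictly positive. By \eqref{eq:optimality_prob}, both $p(o=1\mid\xx_0,c)=\sigma(2\omega r)$ and $p(o=0\mid\xx_0,c)=\sigma(-2\omega r)$ lie in $(0,1)$. Therefore
\[
p_{\pi_{\mathrm{ref}}}(o=1\mid c)=\int \pi_{\mathrm{ref}}(\xx_0\mid c)\,\sigma(2\omega r(\xx_0,c))\,d\xx_0 > 0,
\]
and the same holds for $o=0$. So $\pi^\pm$ are well defined.

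Next I add the two identities. The right-hand side becomes $\pi_{\mathrm{ref}}(\xx_0\mid c)\,[\sigma(2\omega r)+\sigma(-2\omega r)]$. Since $\sigma(x)+\sigma(-x)=1$, equivalently $p(o=1\mid\xx_0,c)+p(o=0\mid\xx_0,c)=1$ by the normalization in \eqref{eq:optimality_prob}, this equals $\pi_{\mathrm{ref}}(\xx_0\mid c)$. That is exactly the claimed decomposition.

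As a consistency check, I will note that the mixture weights sum to one. Integrating $p(o=1\mid\xx_0,c)+p(o=0\mid\xx_0,c)=1$ against $\pi_{\mathrm{ref}}$ gives $p_{\pi_{\mathrm{ref}}}(o=1\mid c)+p_{\pi_{\mathrm{ref}}}(o=0\mid c)=1$, so the decomposition is a genuine convex mixture. This is the law of total probability over $o$.

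There is no real obstacle here; the argument is a direct algebraic check. The only point needing care is the positivity and finiteness of the marginal normalizers, and that follows from the sigmoid taking values in $(0,1)$ and $\pi_{\mathrm{ref}}$ being a probability density. I will also remark that the identity holds pointwise in $\xx_0$, and hence as an equality of measures, so it can be used inside expectations in the subsequent SFT connection.
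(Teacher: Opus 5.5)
Your proposal is correct and follows the same route as the paper's proof: multiply each posterior by its normalizer, add the two identities, and use $p(o=1\mid \xx_0,c)+p(o=0\mid \xx_0,c)=1$. Your extra checks are not in the paper's proof, but they are valid and make well-definedness explicit: the normalizers are positive because $\sigma$ takes values in $(0,1)$, and the mixture weights sum to one.
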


\begin{proof}
Multiplying the definitions in \eqref{eq:posteriors} by the corresponding normalizing constants and summing the two identities gives
\begin{align*}
&p_{\pi_{\mathrm{ref}}}(o=1 \mid c)\,\pi^+(\xx_0 \mid c)
+ p_{\pi_{\mathrm{ref}}}(o=0 \mid c)\,\pi^-(\xx_0 \mid c) \\
&=
\pi_{\mathrm{ref}}(\xx_0 \mid c)\,[p(o=1 \mid \xx_0,c) + p(o=0 \mid \xx_0,c)] \\
&=
\pi_{\mathrm{ref}}(\xx_0 \mid c).
\end{align*}
\end{proof}
Standard SFT maximizes the log-likelihood of samples from $\pi_{\mathrm{ref}}$. 
By \thmref{thm:flow-matching-upper-bound-uniform-regularity}, we know that
when $\E_{\xx_0 \sim \pi_{\mathrm{ref}}(\cdot \mid c)}\bigl[\|v_\theta(\xx_t,t)-(\xx_1-\xx_0)\|^2\bigr]$ achieves the minimum, $\E_{\xx_0 \sim \pi_{\mathrm{ref}}(\cdot \mid c)}[-\log \pi_\theta(\xx_0 \mid c)]$ also achieves the minimum. Therefore, we have:
By \lemref{lem:sft_lemma},
\begin{align*}
&=\argmin_\theta \E_{\xx_0 \sim \pi_{\mathrm{ref}}(\cdot \mid c)}[-\log \pi_\theta(\xx_0 \mid c)] \\
&=
\int \pi_{\mathrm{ref}}(\xx_0 \mid c)[-\log \pi_\theta(\xx_0 \mid c)] d\xx_0 \\
&= 
\int (p_{\pi_{\mathrm{ref}}}(o=1 \mid c)\,\pi^+(\xx_0 \mid c) + p_{\pi_{\mathrm{ref}}}(o=0 \mid c)\,\pi^-(\xx_0 \mid c))[-\log \pi_\theta(\xx_0 \mid c)] d\xx_0 \\
&=
p_{\pi_{\mathrm{ref}}}(o=1 \mid c)\, \mathcal{D}_{KL}(\pi^+ \| \pi_\theta)
+ p_{\pi_{\mathrm{ref}}}(o=0 \mid c)\, \mathcal{D}_{KL}(\pi^- \| \pi_\theta)
\end{align*}
Thus SFT corresponds to the forward-KL branch of \eqref{eq:generalized_obj} with
\[
q_\theta^+ = q_\theta^- = \pi_\theta,
\qquad
\alpha = p_{\pi_{\mathrm{ref}}}(o=1 \mid c),
\qquad
\gamma = p_{\pi_{\mathrm{ref}}}(o=0 \mid c).
\]

\subsection{Connection to RFT}
\label{app:rft}

RFT keeps only preferred samples. By \thmref{thm:flow-matching-upper-bound-uniform-regularity}, we know that
when $\E_{\xx_0 \sim \pi_{\mathrm{ref}}(\cdot \mid c)}\bigl[\|v_\theta(\xx_t,t)-(\xx_1-\xx_0)\|^2\bigr]$ achieves the minimum, $\E_{\xx_0 \sim \pi_{\mathrm{ref}}(\cdot \mid c)}[-\log \pi_\theta(\xx_0 \mid c)]$ also achieves the minimum. Therefore, we have:
\begin{align*}
    &=\arg\min_\theta \E_{\xx_0 \sim \pi^+(\cdot \mid c)}[-\log \pi_\theta(\xx_0 \mid c)] \\
    &= \arg\min_\theta \mathcal{D}_{KL}(\pi^+ \| \pi_\theta)
\end{align*}
Therefore RFT corresponds to the forward-KL branch of \eqref{eq:generalized_obj} with
\[
q_\theta^+ = \pi_\theta,
\qquad
\alpha = 1,
\qquad
\gamma = 0.
\]

\subsection{Connection to FlowDPO}
\label{app:flowdpo}

The connection to FlowDPO~\cite{liu2025improving} is heuristic rather than formal. Starting from the KL-regularized RLHF objective, DPO uses the reward parameterization
\begin{equation}
    \label{eq:flowdpo_r}
    r(\xx_0, c) = \eta \log \frac{\pi_\theta(\xx_0 \mid c)}{\pi_{\text{ref}}(\xx_0 \mid c)} + \eta \log Z(c),
\end{equation}
which, when substituted into the Bradley--Terry preference likelihood, yields the usual DPO objective
\[
\mathcal{L}_{\mathrm{DPO}}(\theta)
=
\E_{c,\xx_0^w,\xx_0^l}
\left[
-\log \sigma \left(
\eta \log \frac{\pi_\theta(\xx_0^w \mid c)}{\pi_{\mathrm{ref}}(\xx_0^w \mid c)}
- \eta \log \frac{\pi_\theta(\xx_0^l \mid c)}{\pi_{\mathrm{ref}}(\xx_0^l \mid c)}
\right)
\right]
\]
From \eqref{eq:pw-practical-decomposition}, we have:
\begin{align*}
    -\log \pi_\theta(x_0 \mid c)
    &=
    H(q_\varepsilon^{x_0})
    +
    \E_{\substack{t,\, \epsilon}}[w(t)\|v_\theta(\xx_t,t,c)-(\xx_1-x_0)\|^2\bigr]
    +
    \mathcal G_{\varepsilon,w}(\theta;x_0)
    +
    \mathcal B_\varepsilon(\theta;x_0)
\end{align*}
Therefore, we have:
\begin{align}
    \log\frac{\pi_\theta(\xx_0 \mid c)}{\pi_{\mathrm{ref}}(\xx_0 \mid c)}
    = \Delta \mathcal{L}_{\mathrm{FM}}(\xx_0) + \Delta \mathcal{G}_{\varepsilon,w}(\xx_0) + \Delta \mathcal{B}_\varepsilon(\xx_0)
\end{align}
where
\begin{align*}
    \Delta \mathcal{L}_{\mathrm{FM}}(\xx_0)
    &:=
    \E_{\substack{t,\, \epsilon}}[-w(t)\|v_\theta(\xx_t,t,c)-(\xx_1-\xx_0)\|^2 + w(t)\|v_{\mathrm{ref}}(\xx_t,t,c)-(\xx_1-\xx_0)\|^2\bigr]\\
    \Delta \mathcal{G}_{\varepsilon,w}(\xx_0)
    &:=
    - \mathcal G_{\varepsilon,w}(\theta;\xx_0) + \mathcal G_{\varepsilon,w}(\mathrm{ref};\xx_0)\\
    \Delta \mathcal{B}_\varepsilon(\xx_0)
    &:=
    - \mathcal B_\varepsilon(\theta;\xx_0) + \mathcal B_\varepsilon(\mathrm{ref};\xx_0)
\end{align*}

Therefore, we can simplify the DPO objective in the following:
\begin{align*}
    \mathcal{L}_{\mathrm{DPO}}(\theta)
    &= \E_{c,\xx_0^w,\xx_0^l}
    \big[
    -\log \sigma \big(
    \eta \{
        \Delta \mathcal{L}_{\mathrm{FM}}(\xx^w_0) + \Delta \mathcal{G}_{\varepsilon,w}(\xx^w_0) + \Delta \mathcal{B}_\varepsilon(\xx^w_0)\\
    &\qquad\qquad\qquad\qquad\qquad -
        \Delta \mathcal{L}_{\mathrm{FM}}(\xx^l_0) - \Delta \mathcal{G}_{\varepsilon,w}(\xx^l_0) - \Delta \mathcal{B}_\varepsilon(\xx^l_0)
    \}
    \big)\\
    &= \E_{c,\xx_0^w,\xx_0^l}
    \big[
    -\log \sigma \big(
    \eta(\Delta \mathcal{L}_{\mathrm{FM}}(\xx^w_0) -
    \Delta \mathcal{L}_{\mathrm{FM}}(\xx^l_0))\\
    &\qquad\qquad\qquad\qquad\qquad + 
    \eta(\Delta \mathcal{G}_{\varepsilon,w}(\xx^w_0) + \Delta \mathcal{B}_\varepsilon(\xx^w_0) 
    - \Delta \mathcal{G}_{\varepsilon,w}(\xx^l_0) - \Delta \mathcal{B}_\varepsilon(\xx^l_0))
    \}\big)\big] \\
    & \text{because $-\log \sigma$ is convex and then by Jensen's inequality} \\
    &\le  \E_{c,\xx_0^w,\xx_0^l}
    \big[
    -\frac{1}{2}\log \sigma \big(
    2\eta(\Delta \mathcal{L}_{\mathrm{FM}}(\xx^w_0) -
    \Delta \mathcal{L}_{\mathrm{FM}}(\xx^l_0))
    \big)\\
    &\qquad\qquad\qquad-\frac{1}{2} \log \sigma \big(2\eta(\Delta \mathcal{G}_{\varepsilon,w}(\xx^w_0) + \Delta \mathcal{B}_\varepsilon(\xx^w_0) 
    - \Delta \mathcal{G}_{\varepsilon,w}(\xx^l_0) - \Delta \mathcal{B}_\varepsilon(\xx^l_0))
    \big)
    \big]\\
    &\le \frac{1}{2}\mathcal{L}_{\mathrm{FlowDPO}}(\theta) \\
    &\qquad+\E_{c,\xx_0^w,\xx_0^l}
    \big[-\frac{1}{2} \log \sigma \big(2\eta(\Delta \mathcal{G}_{\varepsilon,w}(\xx^w_0) + \Delta \mathcal{B}_\varepsilon(\xx^w_0) 
    - \Delta \mathcal{G}_{\varepsilon,w}(\xx^l_0) - \Delta \mathcal{B}_\varepsilon(\xx^l_0))
    \big)\big]
\end{align*}
where
\[
    \delta_\theta(\xx_0)
    :=\|v_\theta(\xx_t,t,c)-(\xx_1-\xx_0)\|^2
      -\|v_{\mathrm{ref}}(\xx_t,t,c)-(\xx_1-\xx_0)\|^2 .
\]
\begin{align*}
    \mathcal{L}_{\mathrm{FlowDPO}}(\theta) = \E_{c,\xx_0^w,\xx_0^l,t, \epsilon}
    \big[-\log \sigma \big(2\eta w(t)
    [\delta_\theta(\xx_0^l)-\delta_\theta(\xx_0^w)]\big)\big].
\end{align*}
Therefore, the FlowDPO objective plus a residual term is an upper bound of the DPO objective.

\newpage
\section{Experimental Details}
\label{app:details}

\subsection{Experimental Details of Sec. \ref{subsec:main_results}}
\label{app:experimental_details}

\begin{algorithm}[t]
    \caption{Flow Contrastive Preference Optimization (FlowCPO)}
    \label{alg:flowcpo}
    \small
    \begin{algorithmic}[1]
    \Require Pretrained $v_{\mathrm{ref}}$, dataset $\mathcal{D} =\{(c, \xx^w_0, \xx^l_0)\}$, flow interpolation coefficient $\beta$, negative regularization weight $\lambda$, EMA coefficient $\eta$, learning rate $\iota$.
    \State \textbf{Init:} $v_\theta \leftarrow v_{\mathrm{ref}}$, $v_{\mathrm{old}} \leftarrow v_{\mathrm{ref}}$.
    \For{each training iteration}
        \State Sample batch $(c,\xx_0^w,\xx_0^l) \sim \mathcal{D}$, $t \sim \mathcal{U}(0,1)$, $\epsilon \sim \mathcal{N}(0,I)$.
        \State \textbf{Forward:} $\xx_t^{\{w,l\}} = (1 - t) \cdot \xx_0^{\{w,l\}} + t \cdot \epsilon$; \quad Target $u_t^{\{w,l\}} = \epsilon - \xx_0^{\{w,l\}}$.
        \State \textbf{Flow Mixing:}
        \Statex \hskip1.5em $\mu_\theta = (1 - \beta) v_{\mathrm{old}} + \beta v_{\theta}, \quad \nu_\theta = (1 + \beta) v_{\mathrm{old}} - \beta v_{\theta}$.
        \State \textbf{Loss Update:}
        \Statex \hskip1.5em $\mathcal{L} = \|\mu_{\theta}(\xx_t^w, c, t) - u_t(\xx_t^w|\xx_0^w)\|_2^2 + \lambda \|\nu_{\theta}(\xx_t^l, c, t) - u_t(\xx_t^l|\xx_0^l)\|_2^2$
        \State $\theta \leftarrow \theta - \iota \cdot \nabla_\theta \mathcal{L}$.
        \State $v_{\mathrm{old}} \leftarrow EMA_{\eta}(v_{\theta}, v_{\mathrm{old}})$.
    \EndFor
    \State \Return $v_\theta$
    \end{algorithmic}
\end{algorithm}

\paragraph{In-Domain Offline Preference Data.} For the \emph{in-domain} offline training setting, we build a static preference dataset from a frozen copy of the pretrained reference model \texttt{Stable Diffusion 3.5 Medium (SD3.5-M)}. We sample prompts from the three target sources: the GenEval prompt set, the OCR prompt set, and the general-preference prompt set. The resulting in-domain prompt pools contain 50,000 training prompts for GenEval, 19,653 training prompts for OCR, and 25,432 training prompts for general preference. Their held-out evaluation splits contain 2212, 1018, and 2048 prompts, respectively. For each prompt, we generate 16 candidate images with the frozen reference model and convert them into a preferred/dispreferred pair using a fixed offline pipeline: the highest-scoring sample becomes the preferred and the lowest-scoring sample becomes the dispreferred. The GenEval and OCR training data use single-metric filtering aligned with their target benchmark. The general-preference training data uses an equal-weight multi-reward score with \texttt{PickScore}:\texttt{HPS v2.1}:\texttt{CLIP Score} = 1:1:1 during preferred/dispreferred construction. Because the generator parameters stay fixed during data construction, the resulting training distribution remains stationary throughout optimization. These preferred/dispreferred pairs should be read as empirical source distributions induced by the fixed best-of-16 pipeline, rather than as exact samples from the idealized posteriors $\pi^+$ and $\pi^-$ introduced in the theory.

All images used for data construction and evaluation are generated at a resolution of 512$\times$512 with 40 inference steps using the ODE sampler. Unless otherwise noted, we use a guidance scale of 4.5 during dataset construction. We leave negative prompts blank. The VAE and text encoder are frozen, and training prompts do not overlap with validation prompts or evaluation benchmarks. The complete reported experiment suite used approximately 288 aggregate A100 GPU-hours.

\paragraph{Out-of-Domain Offline Preference Data.} For the \emph{out-of-domain} offline training setting, we use Open Image Preferences v1 Results from the Data Is Better Together collection on Hugging Face~\cite{open_image_preferences_v1_results}. This dataset contains roughly 10K text-to-image preference pairs with community annotations, and the images are generated by open models including FLUX.1-Dev and SD3.5-Large. We use it to evaluate the out-of-domain regime because the training samples are not generated by the reference model used in our fine-tuning experiments.

\paragraph{Training Configuration.} We fine-tune \texttt{SD3.5-M} with LoRA on 8$\times$A100 GPUs in fp16 precision. Unless otherwise specified, the LoRA hyperparameters are $r = 32$ and $\alpha = 64$. We optimize with AdamW using a learning rate of 3e-4, cosine decay, and weight decay of 1e-4. The global batch size is 16. The EMA decay $\eta$ is 0.99, the default value of $\lambda$ is 1, and reported checkpoints are selected with the corresponding validation score for each training target.

\paragraph{Checkpoint Selection.} All fine-tuning runs are monitored for 2K training steps. For the in-domain specialists, we report the checkpoint with the best task-aligned validation score within that 2K-step budget: the GenEval model is selected by GenEval validation score, the OCR model by OCR validation score, and the general-preference model by the equal-weight composite validation score over \texttt{PickScore}, \texttt{HPS v2.1}, and \texttt{CLIP Score}. For the OOD setting, where one model is shared across all downstream evaluations, we select the checkpoint with the highest validation \texttt{PickScore} within the same 2K-step budget. We apply the same protocol to FlowCPO and the offline baselines in the corresponding setting.

\subsection{Experimental Results on Optimization Targets}
\label{app:optimization_targets}

We report results for three capabilities: semantic alignment, typographic generation, and general preference alignment. \tabref{tab:geneval_finetuning}, \tabref{tab:ocr_finetuning}, and \tabref{tab:multi_metric_finetuning} correspond to \emph{in-domain} fine-tuning with domain-specific reward filtering, whereas \tabref{tab:multi_metric_finetuning_ood} reports the \emph{out-of-domain} setting. The first three tables therefore summarize separate in-domain specialist models, while the OOD table evaluates single models trained once on out-of-domain data. Unless otherwise noted, each fine-tuned \texttt{SD3.5-M} entry is the mean over five independent training and evaluation runs with different random seeds, and the uncertainty is the sample standard deviation of the five run-level metric values. Pretrained baselines are shown as single evaluations for reference.

\tabref{tab:geneval_finetuning} reports quantitative results on \texttt{GenEval}, with representative qualitative comparisons in \figref{fig:flowcpo_geneval}. \tabref{tab:ocr_finetuning} summarizes typographic generation performance measured by OCR accuracy, with examples in \figref{fig:flowcpo_ocr}. \tabref{tab:multi_metric_finetuning} presents the in-domain results for general preference alignment; because training data in that setting are filtered with \texttt{PickScore}, \texttt{CLIP Score}, and \texttt{HPS v2.1}, the remaining metrics are the more informative check of transfer beyond the filtering pipeline. \tabref{tab:multi_metric_finetuning_ood} reports the out-of-domain setting, with qualitative examples in \figref{fig:flowcpo_geneval_ood}, \figref{fig:flowcpo_ocr_ood}, and \figref{fig:flowcpo_image_quality_ood}. Unless otherwise noted, we use FlowDPO temperature 100 and FlowCPO $\beta=0.5$ as the default settings.

\begin{table*}[h]
    \centering
    \caption{Quantitative comparison of fine-tuning SD3.5-M using different methods under the \emph{in-domain} offline training setting. The fine-tuning data are filtered solely by the \textbf{GenEval} reward function. We evaluate each model with several Classifier-Free Guidance (CFG) scales. The broad set of auxiliary metrics is included to monitor whether optimizing only for \texttt{GenEval} degrades other capabilities. Fine-tuned SD3.5-M variants are reported as mean $\pm$ standard deviation over 5 runs; pretrained baselines are listed as single evaluations. Within the SD3.5-M group, the best results are highlighted in \textbf{bold}, and the second best are \underline{underlined}.}
    \label{tab:geneval_finetuning}
    \resizebox{\textwidth}{!}{
    \begin{tabular}{l l >{\columncolor{gray!15}}c c c c c c c c}
    \toprule
    \textbf{Model} & \textbf{CFG} & \textbf{GenEval} & \textbf{OCR} & \textbf{PickScore} & \textbf{ClipScore} & \textbf{HPSv2.1} & \textbf{Aesthetic} & \textbf{ImgRwd} & \textbf{UniRwd} \\
    \midrule
    \multicolumn{10}{l}{\textit{Pretrained Model Baselines}} \\
    SD-XL & $-$ & 0.55 & 0.14 & 22.42 & 0.287 & 0.280 & 5.60 & 0.76 & 2.93 \\
    SD3.5-L & $-$ & 0.71 & 0.68 & 22.91 & 0.289 & 0.288 & 5.50 & 0.96 & 3.25 \\
    FLUX.1-Dev & $-$ & 0.66 & 0.59 & 22.84 & 0.295 & 0.274 & 5.71 & 0.96 & 3.27 \\
    \midrule
    \multicolumn{10}{l}{\textit{SD3.5-M Fine-Tuning (filtered by GenEval)}} \\
    \multirow{3}{*}{Base Model} 
    & $1.0$ & 0.24 & 0.12 & 20.51 & 0.237 & 0.204 & 5.13 & $-$0.58 & 2.02 \\
    & $3.0$ & 0.59 & 0.47 & 22.28 & 0.287 & 0.284 & 5.38 & 0.71 & 2.96 \\
    & $4.5$ & 0.63 & \textbf{0.59} & 22.34 & 0.285 & 0.279 & 5.36 & 0.85 & 3.03 \\
    \cmidrule(l){2-10} 
    \multirow{3}{*}{+ RFT~\cite{xiong2025minimalist,chen2025bridging} } 
    & $1.0$ & $0.5883 \pm 0.0100$ & $0.1421 \pm 0.0033$ & $21.6937 \pm 0.0125$ & $0.2735 \pm 0.0009$ & $0.2686 \pm 0.0005$ & $5.3183 \pm 0.0155$ & $0.4381 \pm 0.0179$ & $2.6175 \pm 0.0135$ \\
    & $3.0$ & $0.7423 \pm 0.0097$ & $0.5003 \pm 0.0070$ & \underline{$22.4661 \pm 0.0212$} & $0.2941 \pm 0.0006$ & \underline{$0.3010 \pm 0.0006$} & \underline{$5.3998 \pm 0.0092$} & $1.0200 \pm 0.0172$ & $3.1360 \pm 0.0145$ \\
    & $4.5$ & $0.7489 \pm 0.0045$ & $0.5562 \pm 0.0098$ & $\mathbf{22.4864 \pm 0.0099}$ & $0.2966 \pm 0.0005$ & $\mathbf{0.3031 \pm 0.0002}$ & $\mathbf{5.4070 \pm 0.0050}$ & $1.0808 \pm 0.0074$ & \underline{$3.1615 \pm 0.0080$} \\
    \cmidrule(l){2-10}
    \multirow{3}{*}{+ FlowDPO~\cite{liu2025improving}} 
    & $1.0$ & $0.5892 \pm 0.0046$ & $0.2327 \pm 0.0053$ & $21.4067 \pm 0.0324$ & $0.2696 \pm 0.0010$ & $0.2562 \pm 0.0007$ & $5.2817 \pm 0.0058$ & $0.4560 \pm 0.0225$ & $2.5870 \pm 0.0205$ \\
    & $3.0$ & $0.8118 \pm 0.0044$ & $0.4534 \pm 0.0075$ & $22.3095 \pm 0.0182$ & \underline{$0.2970 \pm 0.0006$} & $0.2938 \pm 0.0009$ & $5.3570 \pm 0.0112$ & \underline{$1.0811 \pm 0.0094$} & \underline{$3.1615 \pm 0.0200$} \\
    & $4.5$ & $0.8107 \pm 0.0046$ & $0.5067 \pm 0.0074$ & $22.3091 \pm 0.0138$ & $\mathbf{0.2979 \pm 0.0005}$ & $0.2963 \pm 0.0003$ & $5.3479 \pm 0.0052$ & $\mathbf{1.1256 \pm 0.0078}$ & $\mathbf{3.1680 \pm 0.0060}$ \\
    \cmidrule(l){2-10}
    \multirow{3}{*}{+ FlowCPO ($\beta = 0.5$, Ours)} 
    & $1.0$ & $0.7596 \pm 0.0106$ & $0.2535 \pm 0.0098$ & $21.8045 \pm 0.0157$ & $0.2804 \pm 0.0001$ & $0.2673 \pm 0.0008$ & $5.2487 \pm 0.0139$ & $0.6231 \pm 0.0160$ & $2.8040 \pm 0.0125$ \\
    & $3.0$ & $\mathbf{0.8415 \pm 0.0036}$ & $0.5203 \pm 0.0055$ & $22.1952 \pm 0.0184$ & $0.2957 \pm 0.0005$ & $0.2885 \pm 0.0007$ & $5.2662 \pm 0.0085$ & $0.9999 \pm 0.0076$ & $3.1170 \pm 0.0085$ \\
    & $4.5$ & \underline{$0.8161 \pm 0.0052$} & \underline{$0.5702 \pm 0.0082$} & $21.9438 \pm 0.0086$ & $0.2933 \pm 0.0005$ & $0.2807 \pm 0.0004$ & $5.1610 \pm 0.0062$ & $0.9184 \pm 0.0094$ & $3.0245 \pm 0.0150$ \\
    \bottomrule
    \end{tabular}
    }
\end{table*}

\begin{table*}[h]
    \centering
    \caption{Quantitative comparison of fine-tuning SD3.5-M using different methods (RFT, FlowDPO, FlowCPO) under the \emph{in-domain} offline training setting. The fine-tuning data are filtered solely by the \textbf{OCR} reward metric. We evaluate the models with several Classifier-Free Guidance (CFG) scales. The broad set of auxiliary metrics is included to monitor whether optimizing only for OCR degrades other generation capabilities. Fine-tuned SD3.5-M variants are reported as mean $\pm$ standard deviation over 5 runs; pretrained baselines are listed as single evaluations. Within the SD3.5-M fine-tuning group, the best results are highlighted in \textbf{bold}, and the second best are \underline{underlined}.}
    \label{tab:ocr_finetuning}
    \resizebox{\textwidth}{!}{
    \begin{tabular}{l l c >{\columncolor{gray!15}}c c c c c c c}
    \toprule
    \textbf{Model} & \textbf{CFG} & \textbf{GenEval} & \textbf{OCR} & \textbf{PickScore} & \textbf{ClipScore} & \textbf{HPSv2.1} & \textbf{Aesthetic} & \textbf{ImgRwd} & \textbf{UniRwd} \\
    \midrule
    \multicolumn{10}{l}{\textit{Pretrained Model Baselines}} \\
    SD-XL & $ - $ & 0.55 & 0.14 & 22.42 & 0.287 & 0.280 & 5.60 & 0.76 & 2.93 \\
    SD3.5-L & $ - $ & 0.71 & 0.68 & 22.91 & 0.289 & 0.288 & 5.50 & 0.96 & 3.25 \\
    FLUX.1-Dev & $ - $ & 0.66 & 0.59 & 22.84 & 0.295 & 0.274 & 5.71 & 0.96 & 3.27 \\
    \midrule
    \multicolumn{10}{l}{\textit{SD3.5-M Fine-Tuning (filtered by OCR)}} \\
    \multirow{3}{*}{Base Model} 
    & $1.0$ & 0.24 & 0.12 & 20.51 & 0.237 & 0.204 & 5.13 & $-$0.58 & 2.02 \\
    & $3.0$ & 0.59 & 0.47 & 22.28 & 0.287 & 0.284 & 5.38 & 0.71 & 2.96 \\
    & $4.5$ & 0.63 & 0.59 & 22.34 & 0.285 & 0.279 & 5.36 & 0.85 & 3.03 \\
    \cmidrule(l){2-10} 
    \multirow{3}{*}{+ RFT~\cite{xiong2025minimalist,chen2025bridging} } 
    & $1.0$ & $0.5127 \pm 0.0108$ & $0.3530 \pm 0.0057$ & $21.8831 \pm 0.0192$ & $0.2784 \pm 0.0007$ & $0.2741 \pm 0.0005$ & $5.3589 \pm 0.0096$ & $0.5248 \pm 0.0191$ & $2.7170 \pm 0.0160$ \\
    & $3.0$ & \underline{$0.6577 \pm 0.0106$} & $0.6981 \pm 0.0105$ & $\mathbf{22.5177 \pm 0.0194}$ & \underline{$0.2964 \pm 0.0008$} & \underline{$0.3008 \pm 0.0003$} & $\mathbf{5.3954 \pm 0.0043}$ & \underline{$1.0491 \pm 0.0072$} & $\mathbf{3.1630 \pm 0.0200}$ \\
    & $4.5$ & $\mathbf{0.6698 \pm 0.0101}$ & $0.7202 \pm 0.0083$ & \underline{$22.5098 \pm 0.0157$} & $\mathbf{0.2979 \pm 0.0009}$ & $\mathbf{0.3023 \pm 0.0005}$ & \underline{$5.3927 \pm 0.0068$} & $\mathbf{1.0816 \pm 0.0100}$ & \underline{$3.1605 \pm 0.0175$} \\
    \cmidrule(l){2-10}
    \multirow{3}{*}{+ FlowDPO~\cite{liu2025improving}} 
    & $1.0$ & $0.3140 \pm 0.0051$ & $0.5136 \pm 0.0050$ & $20.9171 \pm 0.0090$ & $0.2565 \pm 0.0008$ & $0.2219 \pm 0.0006$ & $5.1246 \pm 0.0127$ & $-0.3015 \pm 0.0273$ & $2.2580 \pm 0.0175$ \\
    & $3.0$ & $0.5848 \pm 0.0098$ & $0.7389 \pm 0.0096$ & $22.3210 \pm 0.0104$ & $0.2927 \pm 0.0004$ & $0.2873 \pm 0.0005$ & $5.3620 \pm 0.0025$ & $0.8559 \pm 0.0143$ & $3.0315 \pm 0.0220$ \\
    & $4.5$ & $0.6149 \pm 0.0153$ & $0.7476 \pm 0.0041$ & $22.4249 \pm 0.0144$ & $0.2960 \pm 0.0006$ & $0.2943 \pm 0.0005$ & $5.3795 \pm 0.0064$ & $0.9493 \pm 0.0182$ & $3.0955 \pm 0.0195$ \\
    \cmidrule(l){2-10}
    \multirow{3}{*}{+ FlowCPO ($\beta =0.5$, Ours)} 
    & $1.0$ & $0.4023 \pm 0.0062$ & $0.8349 \pm 0.0033$ & $21.2178 \pm 0.0185$ & $0.2689 \pm 0.0006$ & $0.2430 \pm 0.0011$ & $5.1770 \pm 0.0102$ & $0.0952 \pm 0.0182$ & $2.4955 \pm 0.0185$ \\
    & $3.0$ & $0.5962 \pm 0.0113$ & $\mathbf{0.8737 \pm 0.0034}$ & $22.3139 \pm 0.0090$ & $0.2930 \pm 0.0002$ & $0.2943 \pm 0.0005$ & $5.3506 \pm 0.0079$ & $0.9560 \pm 0.0115$ & $3.0760 \pm 0.0085$ \\
    & $4.5$ & $0.6124 \pm 0.0102$ & \underline{$0.8588 \pm 0.0078$} & $22.3656 \pm 0.0154$ & $0.2953 \pm 0.0008$ & $0.2982 \pm 0.0005$ & $5.3591 \pm 0.0048$ & $1.0438 \pm 0.0048$ & $3.1160 \pm 0.0035$ \\
    \bottomrule
    \end{tabular}
    }
\end{table*}

\begin{table*}[h]
    \centering
    \caption{Quantitative comparison of fine-tuning SD3.5-M using different methods (RFT, FlowDPO, and FlowCPO) under the \emph{in-domain} offline training setting. The fine-tuning data are filtered by a combination of multiple metrics (\textbf{PickScore}, \textbf{CLIP Score}, and \textbf{HPSv2.1}), which are highlighted in gray. We evaluate the models with several Classifier-Free Guidance (CFG) scales. Because the shaded metrics are directly coupled to the filtering pipeline, the remaining metrics are especially useful for assessing transfer beyond the optimization-aligned rewards. Fine-tuned SD3.5-M variants are reported as mean $\pm$ standard deviation over 5 runs; pretrained baselines are listed as single evaluations. Within the SD3.5-M fine-tuning group, the best results are highlighted in \textbf{bold}, and the second best are \underline{underlined}.}
    \label{tab:multi_metric_finetuning}
    \resizebox{\textwidth}{!}{
    \begin{tabular}{l l c c >{\columncolor{gray!15}}c >{\columncolor{gray!15}}c >{\columncolor{gray!15}}c c c c}
    \toprule
    \textbf{Model} & \textbf{CFG} & \textbf{GenEval} & \textbf{OCR} & \textbf{PickScore} & \textbf{ClipScore} & \textbf{HPSv2.1} & \textbf{Aesthetic} & \textbf{ImgRwd} & \textbf{UniRwd} \\
    \midrule
    \multicolumn{10}{l}{\textit{Pretrained Model Baselines}} \\
    SD-XL & $ - $ & 0.55 & 0.14 & 22.42 & 0.287 & 0.280 & 5.60 & 0.76 & 2.93 \\
    SD3.5-L & $ - $ & 0.71 & 0.68 & 22.91 & 0.289 & 0.288 & 5.50 & 0.96 & 3.25 \\
    FLUX.1-Dev & $ - $ & 0.66 & 0.59 & 22.84 & 0.295 & 0.274 & 5.71 & 0.96 & 3.27 \\
    \midrule
    \multicolumn{10}{l}{\textit{SD3.5-M Fine-Tuning}} \\
    \multirow{3}{*}{Base Model} 
    & $1.0$ & 0.24 & 0.12 & 20.51 & 0.237 & 0.204 & 5.13 & $-$0.58 & 2.02 \\
    & $3.0$ & 0.59 & 0.47 & 22.28 & 0.287 & 0.284 & 5.38 & 0.71 & 2.96 \\
    & $4.5$ & 0.63 & 0.59 & 22.34 & 0.285 & 0.279 & 5.36 & 0.85 & 3.03 \\
    \cmidrule(l){2-10} 
    \multirow{3}{*}{+ RFT~\cite{xiong2025minimalist,chen2025bridging}} 
    & $1.0$ & $0.5312 \pm 0.0111$ & $0.2351 \pm 0.0045$ & $21.9056 \pm 0.0124$ & $0.2793 \pm 0.0002$ & $0.2759 \pm 0.0008$ & $5.3507 \pm 0.0111$ & $0.5694 \pm 0.0213$ & $2.6590 \pm 0.0065$ \\
    & $3.0$ & \underline{$0.6894 \pm 0.0086$} & $0.5722 \pm 0.0045$ & $22.5975 \pm 0.0069$ & $0.2963 \pm 0.0007$ & $0.3041 \pm 0.0005$ & $5.4139 \pm 0.0093$ & $1.0616 \pm 0.0096$ & $3.1370 \pm 0.0100$ \\
    & $4.5$ & $\mathbf{0.6974 \pm 0.0056}$ & $\mathbf{0.6180 \pm 0.0097}$ & $22.5664 \pm 0.0104$ & $0.2972 \pm 0.0009$ & $0.3054 \pm 0.0005$ & $5.4207 \pm 0.0061$ & $1.1068 \pm 0.0063$ & $3.1520 \pm 0.0020$ \\
    \cmidrule(l){2-10}
    \multirow{3}{*}{+ FlowDPO~\cite{liu2025improving}} 
    & $1.0$ & $0.0363 \pm 0.0117$ & $0.2132 \pm 0.0061$ & $20.7158 \pm 0.0179$ & $0.2399 \pm 0.0006$ & $0.2205 \pm 0.0010$ & $5.2202 \pm 0.0113$ & $-0.4610 \pm 0.0222$ & $2.1075 \pm 0.0120$ \\
    & $3.0$ & $0.3903 \pm 0.0100$ & $0.5549 \pm 0.0072$ & $22.7637 \pm 0.0209$ & $0.2974 \pm 0.0008$ & $0.3014 \pm 0.0007$ & \underline{$5.5590 \pm 0.0089$} & $1.0403 \pm 0.0089$ & $3.0915 \pm 0.0195$ \\
    & $4.5$ & $0.4610 \pm 0.0137$ & $0.5791 \pm 0.0071$ & \underline{$22.8912 \pm 0.0192$} & $\mathbf{0.3010 \pm 0.0008}$ & \underline{$0.3107 \pm 0.0006$} & $5.5562 \pm 0.0107$ & $1.1543 \pm 0.0072$ & $3.1840 \pm 0.0190$ \\
    \cmidrule(l){2-10}
    \multirow{3}{*}{+ FlowCPO ($\beta =0.5$, Ours)} 
    & $1.0$ & $0.5325 \pm 0.0035$ & $0.3426 \pm 0.0053$ & $22.4805 \pm 0.0155$ & $0.2796 \pm 0.0007$ & $0.2900 \pm 0.0003$ & $\mathbf{5.6392 \pm 0.0100}$ & $0.9014 \pm 0.0139$ & $2.9300 \pm 0.0200$ \\
    & $3.0$ & $0.6487 \pm 0.0088$ & \underline{$0.5888 \pm 0.0096$} & $\mathbf{22.9432 \pm 0.0164}$ & $0.2994 \pm 0.0005$ & $\mathbf{0.3114 \pm 0.0004}$ & $5.5395 \pm 0.0065$ & $\mathbf{1.2480 \pm 0.0071}$ & $\mathbf{3.3020 \pm 0.0110}$ \\
    & $4.5$ & $0.6526 \pm 0.0079$ & $0.5762 \pm 0.0136$ & $22.6725 \pm 0.0088$ & \underline{$0.2997 \pm 0.0008$} & $0.3030 \pm 0.0004$ & $5.4569 \pm 0.0045$ & \underline{$1.2174 \pm 0.0116$} & \underline{$3.2775 \pm 0.0095$} \\
    \bottomrule
    \end{tabular}
    }
\end{table*}

\begin{table*}[h]
    \centering
    \caption{Quantitative comparison of fine-tuning SD3.5-M using different methods (RFT, FlowDPO, and FlowCPO) under the \emph{out-of-domain} offline training setting. The fine-tuning data come from the open-source Open Image Preferences v1 Results dataset and were generated by models other than the SD3.5-M reference policy. We evaluate the models with several Classifier-Free Guidance (CFG) scales. Fine-tuned SD3.5-M variants are reported as mean $\pm$ standard deviation over 5 runs; pretrained baselines are listed as single evaluations. Within the SD3.5-M fine-tuning group, the best results are highlighted in \textbf{bold}, and the second best are \underline{underlined}.}
    \label{tab:multi_metric_finetuning_ood}
    \resizebox{\textwidth}{!}{
    \begin{tabular}{l l c c c c c c c c}
    \toprule
    \textbf{Model} & \textbf{CFG} & \textbf{GenEval} & \textbf{OCR} & \textbf{PickScore} & \textbf{ClipScore} & \textbf{HPSv2.1} & \textbf{Aesthetic} & \textbf{ImgRwd} & \textbf{UniRwd} \\
    \midrule
    \multicolumn{10}{l}{\textit{Pretrained Model Baselines}} \\
    SD-XL & $ - $ & 0.55 & 0.14 & 22.42 & 0.287 & 0.280 & 5.60 & 0.76 & 2.93 \\
    SD3.5-L & $ - $ & 0.71 & 0.68 & 22.91 & 0.289 & 0.288 & 5.50 & 0.96 & 3.25 \\
    FLUX.1-Dev & $ - $ & 0.66 & 0.59 & 22.84 & 0.295 & 0.274 & 5.71 & 0.96 & 3.27 \\
    \midrule
    \multicolumn{10}{l}{\textit{SD3.5-M Fine-Tuning}} \\
    \multirow{3}{*}{Base Model} 
    & $1.0$ & 0.24 & 0.12 & 20.51 & 0.237 & 0.204 & 5.13 & $-$0.58 & 2.02 \\
    & $3.0$ & 0.59 & 0.47 & 22.28 & 0.287 & 0.284 & 5.38 & 0.71 & 2.96 \\
    & $4.5$ & 0.63 & 0.59 & 22.34 & 0.285 & 0.279 & 5.36 & 0.85 & 3.03 \\
    \cmidrule(l){2-10} 
    \multirow{3}{*}{+ RFT~\cite{xiong2025minimalist,chen2025bridging}} 
    & $1.0$ & $0.4398 \pm 0.0086$ & $0.1723 \pm 0.0061$ & $21.5913 \pm 0.0194$ & $0.2678 \pm 0.0016$ & $0.2611 \pm 0.0010$ & $5.4923 \pm 0.0129$ & $0.3110 \pm 0.0216$ & $2.5551 \pm 0.0138$ \\
    & $3.0$ & $0.6650 \pm 0.0076$ & $0.5310 \pm 0.0078$ & \underline{$22.6427 \pm 0.0108$} & \underline{$0.2974 \pm 0.0008$} & \underline{$0.3032 \pm 0.0002$} & $5.4780 \pm 0.0082$ & \underline{$1.0514 \pm 0.0134$} & $3.1766 \pm 0.0202$ \\
    & $4.5$ & $0.6845 \pm 0.0078$ & $\mathbf{0.5968 \pm 0.0059}$ & $\mathbf{22.6854 \pm 0.0063}$ & $\mathbf{0.2991 \pm 0.0007}$ & $\mathbf{0.3074 \pm 0.0003}$ & $5.4733 \pm 0.0074$ & $\mathbf{1.1228 \pm 0.0133}$ & $\mathbf{3.2299 \pm 0.0084}$ \\
    \cmidrule(l){2-10}
    \multirow{3}{*}{+ FlowDPO~\cite{liu2025improving}} 
    & $1.0$ & $0.2290 \pm 0.0084$ & $0.1158 \pm 0.0054$ & $20.8273 \pm 0.0155$ & $0.2425 \pm 0.0011$ & $0.2264 \pm 0.0006$ & $\mathbf{5.7158 \pm 0.0147}$ & $-0.3851 \pm 0.0285$ & $2.2693 \pm 0.0198$ \\
    & $3.0$ & $0.6065 \pm 0.0141$ & $0.4789 \pm 0.0053$ & $22.5399 \pm 0.0212$ & $0.2919 \pm 0.0007$ & $0.2917 \pm 0.0008$ & \underline{$5.5203 \pm 0.0032$} & $0.9063 \pm 0.0109$ & $3.0972 \pm 0.0170$ \\
    & $4.5$ & $0.6513 \pm 0.0132$ & $0.5406 \pm 0.0071$ & $22.6201 \pm 0.0165$ & $0.2946 \pm 0.0003$ & $0.2993 \pm 0.0003$ & $5.4927 \pm 0.0108$ & $0.9934 \pm 0.0164$ & $3.1694 \pm 0.0117$ \\
    \cmidrule(l){2-10}
    \multirow{3}{*}{+ FlowCPO ($\beta=1$, Ours)} 
    & $1.0$ & $0.4670 \pm 0.0110$ & $0.2351 \pm 0.0027$ & $21.8088 \pm 0.0198$ & $0.2762 \pm 0.0011$ & $0.2704 \pm 0.0003$ & $5.3546 \pm 0.0107$ & $0.5230 \pm 0.0140$ & $2.7723 \pm 0.0152$ \\
    & $3.0$ & \underline{$0.6958 \pm 0.0050$} & $0.5561 \pm 0.0085$ & $22.4609 \pm 0.0188$ & $0.2966 \pm 0.0003$ & $0.2952 \pm 0.0004$ & $5.4140 \pm 0.0029$ & $1.0166 \pm 0.0075$ & \underline{$3.2263 \pm 0.0100$} \\
    & $4.5$ & $\mathbf{0.6995 \pm 0.0060}$ & \underline{$0.5912 \pm 0.0069$} & $22.3570 \pm 0.0111$ & $0.2942 \pm 0.0007$ & $0.2942 \pm 0.0002$ & $5.3810 \pm 0.0056$ & $1.0159 \pm 0.0089$ & $3.2173 \pm 0.0184$ \\
    \cmidrule(l){2-10}
    \multirow{3}{*}{+ FlowCPO ($\beta=0.5$, Ours)} 
    & $1.0$ & $0.5706 \pm 0.0124$ & $0.2360 \pm 0.0047$ & $22.0231 \pm 0.0146$ & $0.2813 \pm 0.0006$ & $0.2752 \pm 0.0005$ & $5.3451 \pm 0.0106$ & $0.6294 \pm 0.0064$ & $2.8523 \pm 0.0200$ \\
    & $3.0$ & $0.6859 \pm 0.0099$ & $0.4919 \pm 0.0071$ & $22.3229 \pm 0.0064$ & $0.2959 \pm 0.0010$ & $0.2882 \pm 0.0003$ & $5.3718 \pm 0.0074$ & $0.9967 \pm 0.0062$ & $3.1464 \pm 0.0089$ \\
    & $4.5$ & $0.6815 \pm 0.0040$ & $0.4912 \pm 0.0054$ & $22.1181 \pm 0.0128$ & $0.2939 \pm 0.0009$ & $0.2842 \pm 0.0005$ & $5.3073 \pm 0.0069$ & $0.9452 \pm 0.0108$ & $3.1068 \pm 0.0117$ \\
    \bottomrule
    \end{tabular}
    }
\end{table*}

\begin{figure}[h]
    \centering
    \includegraphics[width=0.96\textwidth]{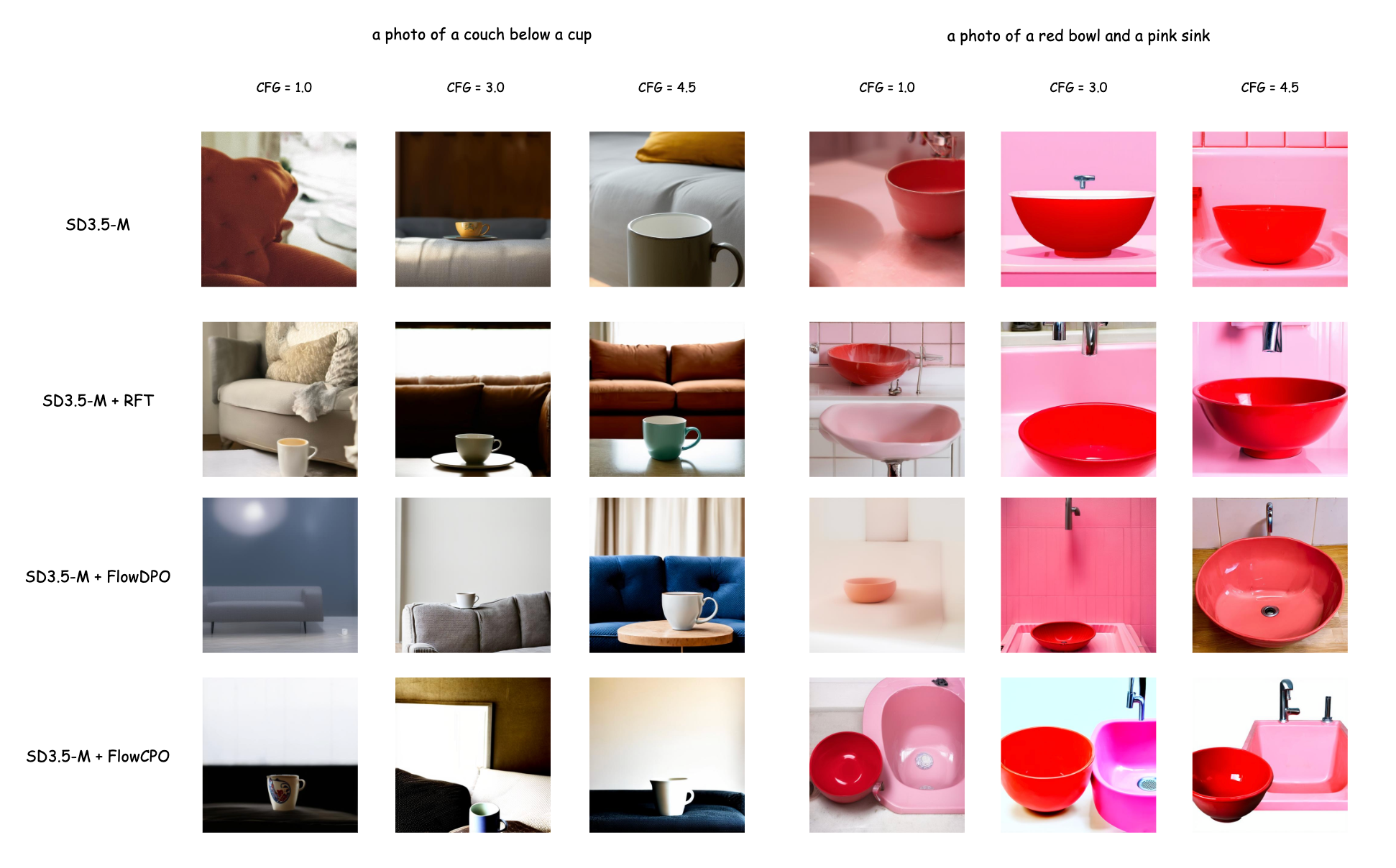}
    \caption{Qualitative comparison on the GenEval benchmark. All samples are generated by models trained with \emph{in-domain} preference data.}
    \label{fig:flowcpo_geneval}
\end{figure}

\begin{figure}[h]
    \centering
    \includegraphics[width=0.96\textwidth]{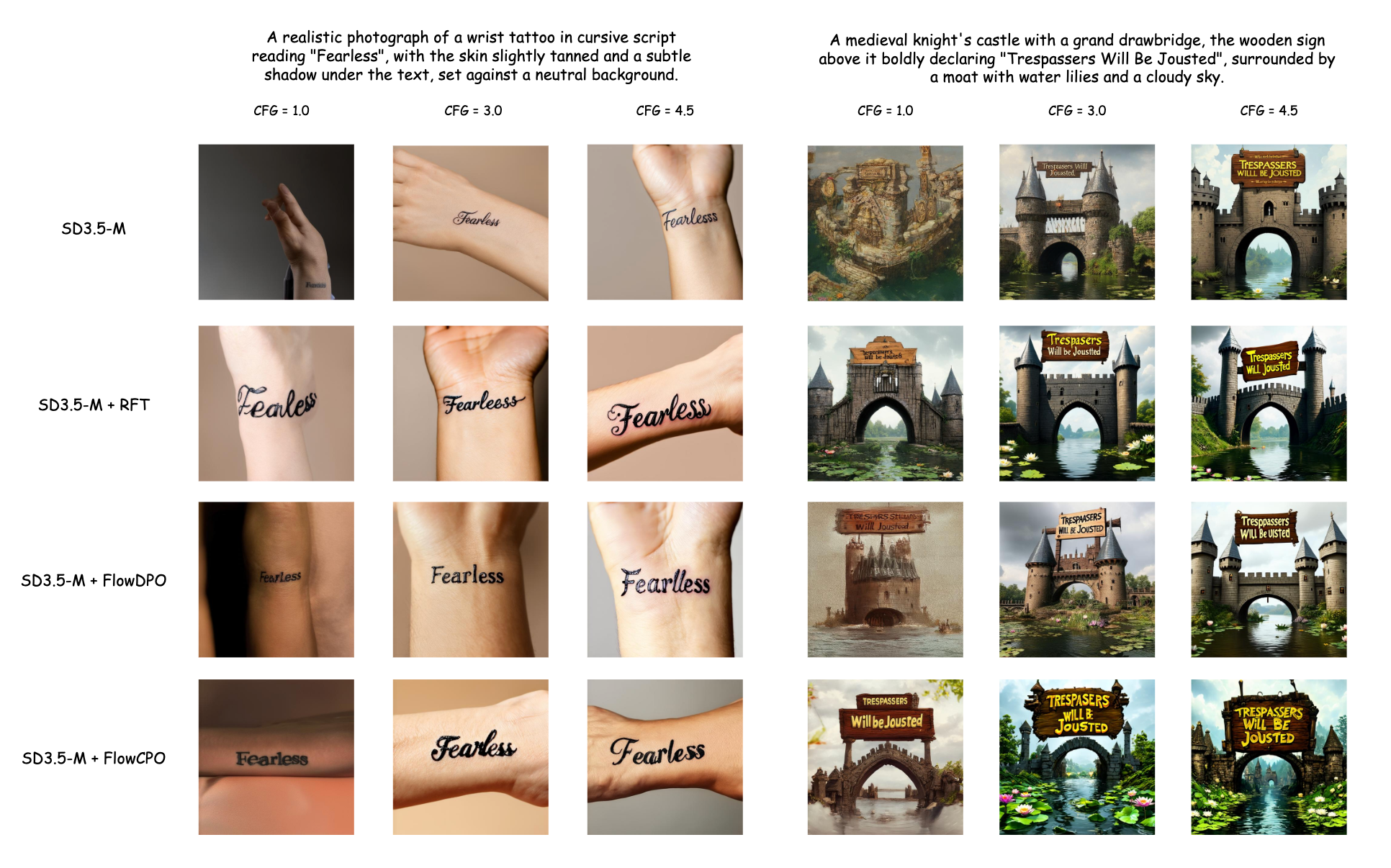}
    \caption{Qualitative comparison on the OCR benchmark. All samples are generated by models trained with \emph{in-domain} preference data.}
    \label{fig:flowcpo_ocr}
\end{figure}

\begin{figure}[h]
    \centering
    \includegraphics[width=0.84\textwidth]{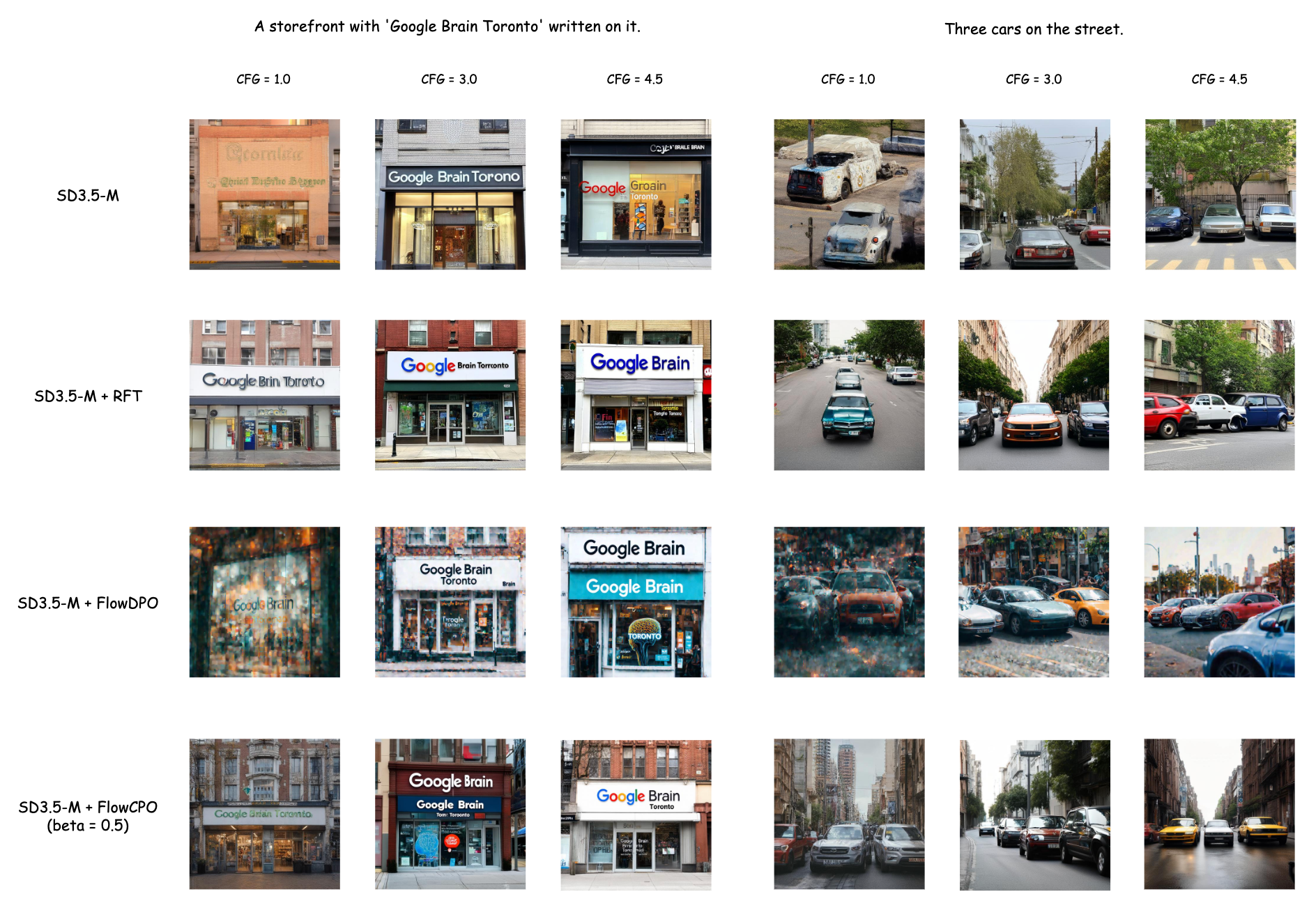}
    \caption{Qualitative comparison on the \texttt{DrawBench} benchmark. All samples are generated by models trained with \emph{in-domain} preference data.}
    \label{fig:flowcpo_image_quality}
\end{figure}

\begin{figure}[h]
    \centering
    \includegraphics[width=0.84\textwidth]{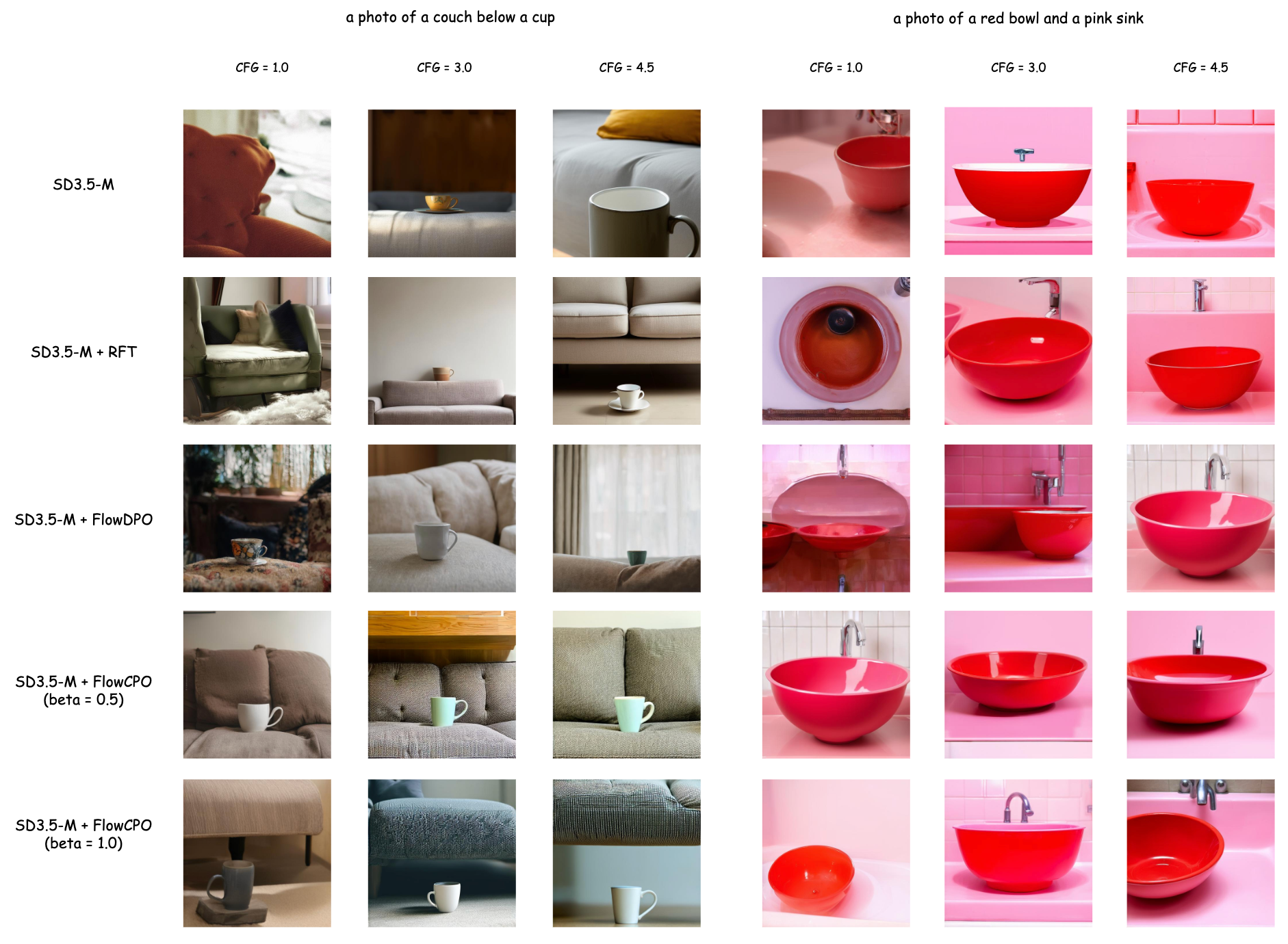}
    \caption{Qualitative comparison on the GenEval benchmark. All samples are generated by models trained with \emph{out-of-domain} preference data.}
    \label{fig:flowcpo_geneval_ood}
\end{figure}

\begin{figure}[h]
    \centering
    \includegraphics[width=0.82\textwidth]{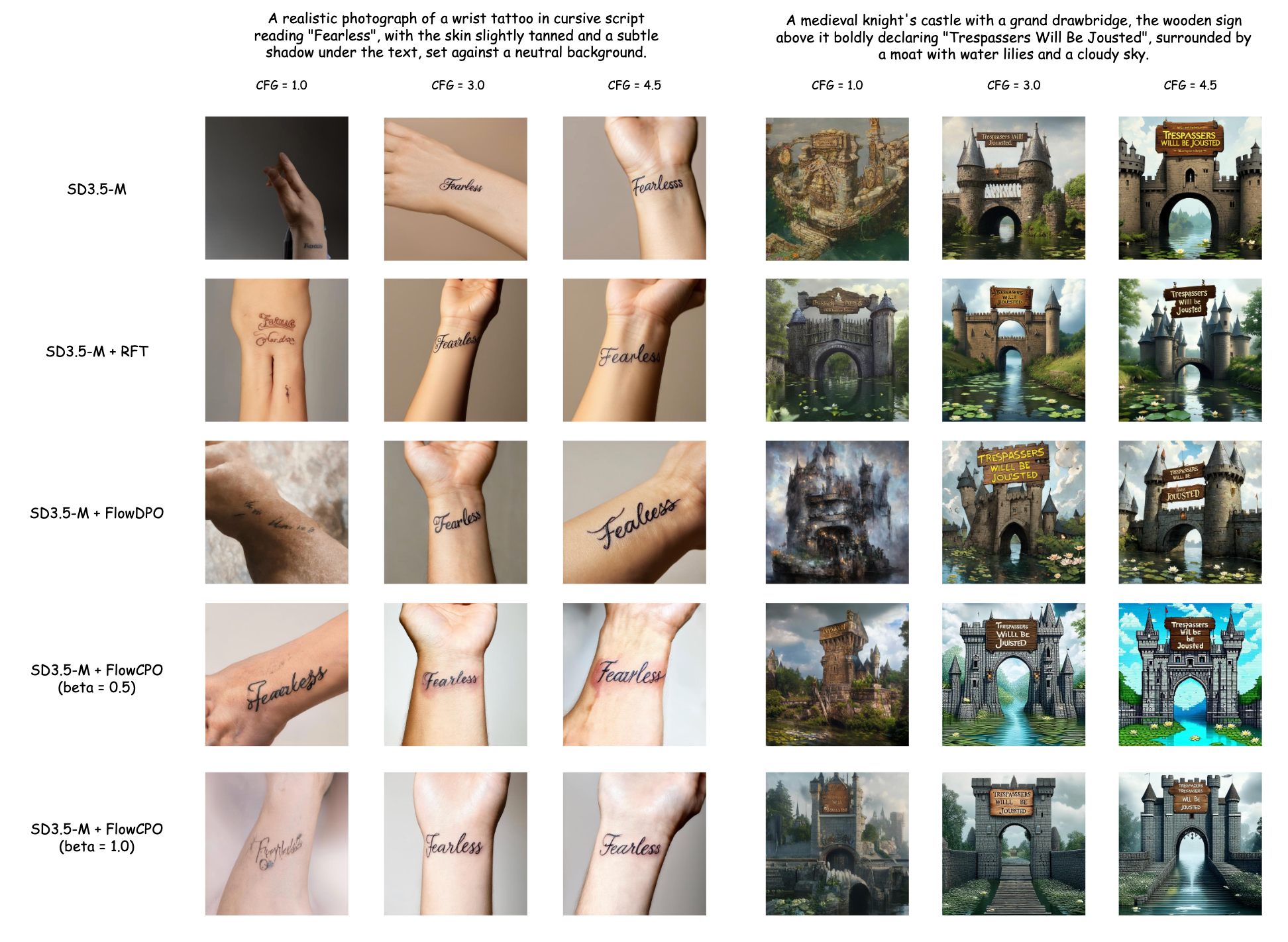}
    \caption{Qualitative comparison on the OCR benchmark. All samples are generated by models trained with \emph{out-of-domain} preference data.}
    \label{fig:flowcpo_ocr_ood}
\end{figure}

\begin{figure}[h]
    \centering
    \includegraphics[width=0.82\textwidth]{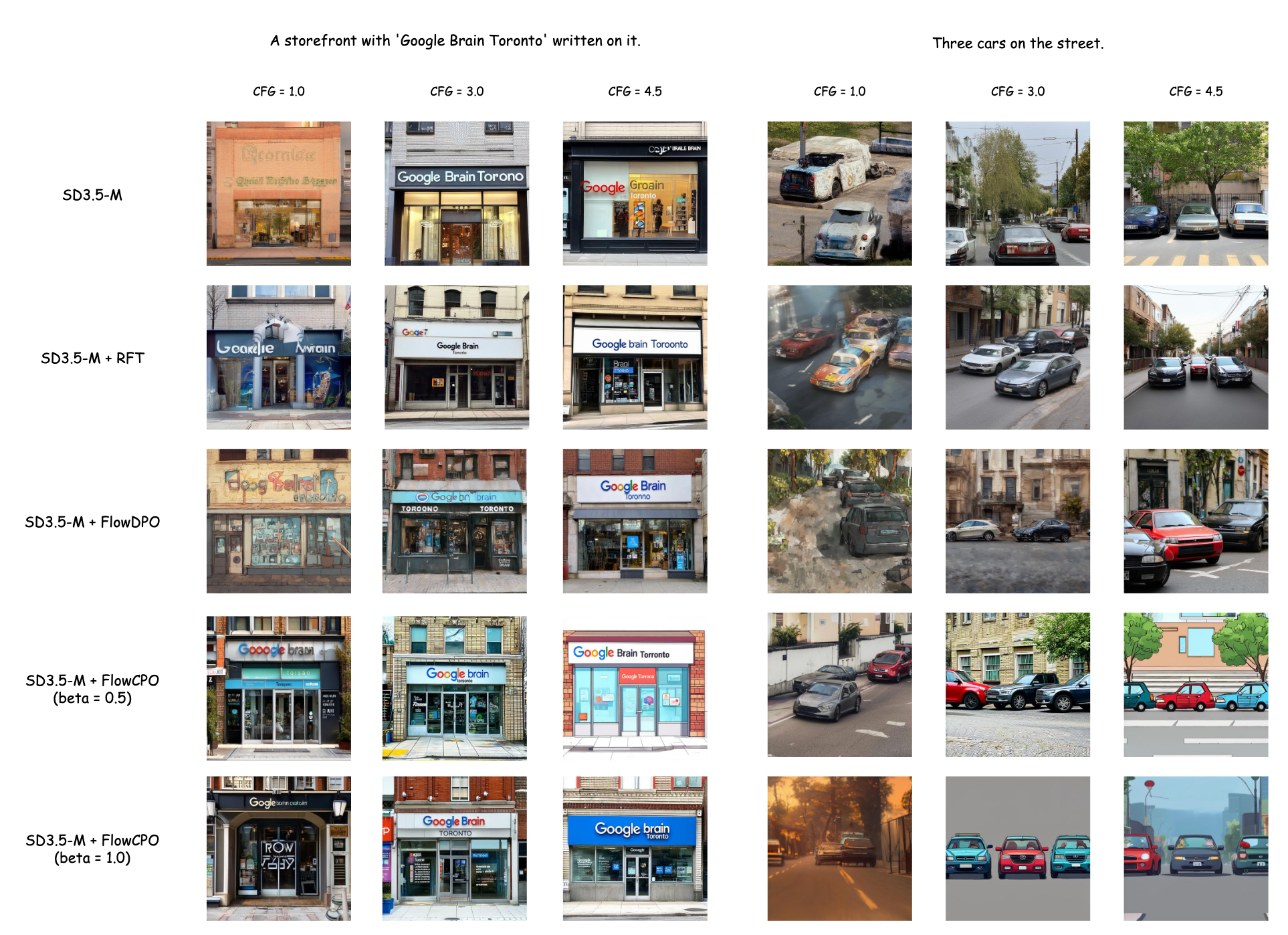}
    \caption{Qualitative comparison on the \texttt{DrawBench} benchmark. All samples are generated by models trained with \emph{out-of-domain} preference data.}
    \label{fig:flowcpo_image_quality_ood}
\end{figure}

\subsection{Additional Analyses Beyond Main Results}
\label{app:additional_analyses}

\subsubsection{Understanding the Role of Negative Regularization}
\label{app:negative_regularization}

\tabref{tab:regularization_mechanisms} shows a clear pattern within the \emph{GenEval-only preference training} setup: how negative samples enter the objective matters at least as much as whether they enter at all. Positive-only regularization provides a stable baseline and remains competitive across all three metric groups, especially under stronger CFG. Repulsive negative regularization is less reliable. Mild repulsion ($\lambda=-0.1$) stays close to positive-only training, but stronger repulsion ($\lambda=-1$) hurts most metrics, and overly aggressive repulsion ($\lambda=-10$) leads to numerical instability and collapsed training. This is consistent with the view that explicitly pushing the model away from dispreferred samples can over-amplify contrastive signals and destabilize optimization.

At moderate weights, \emph{attractive} negative regularization gives the best trade-off in this ablation between alignment strength and training stability. The configuration with $\beta=0.5$ and $\lambda=1$ achieves the highest GenEval score (0.84), whereas repulsion with $\lambda=-1$ degrades all reported metrics. The configuration with $\beta=1$ and $\lambda=0.1$ also attains the best OCR (0.59), ImgReward (1.14), and UniReward (3.23). Because all models in this table are trained with GenEval-only supervision, these gains on typography and general-preference metrics should be interpreted as cross-metric transfer rather than direct optimization. Both signs diverge at magnitude 10, so the evidence supports moderate attractive matching rather than a blanket stability claim.

\subsubsection{Hyperparameter Sensitivity and Ablation}
\label{app:ablation}

To study the hyperparameter sensitivity of the proposed objective function (\eqref{eq:final_loss}), we conduct ablations on the GenEval benchmark and track performance across training steps.

\begin{figure}[htbp]
    \centering
    \begin{subfigure}[b]{0.32\textwidth}
        \centering
        \includegraphics[width=\textwidth]{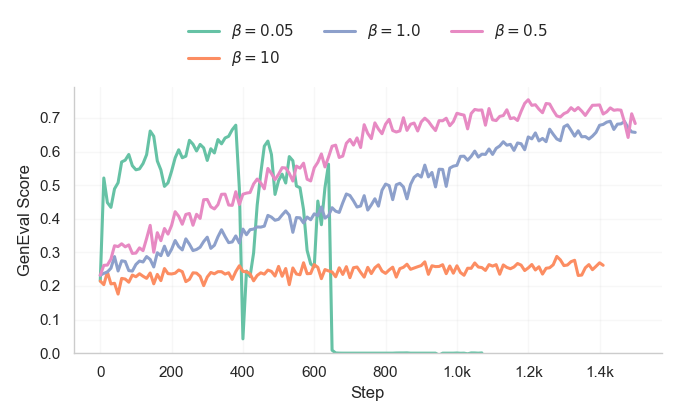}
        \caption{Effect of $\beta$}
        \label{fig:ablation_beta}
    \end{subfigure}
    \hfill
    \begin{subfigure}[b]{0.32\textwidth}
        \centering
        \includegraphics[width=\textwidth]{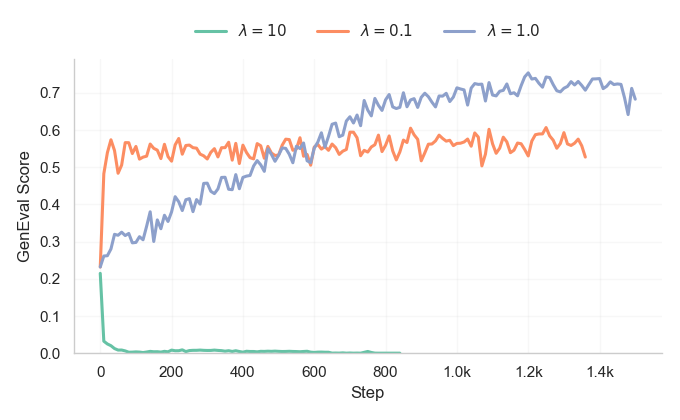}
        \caption{Effect of $\lambda$}
        \label{fig:ablation_lambda}
    \end{subfigure}
    \hfill
    \begin{subfigure}[b]{0.32\textwidth}
        \centering
        \includegraphics[width=\textwidth]{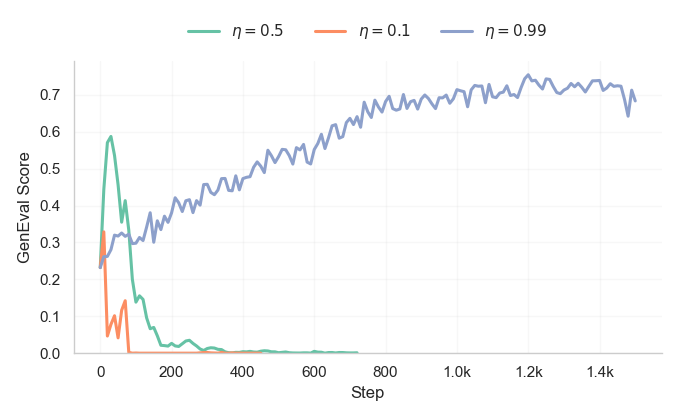}
        \caption{Effect of $\eta$}
        \label{fig:ablation_eta}
    \end{subfigure}
    \caption{\small
        \textbf{Hyperparameter sensitivity within the GenEval-only setup.} We monitor the GenEval score during training to analyze the impact of (a) the flow interpolation coefficient $\beta$, (b) the negative loss weight $\lambda$, and (c) the EMA parameter $\eta$. The figure is intended to show relative stability trends for the tested values.
    }
    \label{fig:ablation_studies}
\end{figure}

\paragraph{Flow Interpolation Coefficient ($\beta$).} 
$\beta$ controls how far the target flows move away from the reference prior. As shown in \figref{fig:ablation_beta}, moderate values ($\beta \in [0.5, 1.0]$) perform best in this sweep on GenEval. In particular, $\beta=0.5$ achieves the highest peak score.

\paragraph{Negative Regularization Weight ($\lambda$).} 
Balancing positive alignment and negative regularization is important. \figref{fig:ablation_lambda} shows that a balanced choice ($\lambda=1.0$) yields the most robust long-term performance among the tested values on GenEval.

\paragraph{Reference Prior Stability ($\eta$).} 
FlowCPO relies on a stable reference prior inside the flow-matching objective. \figref{fig:ablation_eta} shows that updating the reference model $v_{old}$ with a high-EMA decay rate ($\eta=0.99$) yields the most stable behavior in this sweep.

\subsection{Per-Prompt Diversity Evaluation}
\label{app:vendi_diversity}

\textbf{The primary empirical claim in RQ2 is that \method provides stable offline preference optimization with a forward-KL objective in the evaluated regime, not that it universally produces more diverse samples than FlowDPO.}
To evaluate sample diversity directly, we compute the Vendi Score~\citep{friedman2023vendi} separately over eight samples generated for each prompt and report the mean per-prompt score. We evaluate two classifier-free guidance (CFG) scales, 1.0 and 4.5. For each target column in \tabref{tab:vendi_diversity}, the quality score and Vendi Score are obtained from the corresponding specialist: GenEval-only fine-tuning for \texttt{GenEval}, OCR-only fine-tuning for \texttt{OCR}, and multi-reward fine-tuning for \texttt{PickScore, CLIPScore, HPSv2.1}. The main entry in each cell is the quality score, while the parenthesized entry is the Vendi Score. Higher is better for both.

\begin{table}[htbp]
    \centering
    \caption{Target quality and per-prompt diversity on \texttt{SD3.5-M}. Vendi Score is computed from eight samples per prompt and reported in parentheses. Each metric column evaluates the specialist fine-tuned for that target. Bold independently marks the highest quality score and the highest Vendi Score in each column.}
    \label{tab:vendi_diversity}
    \resizebox{\textwidth}{!}{
    \begin{tabular}{lcccc}
        \toprule
        \textbf{Method} & \textbf{CFG} & \textbf{GenEval $\uparrow$ (Vendi $\uparrow$)} & \textbf{OCR $\uparrow$ (Vendi $\uparrow$)} & \textbf{PickScore $\uparrow$ (Vendi $\uparrow$)} \\
        \midrule
        \multirow{2}{*}{SD3.5-M}
        & 1.0 & 0.24 (\textbf{2.1573}) & 0.12 (2.3073) & 20.51 (\textbf{2.2824}) \\
        & 4.5 & 0.63 (1.6540) & 0.59 (1.7778) & 22.34 (1.7742) \\
        \midrule
        \multirow{2}{*}{RFT}
        & 1.0 & 0.59 (1.8210) & 0.35 (2.2417) & 21.91 (1.9740) \\
        & 4.5 & 0.75 (1.4547) & 0.72 (1.4984) & 22.57 (1.5544) \\
        \midrule
        \multirow{2}{*}{FlowDPO}
        & 1.0 & 0.59 (1.8926) & 0.51 (\textbf{2.5117}) & 20.72 (1.8671) \\
        & 4.5 & 0.81 (1.5594) & 0.75 (1.6310) & \textbf{22.89} (1.5966) \\
        \midrule
        \multirow{2}{*}{\method ($\beta=0.5$)}
        & 1.0 & 0.76 (1.7385) & 0.83 (2.0578) & 22.48 (1.8936) \\
        & 4.5 & \textbf{0.82} (1.4773) & \textbf{0.86} (1.4955) & 22.69 (1.5251) \\
        \bottomrule
    \end{tabular}
    }
\end{table}

\paragraph{Results.}
\textbf{The results show a consistent quality--diversity trade-off: stronger CFG improves target quality while reducing per-prompt diversity.}
Across all comparisons in \tabref{tab:vendi_diversity}, increasing CFG from 1.0 to 4.5 raises the target quality score and lowers the corresponding Vendi Score. At CFG 1.0, \method achieves the highest quality among the compared methods for all three targets (0.76 GenEval, 0.83 OCR, and 22.48 PickScore), while retaining higher diversity than its own CFG-4.5 outputs. In particular, lowering CFG from 4.5 to 1.0 increases the Vendi Score of \method by 0.2612, 0.5623, and 0.3685 on the three targets, respectively, with corresponding quality decreases of 0.06, 0.03, and 0.21. However, \method does not maximize raw Vendi Score: the pretrained model is most diverse in the GenEval and PickScore columns.

\section{Broader Impacts and Release Considerations}
\label{app:broader_release}

\paragraph{Potential positive impacts.}
By turning preference alignment into a strictly offline optimization problem, our method can reduce the need for repeated online rollouts during alignment and make experimentation on continuous generative models more compute-efficient. Better alignment on structure-sensitive tasks such as compositional generation and text rendering can also improve the controllability and practical usefulness of text-to-image systems in benign applications such as design ideation, educational content creation, and accessibility-related graphics.

\paragraph{Potential negative impacts.}
The same improvement in preference alignment can increase the capability of image generators to produce more convincing synthetic content, including misleading text-in-image content or other deceptive media. In addition, the method can inherit biases from offline preference data and from the automatic reward signals used to construct winner/loser pairs, and the out-of-domain results in \secref{sec:experiments} together with the limitations discussed in \secref{sec:conclusion} indicate that behavior may degrade when the offline data distribution differs from the reference model.

\paragraph{Release considerations.}
We do not release new model checkpoints, a new dataset, or a code package with this preprint; the paper describes the training objective and experimental protocol only. The experiments rely on existing pretrained models, public benchmarks, and a public OOD preference dataset under their original access conditions, and any future release of code or checkpoints should preserve the upstream licenses, terms of use, and usage restrictions of those assets.

\end{document}